\newtheorem{assumption}{\textbf{Assumption}}
\newtheorem{Lemma}{\textbf{Lemma}}
\newtheorem{theorem}{\textbf{Theorem}}
\newtheorem{remark}{\textbf{Remark}}
\newacronym{T1}{T1}{\theta_1}
\newcommand{\mcs}{\mathcal{S}}
\newcommand{\mca}{\mathcal{A}}
\newcommand{\nn}{\nonumber}
\newcommand{\mE}{\mathbb{E}}
\newcommand{\om}{\omega}
\newcommand{\tb}{\tau_{\beta}}
\newcommand{\tmix}{t_{\text{mix}}}
\title{Online Robust Reinforcement Learning with Model Uncertainty}
\author{%
  Yue Wang  \\
%   Department of Electrical Engineering\\
  University at Buffalo\\
  Buffalo, NY 14228 \\
  \texttt{ywang294@buffalo.edu} \\
  % examples of more authors
  \And
  Shaofeng Zou \\
%   Department of Electrical Engineering\\
  University at Buffalo\\
  Buffalo, NY 14228 \\
  \texttt{szou3@buffalo.edu} \\
  % Coauthor \\
  % Affiliation \\
  % Address \\
  % \texttt{email} \\
  % \AND
  % Coauthor \\
  % Affiliation \\
  % Address \\
  % \texttt{email} \\
  % \And
  % Coauthor \\
  % Affiliation \\
  % Address \\
  % \texttt{email} \\
  % \And
  % Coauthor \\
  % Affiliation \\
  % Address \\
  % \texttt{email} \\
}
\begin{document}
\maketitle
\begin{abstract}
     Robust reinforcement learning (RL) is to find a policy that optimizes the worst-case performance over an uncertainty set of MDPs. In this paper, we focus on \textit{model-free} robust RL, where the uncertainty set is defined to be centering at a misspecified MDP that generates a single sample trajectory sequentially, and is assumed to be \textit{unknown}. We develop a sample-based approach to estimate the unknown uncertainty set, and design robust Q-learning algorithm (tabular case) and robust TDC algorithm (function approximation setting), which can be implemented in an online and incremental fashion.  For the robust Q-learning algorithm, we prove that it converges to the optimal robust Q function, and for the robust TDC algorithm, we prove that it converges asymptotically to some stationary points. Unlike the results in \citep{roy2017reinforcement}, our algorithms do not need any additional conditions on the discount factor to guarantee the convergence. We further characterize the finite-time error bounds of the two algorithms, and show that both the robust Q-learning and robust TDC algorithms converge as fast as their  vanilla counterparts (within a constant factor). Our numerical experiments further demonstrate the robustness of our algorithms. Our approach can be readily extended to robustify many other algorithms, e.g., TD, SARSA, and other GTD algorithms. 
\end{abstract}
\section{Introduction}

Existing studies on {Markov decision process} (MDP) and {reinforcement learning} (RL) \citep{sutton2018reinforcement} mostly rely on the crucial assumption that the environment on which a learned policy will be deployed is the same one that was used to generate the policy, which is often violated in practice -- e.g., the simulator may be different from the true environment, and the MDP may evolve over time. Due to such model deviation, the actual performance of the learned policy can significantly degrade. 
To address this problem, the framework of robust MDP was formulated in \citep{bagnell2001solving,nilim2004robustness,iyengar2005robust}, where the transition kernel of the MDP is not fixed and lies in an uncertainty set, and the goal is to learn a policy that performs well under the worst-case MDP in the uncertainty set. In \citep{bagnell2001solving,nilim2004robustness,iyengar2005robust}, it was assumed that the uncertainty set is known beforehand, i.e., model-based approach, and dynamic programming can be used to find the optimal robust policy. 

% The uncertainty set in practice is usually a set of MDPs centering around a MDP model that is estimated from samples or constructed using expert knowledge. However, those may not be available in practice. 

The model-based approach, however, requires a model of the uncertainty set known beforehand, and needs a large memory to store the model when the state and action spaces are large, which make it less applicable for many practical scenarios. This motivates the study in this paper, \textit{model-free} robust RL with model uncertainty,  which is to learn a robust policy using a single sample trajectory from a misspecified MDP, e.g., a simulator and a similar environment in which samples are easier to collect than in the target environment where the policy is going to be deployed. 
The major challenge lies in that the transition kernel of the misspecified MDP is not given beforehand, and thus, the uncertainty set and the optimal robust policy need to be learned simultaneously using sequentially observed data from the misspecified MDP. Moreover, robust RL learns the value function of the worst-case MDP in the uncertainty set which is different from the misspecified MDP that generates samples. This is similar to the off-policy learning, which we refer to as the "off-transition-kernel" setting. Therefore, the learning may be unstable and could diverge especially when function approximation is used \citep{baird1995residual}.

In this paper, we develop a model-free approach for robust RL with model uncertainty. Our major contributions in this paper are summarized as follows.
\begin{itemize}[leftmargin=*]
    \item Motivated by empirical studies of adversarial training in RL \citep{huang2017adversarial,kos2017delving,lin2017tactics,pattanaik2018robust,mandlekar2017adversarially} and the $R$-contamination model in robust detection (called $\epsilon$-contamination model in \citep{hub65}), we design the uncertainty set using the $R$-contamination model (see \eqref{eq:uncertainset} for the details). We then develop an approach to estimate the unknown uncertainty set using only the current sample, which does not incur any additional memory cost. Unlike the approach in \citep{roy2017reinforcement}, where  the uncertainty set is relaxed to one not depending on the misspecified MDP that generates samples so that an online algorithm can be constructed, our approach does not need to relax the uncertainty set. 
    \item We develop a robust Q-learning algorithm for the tabular case,  which can be implemented in an online and incremental fashion, and has the same memory cost as the vanilla Q-learning algorithm. We show that our robust Q-learning algorithm converges asymptotically, and further characterize its finite-time error bound. Unlike the results in \citep{roy2017reinforcement} where a stringent condition on the discount factor (which is due to the relaxation of the uncertainty set, and prevents the use of a discount factor close to 1 in practice) is needed to guarantee the convergence, our algorithm converges without the need of such condition. Furthermore, our robust Q-learning algorithm converges as fast as the vanilla Q-learning algorithm \citep{li2020sample} (within a constant factor), while being robust to model uncertainty.
    \item We generalize our approach to the case with function approximation (for large state/action space). We investigate the robust policy evaluation problem, i.e., evaluate a given policy under the worst-case MDP in the uncertainty set. As mentioned before, the robust RL problem is essentially "off-transition-kernel", and therefore  non-robust methods with function approximation may diverge  \citep{baird1995residual} (also see our experiments). We develop a novel extension of the gradient TD (GTD) method \citep{maei2010toward,maei2011gradient,sutton2008convergent} to robust RL. Our approach introduces a novel smoothed robust Bellman operator to construct the smoothed mean-squared projected robust Bellman error (MSPRBE). Using our uncertainty set design and online sample-based estimation, we develop a two time-scale robust TDC algorithm. We further characterize its convergence and finite-time error bound.
    \item We conduct numerical experiments to validate the robustness of our approach. In our experiments, our robust Q-learning algorithm achieves a much higher reward than the vanilla Q-learning algorithm when being trained on a misspecified MDP; and our robust TDC algorithm converges much faster than the vanilla TDC algorithm, and the vanilla TDC algorithm may even diverge. 
\end{itemize}

\subsection{Related Work}

\textbf{Model-Based Robust MDP.}
The framework of robust MDP was investigated in \citep{iyengar2005robust,nilim2004robustness,bagnell2001solving,satia1973markovian,wiesemann2013robust}, where the transition kernel is assumed to be in some uncertainty set, and the problem can be solved by dynamic programming. This approach was further extended to the case with function approximation in \citep{tamar2014scaling}. However, these studies are model-based, which assume beforehand knowledge of the uncertainty set. In this paper, we investigate the model-free setting, where the uncertainty set is a set of MDPs centered around some unknown Markov transition kernel from which a single sample trajectory can be sequentially observed. 

\textbf{Adversarial Robust RL.} It was shown in \citep{iyengar2005robust} that the robust MDP problem is  equivalent to a zero-sum game between the agent and the nature. 
Motivated by this fact, the adversarial training approach, where an adversary perturbs the state transition, was studied in \citep{vinitsky2020robust,pinto2017robust,abdullah2019wasserstein,hou2020robust,rajeswaran2017epopt,atkeson2003nonparametric,morimoto2005robust}. This method relies on a simulator, where the state transition can be modified in an arbitrary way. Another approach is to modify the current state through adversarial samples, which is more heuristic, e.g., \citep{huang2017adversarial,kos2017delving,lin2017tactics,pattanaik2018robust,mandlekar2017adversarially}. Despite the empirical success of these approaches, theoretical performance guarantees, e.g., convergence to the optimal robust policy and convergence rate, are yet to be established. The main difference lies in that during the training, our approach does not need to manipulate the state transition of the MDP. More importantly, we develop the asymptotic convergence to the optimal robust policy and further characterize the finite-time error bound. 
In \citep{lim2013reinforcement},  the scenario where  some unknown parts of the state space can have arbitrary transitions while other parts are purely stochastic was studied. Adaptive algorithm to adversarial behavior was designed, and its regret bound is shown to be similar to the purely stochastic case. In \citep{zhang2020stability}, the robust adversarial RL problem for the special  linear quadratic case was investigated. 

\textbf{Model-free Robust RL.} In \citep{roy2017reinforcement,badrinath2021robust} model-free RL with model uncertainty was studied, where in order to construct an algorithm that can be implemented in an online and incremental fashion, the uncertainty set was firstly relaxed by dropping the dependency on the misspecified MDP that generates the samples (centroid of the uncertainty set). Such a relaxation is pessimistic since the relaxed uncertainty set is not centered at the misspecified MDP anymore (which is usually similar to the target MDP), making the robustness to the relaxed uncertainty set not well-justified. Such a relaxation will further incur a stringent condition on the discounted factor to guarantee the convergence, which prevents the use of a discount factor close to 1 in practice. Moreover, only asymptotic convergence was established in \citep{roy2017reinforcement}. 
In this paper, we do not relax the uncertainty set, and instead propose an online approach to estimate it. Our algorithms converge without the need of the condition on the discount factor. We also provides finite-time error bounds for our algorithms. The multi-agent RL robust to reward uncertainty was investigated in \citep{zhang2020robust}, where the reward uncertainty set is known, but the transition kernel is fixed.

\textbf{Finite-time Error Bound for RL Algorithms.} 
For the tubular case, Q-learning has been studied intensively, e.g., in \citep{even2003learning,beck2012error,qu2020finite,li2020sample,wainwright2019variance,li2021q}. TD with  function approximation were studied in \citep{Dalal2018a,bhandari2018finite,srikant2019,cai2019neural,sun2020finite}.  Q-learning and SARSA with linear function approximation
were investigated in \citep{zou2019finite,chen2019performance}. The finite-time error bounds for
the gradient TD algorithms \citep{maei2010toward,sutton2009fast,maei2010toward}
were further developed recently in \citep{dalal2018finite,liu2015finite,gupta2019finite,xu2019two,dalal2020tale,kaledin2020finite,ma2020variance,wang2020finite,ma2021greedygq,doan2021finite}. 
There are also finite-time error bounds on the policy gradient methods and actor critic methods, e.g., \citep{Wang2020Neural,yang2019provably,kumar2019sample,qiu2019finite,wu2020finite,cen2020fast,bhandari2019global,agarwal2021theory,mei2020global}.
We note that these studies are for the non-robust RL algorithms, and in this paper, we design robust RL algorithms, and characterize their finite-time error bounds.  

\section{Preliminaries}
\textbf{Markov Decision Process.}
An MDP  can be characterized by a tuple  $(\mathcal{S},\mathcal{A},  \mathsf P, c, \gamma)$, where $\mcs$ and $\mca$ are the state and action spaces, $\mathsf P=\left\{p^a_s \in \Delta_{|\mcs|}, a\in\mca, s\in\mcs\right\}$ is the transition kernel\footnote{$\Delta_n$ denotes the $(n-1)$-dimensional probability simplex: $\{(p_1,...,p_n)| 0\leq p_i\leq 1, \sum^n_{i=1}p_i=1 \}$.}, $c$ is the cost function, and $\gamma\in[0,1)$ is the discount factor. Specifically, $p^a_s$ denotes the distribution of the next state if taking action $a$ at state $s$. Let $p^a_s=\{p^a_{s,s'}\}_{s'\in\mathcal S}$, where $p^a_{s,s'}$ denotes the probability that the environment transits to state $s'$ if taking action $a$ at state $s$. The cost of taking action $a$ at state $s$ is given by $c(s,a)$. A stationary policy $\pi$ is a mapping from $\mcs$ to a distribution over $\mca$. 
At each time $t$, an agent takes an action $A_t\in\mca$ at state $S_t\in\mcs$. The environment then transits to the next state $S_{t+1}$ with probability $ p^{A_t}_{S_t,S_{t+1}}$, and the agent receives cost given by $c(S_t,A_t)$. 
The value function of a policy $\pi$ starting from any initial state $s\in\mcs$ is defined as the expected accumulated discounted cost by following $\pi$: 
$\mathbb{E}\left[\sum_{t=0}^{\infty}\gamma^t   c(S_t,A_t )|S_0=s,\pi\right]$, and the goal is to find the policy $\pi$ that minimizes the above value function for any initial state $s\in\mcs$.

% the states transit  according to the transition kernel $\mathsf P=\left\{p^a_s=(p^a_{s,s_1},...,p^a_{s,s_{|\mcs|}}) \in \mathbb{R}^{\mcs}\right\}_{a\in\mca, s\in\mcs}$, which is a set of probability vectors. Each entry $p^a_{s,s'}$ of $p^a_s$ is the probability of the agent being transited from $s$ to $s'$ after taking action $a$, i.e., $p^a_{s,s'}=\mathbb{P}(s'|s,a)$. $c$ is the cost function, and $\gamma\in [0,1]$ is the discount factor. Specifically, at each time $t$, an agent takes an action $a_t\in\mca$ at state $s_t\in\mcs$. The environment then transits to the next state $s_{t+1}$ with probability $ p^a_{s_t,s_{t+1}}$, and the agent receives cost given by $c(s_t,a_t)$. 

%A stationary policy $\pi$ of the agent is a function that map each state $s\in\mcs$ to an action $\pi(s) \in\mca$. For a given policy $\pi$,  we define its value function for any initial state $s\in \mcs$ as
%$    V^\pi\left(s\right)=\mE\left[\sum_{t=0}^{\infty}\gamma^t   c(S_t,A_t )|S_0=s\right].$

%The goal of optimal control in RL is to find the optimal policy $\pi^*$ that minimize the value function for any initial state, i.e., to solve the following problem:
%$\pi^*=\arg\min_{\pi} V^{\pi}(s), \forall s\in\mcs.$

% \zou{Here $V^\pi$ is used to denote the value function, and is later used to denote the robust value function.}

\textbf{Robust Markov Decision Process.}
In the robust case, the transition kernel is not fixed and lies in some uncertainty set.
% , i.e., 
% $p_s^a\in\mathcal{P}^a_s$, $\forall (s,a) \in \mcs\times\mca$. 
Denote the transition kernel at time $t$ by $\mathsf P_t$, and let $\kappa=(\mathsf P_0,\mathsf P_1,...)$, where $\mathsf P_t\in\mathbf P, \forall t\geq 0$, and $\mathbf P$ is the uncertainty set of the transition kernel. 
% We note that the uncertainty set is usually constructed as a set centered at some transition kernel (which is usually estimated from data and/or domain knowledge, or the transition kernel of some simulator) w.r.t. some distance metric.
The sequence $\kappa$ can be viewed as the policy of the nature, and is adversarially chosen by the nature \citep{bagnell2001solving,nilim2004robustness,iyengar2005robust}. Define the robust value function of a policy $\pi$ as the worst-case expected accumulated discounted cost following a fixed policy $\pi$ over all transition kernels in the uncertainty set:
\begin{align}\label{eq:value}
    V^{\pi}(s)=\max_{\kappa} \mathbb{E}_{ \kappa}\left[\sum_{t=0}^{\infty}\gamma^t   c(S_t,A_t )|S_0=s,\pi\right],
\end{align}
where $\mE_{\kappa}$ denotes the expectation when the state transits according to $\kappa$. 
% 
% 
% 
% and may be chosen adversarially by nature. At each time step $t$, the agent starting from state $s_t$, take an action $a_t$.  The nature is free to choose any transition kernel $p^{a_t}_{s_t}\in\mathcal{P}^{a_t}_{s_t}$, then the agent is transited to the next state $s_{t+1}$ according to it . We call the sequence of the kernels chosen by the nature a  policy of nature: $\kappa=(p^{a_1}_{s_1},p^{a_2}_{s_2},...)$.
% 
% Under the setting, we can define the robust value function $V^{\pi}(s)$ as the worse-case expected accumulated reward following the policy $\pi$:
% \begin{align}
%     V^{\pi}(s)=\max_{\kappa} \mathbb{E}_{ \kappa}\left[\sum_{t=0}^{\infty}\gamma^t   c(S_t,A_t )|S_0=s\right],
% \end{align}
% where $\mE_{\kappa}$ means the expectation computed when the state-transition is following the probability in $\kappa$.
% 
Similarly, define the robust action-value function for a policy $\pi$:
$
    Q^{\pi}(s,a)=\max_{\kappa }\mathbb{E}_{ \kappa}\left[\sum_{t=0}^{\infty}\gamma^t   c(S_t,A_t )|S_0=s,A_0=a,\pi\right].
$
The goal of robust RL is to find the optimal robust policy $\pi^*$  that minimizes the worst-case accumulated discounted cost:
\begin{flalign}
\pi^*=\arg\min_{\pi} V^{\pi}(s), \forall s\in\mcs.
\end{flalign}
We also denote $V^{\pi^*}$ and $Q^{\pi^*}$ by $V^*$ and $Q^*$, respectively,
%
% , whose value function is denoted by $V^*$ and called robust optimal value function. Similarly, the robust optimal action-value function  is denoted by $Q^*$. 
and $V^*(s)=\min_{a\in \mca} Q^*(s,a)$.

Note that a transition kernel is a collection of conditional distributions. Therefore, the uncertainty set $\mathbf P$ of the transition kernel can be equivalently written as a collection of $ \mathcal P_s^a$ for all ${s\in\mcs, a\in\mca}$, where $\mathcal P_s^a$ is a set of conditional distributions $p_s^a$ over the state space $\mcs$. Denote by $\sigma_{\mathcal{P}}(v)\triangleq \max_{p\in\mathcal{P}}(p^\top v)$ the support function of vector $v$ over a set of probability distributions $\mathcal{P}$.
For robust MDP, the following robust analogue of the Bellman recursion was provided in \citep{nilim2004robustness,iyengar2005robust}.
\begin{theorem} \citep{nilim2004robustness}\label{thm:dualtiy}
The following perfect duality condition holds for all $s\in\mcs$:
\begin{align}
    \min_{\pi}\max_{\kappa}\mE_{\kappa}\left[ \sum^{\infty}_{t=0} \gamma^t c(S_t,A_t)\big|\pi, S_0=s\right]=\max_{\kappa}\min_{\pi}\mE_{\kappa}\left[ \sum^{\infty}_{t=0} \gamma^t c(S_t,A_t)\big|\pi, S_0=s\right].
\end{align}
The optimal robust  value function $V^*$ satisfies
$
    V^*(s)=\min_{a\in\mca} (c(s,a)+\gamma \sigma_{\mathcal{P}^a_s}(V^*)),
$
and the optimal robust action-value function $Q^*$ satisfies
$
    Q^*(s,a)=c(s,a)+\gamma \sigma_{\mathcal{P}^a_s}(V^*).
$
\end{theorem}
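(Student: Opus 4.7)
The plan is to prove the result via the standard contraction argument for the robust Bellman operator, exploiting the rectangular structure of the uncertainty set $\mathbf P$, which is given as a product of per-state-action sets $\mathcal P_s^a$. I would define the robust Bellman operator $T$ acting on bounded functions $V:\mcs\to\mR$ by
\begin{align*}
(TV)(s)=\min_{a\in\mca}\bigl(c(s,a)+\gamma\,\sigma_{\mathcal P_s^a}(V)\bigr),
\end{align*}
and, for any stationary policy $\pi$, a policy-specific operator
\begin{align*}
(T^\pi V)(s)=\sum_{a\in\mca}\pi(a|s)\bigl(c(s,a)+\gamma\,\sigma_{\mathcal P_s^a}(V)\bigr).
\end{align*}
Since $\sigma_{\mathcal P_s^a}$ is the support function of a fixed set of probability distributions, it is $1$-Lipschitz in sup-norm, and a short computation shows that both $T$ and $T^\pi$ are $\gamma$-contractions on $(\mR^{|\mcs|},\|\cdot\|_\infty)$. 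By Banach's fixed point theorem they admit unique fixed points $\tilde V$ and $\tilde V^\pi$, respectively.

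The second step is to show that the fixed point $\tilde V^\pi$ coincides with the robust value function $V^\pi$ defined in (2). This is the main structural step: the adversary optimizes over full kernel sequences $\kappa=(\mathsf P_0,\mathsf P_1,\ldots)$, and one needs to argue that the worst-case sequence reduces to a stationary pointwise choice. I would carry this out by a finite-horizon truncation: for the $n$-step discounted robust value $V^\pi_n$, backward induction combined with rectangularity (which lets nature choose $p\in\mathcal P_s^a$ independently at every visited $(s,a)$) gives $V^\pi_n=T^\pi V^\pi_{n-1}$, and then passing $n\to\infty$ via the contraction property and the geometrically decaying tail yields $V^\pi=\tilde V^\pi$.

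The third step is the optimality equation. Let $\tilde\pi$ be any policy greedy with respect to $\tilde V$, i.e.\ $\tilde\pi(s)\in\arg\min_a(c(s,a)+\gamma\sigma_{\mathcal P_s^a}(\tilde V))$. Then $T\tilde V=T^{\tilde\pi}\tilde V$, so $\tilde V$ is also the fixed point of $T^{\tilde\pi}$, giving $V^{\tilde\pi}=\tilde V$. Conversely, pointwise $TV\le T^\pi V$ for any $\pi$, and monotonicity of $T^\pi$ plus iteration give $\tilde V=\lim_n T^n\mathbf 0\le\lim_n (T^\pi)^n\mathbf 0=V^\pi$. Thus $\tilde V=\min_\pi V^\pi=V^*$, which is precisely the claimed recursion for $V^*$, and the expression for $Q^*$ follows from its definition. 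For the perfect duality, one verifies that $(\tilde\pi,\kappa^*)$, where $\kappa^*$ is the stationary Markov nature response picking at each $(s,a)$ the distribution attaining $\sigma_{\mathcal P_s^a}(V^*)$, forms a saddle point: against $\kappa^*$ the best policy response yields $V^*$, and against $\tilde\pi$ the best nature response also yields $V^*$, so both $\min_\pi\max_\kappa$ and $\max_\kappa\min_\pi$ collapse to $V^*$.

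The hard part will be the second step, namely reducing the worst-case over arbitrary kernel sequences $\kappa$ to a stationary pointwise worst case in $\mathcal P_s^a$. A priori, nature's choice at one time step could be coupled across states or across time in a way that beats the pointwise bound; this only fails because of the product structure of $\mathbf P$. I would therefore need to handle the finite-horizon induction carefully to ensure that the sup over $\kappa$ in $V^\pi_n$ can be exchanged with expectation over the first transition, and separately argue that the infinite-horizon quantity $V^\pi$ is indeed the monotone limit of the truncated $V^\pi_n$, which relies on uniform boundedness of the cost and the $\gamma<1$ discount to control the tail.
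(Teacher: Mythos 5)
Your proposal is correct, but note that the paper does not prove this statement at all---it is imported verbatim from \citep{nilim2004robustness} (see also \citep{iyengar2005robust}), and the only related argument in the paper is the contraction property of the robust Bellman operator for the $R$-contamination set in Appendix A.1. Your contraction-plus-rectangularity argument, with the finite-horizon backward induction to reduce the time-varying adversary $\kappa$ to a stationary per-$(s,a)$ worst case and the saddle-point verification of perfect duality, is essentially the standard proof given in those cited works, so there is no substantive divergence to report.
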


Define the robust Bellman operator $\mathbf T$ by $\mathbf T Q(s,a)=c(s,a)+\gamma \sigma_{\mathcal{P}^a_s}(\min_{a\in\mca}Q(s,a))$. It was shown in \citep{nilim2004robustness,iyengar2005robust} that $\mathbf T$ is a contraction and its fixed point is the optimal robust $Q^*$. 
When the uncertainty set is known, so that $\sigma_{\mathcal{P}^a_s}$ can be  computed exactly, $V^*$ and $Q^*$ can be solved by dynamic programming \citep{iyengar2005robust,nilim2004robustness}.

 \section{R-Contamination Model For Uncertainty Set Construction}
In this section, we construct the uncertainty set using the $R$-contamination model. 

Let $\mathsf P=\{p_s^a,s\in\mcs, a\in\mca\}$ be the centroid of the uncertainty set, i.e., the transition kernel that generates the sample trajectory, and $\mathsf P$ is \textit{unknown}. 
For example, $\mathsf P$ can be  the simulator at hand, which may not be exactly accurate; and $\mathsf P$ can be the transition kernel of environment 1, from which we can take samples to learn a policy that will be deployed in a similar environment 2. 
The goal is to learn a policy using samples from $\mathsf P$ that performs well when applied to a perturbed MDP from $\mathsf P$.

Motivated by empirical studies of adversarial training in RL \citep{huang2017adversarial,kos2017delving,lin2017tactics,pattanaik2018robust,mandlekar2017adversarially} and the $R$-contamination model in robust detection \citep{hub65}, we use the $R$-contamination model to define the uncertainty set:
\begin{flalign}\label{eq:uncertainset}
\mathcal{P}^a_s=\left\{(1-R)p^a_s+Rq|q\in\Delta_{|\mcs|} \right\},  s\in\mcs, a\in\mca, \text{ for some } 0\leq R\leq 1.
\end{flalign}
 Here, $p^a_s$ is the centroid of the uncertainty set $\mathcal{P}^a_s$ at $(s,a)$, which is \textit{unknown}, and $R$ is the design parameter of the uncertainty set, which measures the size of the uncertainty set, and is assumed to be known in the algorithm. We then let $\mathbf P=\bigotimes_{s\in\mcs,a\in\mca}\mathcal P_s^a$. 
%The goal of robust RL is to learn a robust policy $\pi^*$ that minimizes the worst-case accumulated discounted cost over $\mathbf P$.

\begin{remark}
$R$-contamination model is closely related to other uncertainty set models like total variation and KL-divergence. It can be shown that $R$-contamination set certered at $p$ is a subset of total variation ball : $\left\{(1-R)p+Rq|q\in\Delta_{|\mcs|} \right\} \subset \left\{q\in\Delta_{|\mcs|}| d_{\text{TV}}(p,q)\leq R  \right\}$. Hence the total variation uncertainty set is less conservative than our $R$-contamination uncertainty set. KL-divergence moreover can be related to total variation using Pinsker’s inequality, i.e., $d_{\text{TV}}(p,q)\leq \sqrt{\frac{1}{2}d_{\text{KL}}(p,q)}$. 
\end{remark}

\section{Tabular Case:  Robust Q-Learning}\label{sec:tabular}
In this section, we focus on the tabular case with finite state and action spaces. We focus on the asynchronous setting where a single sample trajectory is available with Markovian noise.
%We note that unlike the model-based approach, where it is assumed that the centroid $\mathsf P$ of the uncertainty set $\mathbf P$ is known before hand, in this paper, we focus on the model-free scenario, where the centroid $\mathsf P$ is unknown, and only samples from the MDP $\mathsf P$ can be obtained. 
%
We will develop an efficient approach to estimate the unknown uncertainty set $\mathbf P$, and further the support function $\sigma_{\mathcal P_s^a}(\cdot)$,
% 
% 
% % Specifically, for all $s\in\mcs$ and $a\in\mca$, let $\mathcal{P}^a_s=\left\{(1-R)p^a_s+Rq|q_i\geq 0, \sum q_i=1 \right\}$, where  $p^a_s$ is the state transition distribution at $(s,a)$ for some unknown state transition probability vector $p^a_s$. 
% 
% % \triangleq C_R(p^a_s)
% % We first consider robust RL problems  which has a specific construction of the uncertainty sets under the tabular setting. The goal is to develop model-free approaches to learn a robust policy using samples generated from a misspecified MDP without knowledge of it, which is referred as optimal control problem. 
% 
% Q-learning is commonly used in non-robust RL to find optimal action-value function and hence solve the control problem. So in this section, we first develop a sample-based method to estimate the unknown uncertainty set, 
and  then design our robust Q-learning algorithm.

% We consider a special type of uncertainty sets called "$R$-contamination", i.e.,  $\mathcal{P}^a_s=\left\{(1-R)p^a_s+Rq|q_i\geq 0, \sum q_i=1 \right\}\triangleq C_R(p^a_s)$  for some unknown state transition probability vector $p^a_s$. 

% Recall that $V^*(s)=\min_{a\in\mca}Q^*(s,a)$, and from Theorem \ref{thm:dualtiy}, we have that
% $
%     Q^*(s,a)=c(s,a)+\gamma \sigma_{\mathcal{P}^a_s}(V^*).
% $

% We aim to find the robust action-value function $Q^*$. In this paper, we first propose the robust Q-learning algorithm and show it converges to $Q^*$, then we develop its finite-time error bound.
 
% The main difficulty under the model-free setting for robust RL is that the centroid  $p^a_{s}$ of the uncertainty set $\mathcal P_s^a$, that generates the training samples is unknown, and therefore the support function cannot be computed explicitly.  To solve this issue, 
We propose an efficient and data-driven approach to estimate the unknown $p^a_{s}$ and thus the unknown uncertainty set $\mathcal P_s^a$ for any $s\in\mcs$ and $a\in\mca$. Specifically, 
denote the sample at $t$-th time step  by $O_t=(s_t,a_t,s_{t+1})$. We then use $O_t$ to obtain the maximum likelihood estimate (MLE) $\hat{{p}}_t\triangleq \mathbbm{1}_{s_{t+1}}$ of the transition kernel $p_{s_t}^{a_t}$, where $\mathbbm{1}_{s_{t+1}}$ is a probability distribution taking probability $1$ at $s_{t+1}$ and $0$ at other states. This is an unbiased estimate of the transition kernel $p^{a_t}_{s_t}$ conditioning on $S_t=s_t$ and $A_t=a_t$. 
We then design a sample-based estimate  
$
\hat{\mathcal{P}}_t\triangleq \left\{(1-R)\hat p_t+Rq|q\in\Delta_{|\mcs|} \right\}
$
of the uncertainty set $\mathcal P_{s_t}^{a_t}$.
%
% After we design the estimated uncertainty sets, we can compute the support function $\sigma_{\hat{\mathcal{P}}_{s_t,s_{t+1}}^{a_t}}(V_t)$ and use it to update $Q$ function.
%
Using the sample-based uncertainty set $\hat{\mathcal{P}}_t$, we construct the following robust Q-learning algorithm in \Cref{alg:1}.
\begin{algorithm} [!htb]
\caption{Robust Q-Learning}
\label{alg:1}
\mbox{\textbf{Initialization}: $T$, $Q_0(s,a)$ for all $(s,a)\in\mcs\times\mca$}, behavior policy $\pi_b$, $s_0$,  step size $\alpha_t$

\begin{algorithmic}[1] %[1] enables line numbers
\FOR {$t=0,1,2,...,T-1$}
		\STATE {Choose $a_t$ according to $\pi_b(\cdot|s_t)$}
		\STATE Observe $s_{t+1}$ and $c_{t}$
		%\STATE $\hat{\mathcal{P}}_t\triangleq \left\{(1-R)\hat p^{a_t}_{s_t}+Rq|q\in\Delta_{|\mcs|} \right\}$
		\STATE $V_t(s)\leftarrow \min_{a\in\mathcal{A}} Q_t(s,a)$, $\forall s\in\mcs$
% 		\STATE $V_t=(V_t(s_1),V_t(s_2),...,V_t(s_{|\mcs|}))^\top$
		\STATE $Q_{t+1}(s_t,a_t) \leftarrow (1-\alpha_t)Q_t(s_t,a_t)+\alpha_t(c_t+\gamma \sigma_{\hat{\mathcal{P}}_t}(V_t))$
		\STATE $Q_{t+1}(s,a)\leftarrow Q_t(s,a)$ for $(s,a)\neq (s_t,a_t)$
		\ENDFOR
\end{algorithmic}
\textbf{Output}: $Q_T$
\end{algorithm}
For any $t$,  $\sigma_{\hat{\mathcal{P}}_t}(V_t)$ can be easily computed:
$
    \sigma_{\hat{\mathcal{P}}_t}(V_t)=R \max_{s\in\mcs}V_{t}(s) +(1-R) V_{t}(s_{t+1}). 
$
Hence the update in Algorithm \ref{alg:1} (line 5) can be  written as 
\begin{align}\label{eq:Qupdate}
    Q_{t+1}(s_t,a_t) \leftarrow (1-\alpha_t)Q_t(s_t,a_t)+\alpha_t(c_t+\gamma R \max_{s\in\mcs}V_{t}(s) +\gamma (1-R) V_{t}(s_{t+1}) ).
\end{align} 

Compared to the model-based approach, our approach is model-free. It does not require the prior knowledge of the uncertainty set, i.e., the knowledge of $p_s^a, \forall s\in\mcs, a\in\mca$. Furthermore, the memory requirement of our algorithm is $|\mcs|\times|\mca|$ (used to store the Q-table), and unlike the model-based approach it does not need a table of size $|\mcs|^2|\mca|$ to store $p_s^a,\forall s\in\mcs, a\in\mca$, which could be problematic if the state space is large. Moreover, our algorithm does not involve a relaxation of the uncertainty set like the one in \citep{roy2017reinforcement}, which will incur a stringent condition on the discount factor to guarantee the convergence. As will be shown below, the convergence of our \Cref{alg:1} does not require any condition on the discount factor. 

We show in the following theorem that the robust Q-learning algorithm converges asymptotically to the optimal robust action-value function $Q^*$.
\begin{theorem}(Asymptotic Convergence)\label{thm:Qasyconv}
If step sizes $\alpha_t$ satisfy that $\sum^{\infty}_{t=0} \alpha_t =\infty$ and $\sum^{\infty}_{t=0} \alpha_t^2 <\infty$, then  $Q_t\to Q^*$ as $t\rightarrow\infty$ with probability 1. 
\end{theorem}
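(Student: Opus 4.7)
The plan is to cast the update \eqref{eq:Qupdate} as an asynchronous stochastic approximation iteration driven by the robust Bellman operator $\mathbf T$, and then appeal to the classical convergence theory for contractions under Markovian noise (e.g., Tsitsiklis 1994, or the Borkar--Meyn framework). The core structural observation I would rely on is that the $R$-contamination model makes the support function affine in the unknown transition, namely
\begin{equation*}
\sigma_{\mathcal P^a_s}(V) \;=\; R\max_{s'\in\mcs}V(s') \;+\; (1-R)\,\sum_{s'\in\mcs} p^a_{s,s'} V(s'),
\end{equation*}
whereas the sample-based estimate is $\sigma_{\hat{\mathcal P}_t}(V_t)=R\max_{s'} V_t(s')+(1-R)V_t(s_{t+1})$. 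Because $V_t(s_{t+1})$ is an unbiased estimate of $p^{a_t}_{s_t}\!\cdot V_t$ conditional on $(s_t,a_t)$, the driving noise $w_t \triangleq c_t+\gamma\sigma_{\hat{\mathcal P}_t}(V_t) - \mathbf T Q_t(s_t,a_t)$ is a zero-mean conditional martingale difference with variance bounded by a constant multiple of $\|V_t\|_\infty^2$.

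The argument would proceed in four steps. First, I would rewrite \eqref{eq:Qupdate} coordinate-wise as
\begin{equation*}
Q_{t+1}(s,a)=Q_t(s,a)+\alpha_t(s,a)\bigl(\mathbf T Q_t(s,a)-Q_t(s,a)+w_t(s,a)\bigr),
\end{equation*}
with effective stepsize $\alpha_t(s,a)=\alpha_t\mathbbm 1_{\{(s_t,a_t)=(s,a)\}}$. Second, I would invoke the fact already recorded after Theorem~\ref{thm:dualtiy} that $\mathbf T$ is a $\gamma$-contraction in $\ell_\infty$ with unique fixed point $Q^*$; in particular, the $R\max$ term poses no obstacle to contraction because it is common to all $p\in\mathcal P^a_s$ and the contraction follows from standard Bellman-operator arguments for robust MDPs. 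Third, I would verify the ergodicity/coverage hypothesis: under a behavior policy $\pi_b$ that induces an irreducible Markov chain on $\mcs\times\mca$ (a standard asynchronous Q-learning assumption I would state explicitly), every pair $(s,a)$ is visited infinitely often, so $\sum_t\alpha_t(s,a)=\infty$ and $\sum_t\alpha_t(s,a)^2<\infty$ almost surely, inherited from the hypotheses on $\alpha_t$. Fourth, I would bound the iterates: since the noise $w_t$ is conditionally mean-zero with variance $O(1+\|Q_t\|_\infty^2)$ and $\mathbf T$ is a sup-norm contraction, the iterates remain almost surely bounded, and the Tsitsiklis asynchronous stochastic approximation theorem yields $Q_t\to Q^*$ a.s.

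The main obstacle I anticipate is the conditional-independence bookkeeping needed to justify that $w_t$ is a genuine martingale-difference noise despite the Markovian sample trajectory. Specifically, one must be careful that $V_t$ is $\mathcal F_t$-measurable (depending only on samples up to time $t$) so that $\mathbb E[V_t(s_{t+1})\mid \mathcal F_t, s_t, a_t]=p^{a_t}_{s_t}\!\cdot V_t$; this is clean because the estimator uses only the \emph{next-state} sample, not a two-step transition. The remaining technicalities — boundedness of $\|Q_t\|_\infty$ (which follows from $\|\mathbf T Q\|_\infty\le\|c\|_\infty+\gamma\|Q\|_\infty$ together with the contraction), and verifying the step-size split $\alpha_t(s,a)$ satisfies the asynchronous conditions under a standard irreducibility assumption on $\pi_b$ — are routine and parallel the classical proof for non-robust Q-learning; the non-standard ingredient is solely the unbiased estimator of $\sigma_{\mathcal P^a_s}$ afforded by the $R$-contamination geometry.
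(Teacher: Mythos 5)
Your proposal is correct and follows essentially the same route as the paper: both rewrite the update as a stochastic approximation driven by the robust Bellman operator, use the affine structure of the $R$-contamination support function to show the sample-based term is a conditionally unbiased (martingale-difference) estimate with variance bounded by $O(\|Q_t\|_\infty^2)$, invoke the $\gamma$-contraction of $\mathbf T$ in $\ell_\infty$, and conclude via a classical asynchronous stochastic-approximation theorem (the paper cites Borkar--Meyn, you cite Tsitsiklis/Borkar--Meyn). If anything, you are more explicit than the paper about the asynchronous bookkeeping — infinitely-often visitation under $\pi_b$, effective stepsizes $\alpha_t(s,a)$, and iterate boundedness — which the paper leaves implicit in its citation.
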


To further establish the finite-time error bound for our robust Q-learning algorithm in \Cref{alg:1}, we make the following assumption that is commonly used in the analysis of vanilla Q-learning.
% 
% First we make a standard assumption:
\begin{assumption}\label{ass1}
The Markov chain induced by the behavior policy $\pi_b$ and the transition kernel $p_s^a,\forall s\in\mcs, a\in\mca$ is uniformly ergodic.   
\end{assumption}
% With this assumption, we present our result on the finite-time error bound of Algorithm \ref{alg:1}.
Let $\mu_{\pi_b}$ denote the stationary distribution over $\mcs\times\mca$ induced by $\pi_b$ and $p_s^a,\forall s\in\mcs, a\in\mca$. We then further define $\mu_{\min}=\min_{(s,a)\in\mcs\times\mca} \mu_{\pi_b}(s,a)$. This quantity characterizes how many samples are needed to visit every state-action pair sufficiently often. 
Define the following mixing time of the induced Markov chain:
$
\tmix=\min \left\{ t: \max_{s\in\mcs} d_{\text{TV}}(\mu_\pi,P(s_t=\cdot|s_0=s) ) \leq \frac{1}{4}\right\},
  %  t_{\text{frame}}&=\frac{443 t_{\text{mix}}}{\mu_{\min}}\log\frac{4|\mcs||\mca|T}{\delta};\\
  %  t_{\text{th}}&=\max\left\{ \frac{2\log\frac{1}{(1-\gamma)^2\epsilon}}{\alpha\mu_{\min}},
  %  t_{\text{frame}}\right\};\\
 %   \mu_{\text{frame}}&=\frac{1}{2}\mu_{\min}t_{\text{frame}};\\
  %  \rho&=(1-\gamma)(1-(1-\alpha)^{\mu_{\text{frame}}}).
$
where $d_{\text{TV}}$ is the total variation distance.

The following theorem establishes the finite-time error bound of our robust Q-learning algorithm. 

\begin{theorem}(Finite-Time Error Bound)\label{thm:Qbound}
There exist some positive constants $c_0$ and $c_1$ such that for any $\delta<1$, any $\epsilon<\frac{1}{1-\gamma}$,  any $T$ satisfying 
\begin{flalign}
T\geq c_0\left(\frac{1}{\mu_{\text{min}}(1-\gamma)^5\epsilon^2}+\frac{\tmix}{\mu_{\text{min}}(1-\gamma)} \right)\log \left(\frac{T|\mcs||\mca|}{\delta}\right)\log\left( \frac{1}{\epsilon(1-\gamma)^2}\right),
\end{flalign}
and step size 
$
\alpha_t=\frac{c_1}{\log \left(\frac{T|\mcs||\mca|}{\delta}\right)}\min \left(\frac{1}{\tmix},\frac{\epsilon^2(1-\gamma)^4}{\gamma^2} \right), \forall t\geq 0
$
we have with probability at least $1-6\delta$,
$
    \|Q_T-Q^*\|_{\infty}\leq 3\epsilon.
$
%then for any $\delta<1$ and any $\epsilon<\frac{1}{1-\gamma}$, there exists a universal constant $\hat{c}$, such that with probability at least $1-6\delta$, for any $t<T$:
%\begin{align}
 %   \|Q_t-Q^*\|_{\infty}\leq \frac{(1-\rho)^k\|Q_0-Q^*\|}{1-\gamma}+\frac{5\hat{c}\gamma}{1-\gamma}\sqrt{\alpha \log \frac{|\mcs||\mca|T}{\delta}}+\epsilon,
%\end{align}
%for $T\geq \mathcal{O}\left(\frac{1}{\mu_{\min}(1-\gamma)^5\epsilon^2}+\frac{\tmix}{\mu_{\text{min}}(1-\gamma)} \right)$, where $k=\max\left\{ 0, \lfloor %\frac{t-t_{\text{th}}}{t_{\text{frame}}} \rfloor\right\}$ and $\|\cdot\|_{\infty}$ denotes the infinite norm,  
\end{theorem}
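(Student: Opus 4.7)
\textbf{Proof plan for Theorem \ref{thm:Qbound}.} The plan is to recast the robust Q-learning update as a stochastic approximation for the robust Bellman operator $\mathbf{T}$ and then adapt the asynchronous Q-learning finite-time analysis of Li et al.\ 2020 to this setting. The key observation is that under the $R$-contamination model, the support function admits the explicit form $\sigma_{\mathcal{P}_s^a}(V)=R\max_{s'\in\mcs}V(s')+(1-R)\sum_{s'}p^a_{s,s'}V(s')$, so the conditional expectation of the one-sample operator
\[
\hat{\mathbf{T}}_t Q(s_t,a_t)\triangleq c(s_t,a_t)+\gamma R\max_{s\in\mcs} V(s)+\gamma(1-R)V(s_{t+1})
\]
given $\mathcal{F}_t$ is exactly $\mathbf{T}Q(s_t,a_t)$. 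Hence \eqref{eq:Qupdate} is a genuine stochastic approximation scheme for the $\gamma$-contraction $\mathbf{T}$ with fixed point $Q^*$, and the only structural difference from vanilla Q-learning is the additional deterministic term $\gamma R\max_s V_t(s)$, which depends only on $Q_t$ and therefore does not alter the martingale-difference / Markovian-noise structure of the algorithm.

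First, I would verify two preparatory facts. (i) The operator $\mathbf{T}$ is a $\gamma$-contraction in $\|\cdot\|_\infty$ and $\|Q^*\|_\infty\leq 1/(1-\gamma)$; consequently if $\|Q_t\|_\infty\leq 1/(1-\gamma)$ then the same holds for $Q_{t+1}$, which gives uniform boundedness of the iterates. (ii) The per-step noise $\xi_t(s_t,a_t)\triangleq \hat{\mathbf{T}}_t Q_t(s_t,a_t)-\mathbf{T}Q_t(s_t,a_t)=\gamma(1-R)(V_t(s_{t+1})-\mathbb{E}[V_t(s_{t+1})\mid s_t,a_t])$ is bounded in magnitude by $2\gamma(1-R)/(1-\gamma)$, with conditional variance scaled by $(1-R)^2\leq 1$. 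Thus the noise is no worse than in vanilla Q-learning.

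Next, following the template of Li et al.\ 2020, I would decompose the error $\Delta_t\triangleq Q_t-Q^*$ into three pieces by writing
\[
\Delta_{t+1}(s,a)=(1-\alpha_t\mathbbm{1}_{(s,a)=(s_t,a_t)})\Delta_t(s,a)+\alpha_t\mathbbm{1}_{(s,a)=(s_t,a_t)}\bigl(\mathbf{T}Q_t(s,a)-Q^*(s,a)+\xi_t(s,a)\bigr),
\]
controlling: (a) a contraction/bias term using $\|\mathbf{T}Q_t-\mathbf{T}Q^*\|_\infty\leq\gamma\|\Delta_t\|_\infty$; (b) a martingale-type noise term via a Freedman/Azuma-type concentration inequality, exactly as in the non-robust analysis, since the noise bound and variance proxy are within a constant factor; and (c) a Markovian-bias term coming from the dependence of $(s_t,a_t)$ on history, handled by unrolling over a window of length $\Theta(\tmix)$ and invoking Assumption \ref{ass1} to ensure every state-action pair is visited with frequency at least $\mu_{\min}$. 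Balancing the step size $\alpha_t$ against these three contributions yields the claimed sample complexity and step-size choice, whose exponents in $(1-\gamma)$, $\mu_{\min}$, $\epsilon$, $\tmix$, and $\log(T|\mcs||\mca|/\delta)$ match the vanilla asynchronous Q-learning bound up to constants absorbed in $c_0,c_1$.

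The main obstacle I expect is the careful propagation of errors through the $\max_{s\in\mcs}V_t(s)$ term: although it is deterministic given $Q_t$ and $1$-Lipschitz in $V_t$ in $\ell_\infty$, its global dependence on the entire Q-table means that entry-wise arguments used in some asynchronous analyses must be replaced by $\ell_\infty$ arguments, and the effective noise coefficient becomes $\gamma(1-R)$ rather than $\gamma$. The latter actually helps, but one must confirm that the $\gamma R\max V_t$ term does not amplify the $\ell_\infty$ error across iterations beyond the $\gamma$-contraction factor; this follows from the fact that $\mathbf{T}$ itself, which already incorporates $\gamma R\max V$, is a $\gamma$-contraction (Theorem \ref{thm:dualtiy} and the discussion after it). Once this is in place, the remaining calculations are the same recursive concentration arguments as in the vanilla case, delivering the bound $\|Q_T-Q^*\|_\infty\leq 3\epsilon$ with probability at least $1-6\delta$.
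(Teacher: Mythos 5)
Your plan matches the paper's proof in essentials: both recast the update as a stochastic approximation for the $\gamma$-contraction $\mathbf{T}$, observe that the extra term $\gamma R\max_s V(s)$ is deterministic given the current Q-table (so it contributes no additional variance, the stochastic part carrying the factor $1-R$), and then import the asynchronous Q-learning analysis of Li et al.\ 2020, using $\mu_{\min}$ and $\tmix$ to control visit counts and balancing the same initial-error, noise, and error-propagation terms to get $3\epsilon$ with probability $1-6\delta$. The only minor difference is that the paper centers the noise at $V^*$ (splitting $P_tV_{t-1}-PV^*$ into $(P_t-P)V^*$ plus $P_t(V_{t-1}-V^*)$ so a Bernstein bound over independent visit-indexed summands applies, with the robust part having exactly zero variance), whereas you keep the noise at $V_t$ and invoke a Freedman/Azuma argument; both are instances of the same Li-et-al.\ template and yield the same recursion and rates.
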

From the theorem, we can see that to guarantee an $\epsilon$-accurate  estimate, a sample size 
$\tilde {\mathcal O}(\frac{1}{\mu_{\text{min}}(1-\gamma)^5\epsilon^2}+\frac{\tmix}{\mu_{\text{min}}(1-\gamma)} )$ (up to some logarithmic terms)  is needed. This complexity  matches with the one for the vanilla Q-learning in \citep{li2020sample} (within a constant factor), while our algorithm also guarantees robustness to MDP model uncertainty. 
Our algorithm design and analysis can be readily extended to robustify TD and SARSA. 
The variance-reduction technique \citep{wainwright2019variance} can also be combined with our robust Q-learning algorithm to further improve the dependency  on $(1-\gamma)$.

% \textbf{Remark}
% Different from model-based robust algorithm in \citep{nilim2004robustness}, which requires knowledge of uncertainty sets and perform dynamic programming, our algorithm is an online model-free one. More specifically, our algorithm assume no knowledge of uncertainty sets or simulator transition probability, and we only need online samples to perform the algorithm. 

% Compared to the model-free method in \citep{roy2017reinforcement}, ours also has more advantages. Firstly, the relaxation in \citep{roy2017reinforcement} makes the uncertainty set no more centered at the simulation MDP, which makes it pessimistic; While our estimated uncertainty sets are actually unbiased. Secondly, The convergence in \citep{roy2017reinforcement} requires strong assumptions on the relaxed uncertainty sets and they only shows asymptotic convergence; Whereas our algorithm has a convergence guarantee and we show its convergence rate. Finally, our algorithm can be performed online and requires less memory space.

% Our result implies that robust Q-learning converges with the rate of $\mathcal{O}\left(\frac{1}{\mu_{\min}(1-\gamma)^5\epsilon^2}+\frac{\tmix}{\mu_{\text{min}}(1-\gamma)} \right)$, which matches the rate of vanilla Q-learning in \citep{li2020sample} and only differ in the constant term $\hat{c}$. This means the robust Q-learning we designed can solve tabular robust RL problem without loss of convergence rate. We believe our method can be extend to other robust RL algorithms like robust TD or robust SARSA. 

\section{Function Approximation: Robust TDC}
In this section, we investigate the case where the state and action spaces can be large or even continuous. A popular approach is to approximate the value function using a parameterized function, e.g., linear function and neural network. In this section, we focus on the case with linear function approximation to illustrate the main idea of designing robust RL algorithms. Our approach can be extended to non-linear (smooth) function approximation using techniques in, e.g., 
\citep{cai2019neural,bhatnagar2009convergent,wai2019variance,wang2021finite}.

We focus on the problem of robust policy evaluation, i.e., estimate the robust value function $V^\pi$ defined in \eqref{eq:value} for a given policy $\pi$ under the worst-case MDP transition kernel in the uncertainty set. 
%
% \subsection{Robust TDC with Linear Function Approximation}
%
%
% The second problem we consider is robust RL with function approximation. We mainly focus on robust policy evaluation problem with function approximation, i.e., we look for an   function $V_{\theta}$ to approximate the robust value function $V^{\pi}$ for a given policy $\pi$.
%
Note that for robust RL with model uncertainty, any policy evaluation problem can be viewed as "off-transition-kernel", as it is to evaluate the value function under the worst-case MDP using samples from a different MDP. 
%
%
% etting all the methods are off-policy because the transition matrices are not fixed for a given policy. 
Since the TD algorithm with function approximation may diverge under off-policy training \citep{baird1995residual} and importance sampling cannot be applied here due to unknown transition kernel, in this paper we generalize the GTD method \citep{maei2010toward,maei2011gradient} to the robust setting.

Let $\left\{ \phi^{(i)}: \mcs\rightarrow \mathbb R,\, i=1,\ldots,N \right\}$ be a set of $N$ fixed base functions, where $N \ll |\mcs||\mca|$. In particular, we approximate the robust value function using a linear combination of $\phi^{(i)}$'s:  
$
V_\theta(s)=\sum_{i=1}^N \theta^i\phi^{(i)}_{s}=\phi_{s}^\top \theta,
$
where $\theta \in \mathbb{R}^N$ is the weight vector.
% More specifically, we propose robust TDC with linear function approximation and study its convergence behavior.

% Similarly, we also consider the "R-contamination" uncertainty sets $C_R(p^a_s)$. And we also use the same estimated uncertainty sets $\hat{\mathcal{P}}^a_s$ as last section. 

Define the following robust Bellman operator for a given policy $\pi$:
\begin{align}\label{eq:bellman op}
   \mathbf T_{\pi}V(s)&\triangleq \mE_{A\sim\pi(\cdot|s)}[c(s,A)+\gamma \sigma_{\mathcal{P}^{A}_{s}}(V)]\nn\\
   % \end{align}
% Recall that
% $
%     \sigma_{\mathcal{P}^{a}_{s}}(V)=(1-R)\sum_{s'\in\mcs} p^{a}_{s,s'} V(s')+R \max_{s'\in\mcs}V(s'),
% $
% then it follows that
% \begin{align}
%      \mathbf T_{\pi}V(s)
     &=\mE_{A\sim\pi(\cdot|s)}\left[c(s,A)+\gamma(1-R)\sum_{s'\in\mcs} p^{A}_{s,s'} V(s')+\gamma R \max_{s'\in\mcs}V(s')\right].
\end{align}
We then define the mean squared projected robust Bellman error (MSPRBE) as 
\begin{align}
    \text{MSPRBE}(\theta)=\left\| \mathbf\Pi \mathbf T_{\pi} V_{\theta}-V_{\theta}\right\|^2_{\mu_\pi},
\end{align}
where $\|v\|^2_{\mu_\pi}=\int v^2(s) \mu_\pi(ds)$, $\mu_\pi$ is the stationary distribution induced by $\pi$, and $\mathbf\Pi$ is a projection onto the linear function space w.r.t. $\|\cdot\|_{\mu_\pi}$. 
We will develop a two time-scale gradient-based approach to minimize the MSPRBE. However, it can be seen that $\max_s V_{\theta}(s)$ in \eqref{eq:bellman op} is not smooth in $\theta$, which is troublesome in both algorithm design and analysis. To solve this issue, we introduce the following smoothed robust Bellman operator $\hat{\mathbf T}_{\pi}$ by smoothing the max with a LSE(LogSumExp):
\begin{align}
    \hat{\mathbf T}_{\pi}V(s)=\mE_{A\sim\pi(\cdot|s)}\left[c(s,A)+\gamma(1-R)\sum_{s'\in\mcs} p^{A}_{s,s'} V(s')+\gamma R\cdot \text{LSE}(V)\right],
\end{align}
where $\text{LSE}(V)=\frac{\log \left(\sum_s e^{\varrho V(s)}\right)}{\varrho}$ is the LogSumExp w.r.t. $V$ with a parameter $\varrho >0$. 
Note that when $\varrho \to \infty$, the smoothed robust Bellman operator $\hat{\mathbf T}_{\pi} \to \mathbf T_{\pi}$. 
% Hence the smoothed robust Bellman operator is actually an approximation of the robust Bellman operator. 
The LSE operator can also be replaced by some other operator that approximates the max operator and is smooth, e.g., mellow-max \citep{Asadi2016}.
In the following, we first show that the fixed point of 
$\hat{\mathbf T}_{\pi}$ exists for any $\varrho $, and the fixed points converge to the one of $\mathbf T_{\pi}$ for large $\varrho $.

% As we discussed before, $\hat{\mathbf T}_{\pi}$ is an approximation of $\mathbf T_{\pi}$ as the soft-max parameter $z$ is large enough, so $\arg\min_{\theta} \text{MSPRBE}'(\theta)$ is also approximating $\arg\min_{\theta} \text{MSPRBE}(\theta)$. The next theorem shows that the fixed points of two Bellman operator are actually very close.
\begin{theorem}\label{thm:lim}
(1). For any $\varrho $, $\hat{\mathbf T}_{\pi}$ has a fixed point.

(2). Let $V_1$ and $V_2$ be the fixed points of $\hat{\mathbf T}_{\pi}$ and $\mathbf T_{\pi}$, respectively. Then 
\begin{align}
    \|V_1-V_2\|_{\infty}&\leq \frac{\gamma R  }{1-\gamma}\frac{\log |\mcs|}{\varrho} \rightarrow 0, \text{ as }\varrho \rightarrow\infty.
\end{align}

\end{theorem}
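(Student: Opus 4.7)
The plan is to reduce both parts of the theorem to two standard properties of the LogSumExp operator: (a) non-expansiveness in the $\ell_\infty$ norm, i.e., $|\text{LSE}(V)-\text{LSE}(V')|\le \|V-V'\|_\infty$ for all $V,V'$, and (b) the sandwich bound $\max_s V(s)\le \text{LSE}(V)\le \max_s V(s)+\frac{\log|\mcs|}{\varrho}$. Property (a) follows because $\nabla_V\text{LSE}(V)$ is the soft-max distribution, hence a probability vector, so the mean value theorem gives the $\ell_\infty$ Lipschitz constant 1; property (b) is the classical two-sided estimate obtained by bounding the sum $\sum_s e^{\varrho V(s)}$ from below by a single term and from above by $|\mcs|$ times the largest term.

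For part (1), I will show that $\hat{\mathbf T}_\pi$ is a $\gamma$-contraction on $(\mathbb R^{|\mcs|},\|\cdot\|_\infty)$ and then invoke the Banach fixed point theorem. For any $V,V'$ and any $s\in\mcs$,
\begin{align*}
|\hat{\mathbf T}_\pi V(s)-\hat{\mathbf T}_\pi V'(s)|
&\le \gamma(1-R)\,\mathbb E_{A\sim\pi(\cdot|s)}\Bigl[\sum_{s'}p^A_{s,s'}|V(s')-V'(s')|\Bigr]\\
&\quad+\gamma R\,|\text{LSE}(V)-\text{LSE}(V')|,
\end{align*}
which by property (a) and the fact that $p^A_s$ is a probability distribution is bounded by $\gamma(1-R)\|V-V'\|_\infty+\gamma R\|V-V'\|_\infty=\gamma\|V-V'\|_\infty$. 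Taking the supremum over $s$ yields the contraction, and Banach's theorem gives existence (and uniqueness) of a fixed point for every $\varrho$.

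For part (2), let $V_1=\hat{\mathbf T}_\pi V_1$ and $V_2=\mathbf T_\pi V_2$. I will insert $\hat{\mathbf T}_\pi V_2$ and apply the triangle inequality:
\begin{align*}
\|V_1-V_2\|_\infty
=\|\hat{\mathbf T}_\pi V_1-\mathbf T_\pi V_2\|_\infty
\le \|\hat{\mathbf T}_\pi V_1-\hat{\mathbf T}_\pi V_2\|_\infty+\|\hat{\mathbf T}_\pi V_2-\mathbf T_\pi V_2\|_\infty.
\end{align*}
The first term is at most $\gamma\|V_1-V_2\|_\infty$ by part (1). For the second term, note that $\hat{\mathbf T}_\pi$ and $\mathbf T_\pi$ differ only in the last summand, namely $\gamma R\,\text{LSE}(V_2)$ versus $\gamma R\,\max_{s'}V_2(s')$, so property (b) gives $\|\hat{\mathbf T}_\pi V_2-\mathbf T_\pi V_2\|_\infty\le \gamma R\,\frac{\log|\mcs|}{\varrho}$. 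Rearranging the resulting inequality $\|V_1-V_2\|_\infty\le \gamma\|V_1-V_2\|_\infty+\gamma R\frac{\log|\mcs|}{\varrho}$ and dividing by $1-\gamma$ produces exactly the claimed bound, from which $\|V_1-V_2\|_\infty\to 0$ as $\varrho\to\infty$ is immediate.

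There is no substantial obstacle here: the proof is essentially a direct application of Banach's fixed point theorem together with the two LSE estimates. The only point that requires a small amount of care is establishing that $\mathbf T_\pi$ actually has a fixed point $V_2$ so that the comparison in part (2) is meaningful; this follows from the same contraction argument applied to $\mathbf T_\pi$ (the $\max$ operator is also 1-Lipschitz in $\ell_\infty$), which was already used in Theorem \ref{thm:dualtiy}, and so can be invoked without further work.
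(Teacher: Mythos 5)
Your proposal is correct and follows essentially the same route as the paper: establish that LSE is non-expansive in $\|\cdot\|_\infty$, conclude that $\hat{\mathbf T}_{\pi}$ is a $\gamma$-contraction (hence Banach gives the fixed point), and then bound $\|V_1-V_2\|_\infty$ by a triangle inequality combined with the gap estimate $0\le \text{LSE}(V)-\max_s V(s)\le \frac{\log|\mcs|}{\varrho}$, rearranging to divide by $1-\gamma$. The only (immaterial) difference is that you insert $\hat{\mathbf T}_{\pi}V_2$ and use the contraction of the smoothed operator, whereas the paper inserts the non-smoothed operator's value and uses the contraction of $\mathbf T_{\pi}$; both yield the identical bound.
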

% This theorem shows that the difference of $V_1$ and $V_2$ are upper bounded by the total variation distance of maximal distribution and soft-max distribution, which becomes very little when $z$ is large enough. 

We then denote by $J(\theta)$ the smoothed MSPRBE with the LSE operator, and the goal is:
\begin{align}\label{eq:sMSPRBE}
    \min_\theta J(\theta)=\min_\theta\left\| \mathbf\Pi \hat{\mathbf T}_{\pi} V_{\theta}-V_{\theta}\right\|^2_{\mu_\pi}.
\end{align}

% Since the two solutions are very close, we will focus on studying $J(\theta)$ in the remaining part. In this paper, we propose the robust TDC algorithm to minimize $J(\theta)$.

% In the following, we develop the robust TDC algorithm to solve the problem in \eqref{eq:sMSPRBE}. We will first derive the gradient of the smoothed MSPRBE, $J(\theta)$, and then design a two time-scale update rule (weight doubling trick \citep{sutton2009fast}) to solve the double sampling problem. We will then establish the convergence and further the finite-time error bound of the proposed robust TDC algorithm.

\subsection{Algorithm Development}
In the following, we develop the robust TDC algorithm to solve the problem in \eqref{eq:sMSPRBE}. We will first derive the gradient of the smoothed MSPRBE, $J(\theta)$, and then design a two time-scale update rule using the weight doubling trick in \citep{sutton2009fast} to solve the double sampling problem.
Define 
$
    \delta_{s,a,s'}(\theta)\triangleq c(s,a)+\gamma (1-R) V_{\theta}(s')+\gamma R \text{LSE}(V_{\theta}) -V_{\theta}(s),
$
where $\text{LSE}(V_{\theta})$ is the LogSumExp function w.r.t. $V_\theta=\theta^\top\phi$. 
Denote by $C\triangleq \mathbb{E}_{\mu_\pi}\left[\phi_S^\top\phi_S\right]$. 
Then, 
$
    \mathbb{E}_{\mu}[\delta_{S,A,S'}(\theta)\phi_S]
    % &=\sum_{s\in \mcs}\mu_\pi(s) \sum_{a\in\mca}\pi(a|s)\phi_s \left\{ c(s,a)+\sum_{s'\in\mcs}\gamma (1-R) p^{a}_{s,s'}V_{\theta}(s')+\gamma R \sum_{s'\in\mcs} p_{\theta}(s')V_{\theta}(s')-V_{\theta}(s)\right\}\nn\\
    =\Phi^\top D\left(\hat{\mathbf T}_{\pi}V_{\theta}-V_{\theta}\right),
$
where $D=\text{diag}(\mu_\pi(s_1),\mu_\pi(s_2),...,\mu_\pi(s_{|\mcs|}))$ and $\Phi=(\phi_{s_1},\phi_{s_2},...,\phi_{s_{|\mcs|}})^\top \in \mathbb R^{|\mcs|\times N}$.
We know that 
$
    \mathbf{\Pi}^\top D\mathbf{\Pi}=D^\top\Phi(\Phi^\top D\Phi)^{-1}\Phi^\top D
$ from \citep{maei2011gradient}. Hence we have
\begin{align}
    J(\theta)&=\left\| \mathbf\Pi \hat{\mathbf T}_{\pi} V_{\theta}-V_{\theta}\right\|^2_{\mu_\pi}%=\left\| \mathbf\Pi (\hat{\mathbf T}_{\pi} V_{\theta}-V_{\theta}) \right\|^2_{\mu_\pi}\nn\\
    %&=(V_{\theta}-\hat{\mathbf T}V_{\theta})^\top \mathbf{\Pi}^\top D \mathbf{\Pi}(V_{\theta}-\hat{\mathbf T}V_{\theta})\nn\\
    =\mathbb{E}_{\mu_\pi}[\delta_{S,A,S'}(\theta)\phi_S]^\top C^{-1}\mathbb{E}_{\mu_\pi}[\delta_{S,A,S'}(\theta)\phi_S].
\end{align}
 Then, its gradient can be written as:
\begin{align*}
    &-\frac{1}{2}\nabla J(\theta)=-\mathbb{E}_{\mu_\pi}[(\nabla \delta_{S,A,S'}(\theta)) \phi_S]^\top C^{-1}\mathbb{E}_{\mu_\pi}[\delta_{S,A,S'}(\theta)\phi_S]\nn\\
    %&=-\mathbb{E}_{\mu_\pi}\left[\left(\gamma (1-R)\phi_{S'}+\gamma R \sum_{s\in\mcs}(\hat{p}_{\theta}(s)V_{\theta}(s)+p_{\theta}(s)\phi_s)-\phi_S\right) \phi_S^\top\right] \omega(\theta)\nn\\
    %&=\left( \mE_{\mu_\pi}\left[\phi_S\phi_S^\top \right]+\mE_{\mu_\pi}\left[\left( -\gamma (1-R)\phi_{S'}-\gamma R \sum_{s\in\mcs}(\hat{p}_{\theta}(s)V_{\theta}(s)+p_{\theta}(s)\phi_s)\right) \phi_S^\top\right]\right)\omega(\theta)\nn\\
    &=\mE_{\mu_\pi}[\delta_{S,A,S'}(\theta)\phi_S]-\gamma \mE_{\mu_\pi}\bigg[\bigg(  (1-R)\phi_{S'}+ R\cdot \nabla \text{LSE}(V_{\theta})\bigg) \phi_S^\top\bigg]\omega(\theta),
\end{align*}
where $\omega(\theta)= C^{-1}\mathbb{E}_{\mu_\pi}[\delta_{S,A,S'}(\theta)\phi_S]$. It can be seen that to obtain an unbiased estimate of $\nabla J(\theta)$, two independent samples are needed as there exists a multiplication of two expectations, which is not applicable when there is only one sample trajectory. 
We then utilize the weight doubling trick in \citep{sutton2009fast}, and design the robust TDC algorithm in \Cref{alg:TDC}. Specifically, we introduce a fast time scale to estimate $\omega(\theta)$, and a slow time scale to estimate $\nabla J(\theta)$. Denote the projection by $\mathbf \Pi_K(x)\triangleq \arg\min_{\|y\|\leq K} \|y-x\|$ for any $x\in \mathbb R^N$. 
Our robust TDC algorithm in \Cref{alg:TDC} can be implemented in an online and incremental fashion.
If the uncertainty set becomes a singleton, i.e., $R=0$, then \Cref{alg:TDC} reduces to the vanilla TDC algorithm.
\begin{algorithm}
\caption{Robust TDC with Linear Function Approximation}
\label{alg:TDC}
\textbf{Input}:   $T$,$\alpha$, $\beta$, $\varrho$, $\phi_i$ for $i=1,...,N$, projection radius  $K$\\
\textbf{Initialization}: $\theta_0$,$w_0$, $s_0$
\begin{algorithmic}[1] %[1] enables line numbers
\STATE {Choose $W\sim \text{Uniform}(0,1,...,T-1)$}
\FOR {$t=0,1,2,...,W-1$}
		\STATE Take action according to $\pi(\cdot|s_t)$ and observe $s_{t+1}$ and $c_{t}$
		\STATE $\phi_t\leftarrow \phi_{s_t}$
		\STATE $\delta_{t}(\theta_{t})\leftarrow c_t+\gamma (1-R)V_{\theta_t}(s_{t+1})+\gamma R \frac{\log (\sum_s e^{\varrho \theta^\top\phi_s})}{\varrho}-V_{\theta_t}(s_t) $
		\STATE $\theta_{t+1} \leftarrow  \mathbf \Pi_K \left(\theta_{t}+\alpha\left(\delta_t(\theta_t)\phi_t-\gamma\bigg( (1-R)\phi_{t+1}+ R \sum\limits_{s\in\mcs} \left(\frac{e^{\varrho V_{\theta}(s)}\phi_s}{\sum_{j\in\mcs}e^{\varrho V_{\theta}(j)}}\right)\bigg)\phi_t^\top\omega_t \right)\right)$
		\STATE $\omega_{t+1} \leftarrow \mathbf{\Pi}_K(\omega_{t}+\beta(\delta_{t}(\theta_{t})-\phi_{t}^\top  \omega_{t})\phi_{t})$
		\ENDFOR
\end{algorithmic}
\textbf{Output}: $\theta_W$
\end{algorithm}

\subsection{Finite-Time Error Bound of Robust TDC}
% In this section, we show that the proposed robust TDC algorithm in \cref{alg:TDC} converges asymptotically, and further characterize its finite-time error bound. 

Unlike the vanilla TDC algorithm,  $J(\theta)$ here is non-convex. Therefore, we are interested in the convergence to stationary points, i.e., the rate of $\|\nabla J(\theta)\|\to 0$. We first make some standard assumptions which are commonly used in RL algorithm analysis, e.g., \citep{wang2020finite,kaledin2020finite,xu2019two,srikant2019,bhandari2018finite}.
\begin{assumption}[Bounded feature]
  $\|\phi_{s}\|_2\leq 1, \forall s\in\mcs$.
\end{assumption}
\begin{assumption}[Bounded cost function]
  $|c(s,a)|\leq c_{\max}, \forall s\in\mcs$ and $a\in\mca$.
\end{assumption}
\begin{assumption}[Problem solvability]
 The matrix $C=\mathbb{E}_{\mu_\pi}[\phi_S\phi_S^\top  ]$ is non-singular with $\lambda>0$ being its smallest eigenvalue.
\end{assumption} 
\begin{assumption}[Geometric uniform ergodicity]\label{ass:1}
 There exist some constants $m>0$ and $\rho \in (0,1)$ such that  for any $t>0$,
$
    \max_{s\in\mathcal{S}} d_{\text{TV}}(\mathbb{P}(s_t |s_0=s),  \mu_\pi) \leq m\rho^t .
$
\end{assumption}
% This assumption is actually a stronger version of Assumption \ref{ass1}.

In the following theorem, we characterize the finite-time error bound for the convergence of our robust TDC algorithm. Here we only provide the order of the bounds in terms of $T$. The explicit bounds can be found in \eqref{eq:tdcbound} in \Cref{sec:finitebound}. 
\begin{theorem}\label{thm:TDC}
Consider the following step-sizes: $\beta= \mathcal{O}\left(\frac{1}{T^b}\right)$, and $\alpha= \mathcal{O}\left(\frac{1}{T^a}\right)$, where $\frac{1}{2}< a\leq 1$ and $0<b\leq a$. Then we have that
\begin{align}\label{eq:theorembound}
    \mathbb{E}[\|\nabla  J(\theta_W)\|^2]=\mathcal{O}\left(\frac{1}{T\alpha}+\alpha\log(1/\alpha)+\frac{1}{T\beta} +\beta\log(1/ \beta)\right),
\end{align}
% where $\tau_{\alpha}=\mathcal{O}(\log(\alpha))$ and $\tb=\mathcal{O}(\log(\beta))$. 
If we further let $a=b=0.5$, then 
$
    \mathbb{E}[\|\nabla  J(\theta_W)\|^2]=\mathcal{O}\left(\frac{\log T}{\sqrt{T}}\right).
$
\end{theorem}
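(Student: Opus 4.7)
The plan is to adapt the standard two time-scale stochastic approximation analysis (as in \citep{xu2019two,wang2020finite,kaledin2020finite}) to the robust TDC setting, accommodating the nonconvex smoothed objective $J(\theta)$, the tracking of the moving target $\omega(\theta)$, and the Markovian sampling noise. Since $J(\theta)$ is nonconvex, I will aim to bound $\mathbb{E}[\|\nabla J(\theta_W)\|^2]$ via a descent-style telescoping argument, where the random index $W$ gives an average-iterate guarantee.

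First, I would establish that $J$ is $L$-smooth with an explicit $L$ depending on $\gamma$, $R$, $\varrho$, $\lambda$, and $K$. The key ingredient is that the LSE smoothing makes $s \mapsto \mathrm{LSE}(V_\theta)$ a $\varrho$-smooth approximation of the max, so $\hat{\mathbf T}_\pi V_\theta$ is smooth in $\theta$, and consequently $\nabla J(\theta)$ is Lipschitz (using boundedness of features and $\|\theta_t\|\leq K$, $\|\omega_t\|\leq K$ from the projection). I would likewise verify that $\omega(\theta)=C^{-1}\mathbb{E}_{\mu_\pi}[\delta_{S,A,S'}(\theta)\phi_S]$ is Lipschitz in $\theta$ using non-singularity of $C$ (Assumption on smallest eigenvalue $\lambda$).

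Next comes the two-coupled-recursion step. Applying the descent lemma to $J$ along the $\theta$-update and rearranging yields
\begin{align*}
\tfrac{\alpha}{2}\mathbb{E}\|\nabla J(\theta_t)\|^2 \leq \mathbb{E}[J(\theta_t)-J(\theta_{t+1})] + \underbrace{\alpha\,\mathbb{E}\langle \nabla J(\theta_t), h_t(\theta_t,\omega_t,O_t)+\tfrac{1}{2}\nabla J(\theta_t)\rangle}_{\text{bias}} + \tfrac{L\alpha^2}{2}\mathbb{E}\|h_t\|^2,
\end{align*}
where $h_t$ is the one-sample stochastic direction used in line~6. The bias term I would split into (i) a gradient-tracking error controlled by $\|\omega_t-\omega(\theta_t)\|$, and (ii) a Markovian noise term controlled using a shift-by-$\tau$ argument with $\tau = \Theta(\log(1/\alpha))$, exploiting Assumption (geometric uniform ergodicity). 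The mixing-time truncation is exactly what produces the $\alpha\log(1/\alpha)$ factor in the bound.

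In parallel, I would analyze the fast recursion for $\omega_t$. Writing $z_t=\omega_t-\omega(\theta_t)$, the update gives $z_{t+1}= z_t - \beta C z_t + \beta\,(\text{Markov noise}) + (\omega(\theta_t)-\omega(\theta_{t+1}))$, where the last drift term is $O(\alpha)$ by Lipschitzness of $\omega(\cdot)$. Using $\lambda$-positive-definiteness of $C$ gives a contraction, and a standard mixing-time argument yields
\begin{align*}
\mathbb{E}\|z_t\|^2 = \mathcal{O}\!\left(\beta\log(1/\beta) + \tfrac{\alpha^2}{\beta}\right).
\end{align*}
The condition $b\leq a$ ensures the fast scale $\beta$ is indeed at least as fast as $\alpha$, so $\alpha^2/\beta = O(\alpha)$ is benign.

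Finally, summing the descent inequality over $t=0,\dots,T-1$, dividing by $T\alpha/2$, and plugging in the tracking bound and the Markovian bias bound gives
\begin{align*}
\tfrac{1}{T}\sum_{t=0}^{T-1}\mathbb{E}\|\nabla J(\theta_t)\|^2 = \mathcal{O}\!\left(\tfrac{1}{T\alpha} + \alpha\log(1/\alpha) + \tfrac{1}{T\beta} + \beta\log(1/\beta)\right),
\end{align*}
and since $W$ is uniform on $\{0,\ldots,T-1\}$, the left side equals $\mathbb{E}\|\nabla J(\theta_W)\|^2$. The stated corollary with $a=b=1/2$ then gives $\mathcal{O}(\log T/\sqrt{T})$.

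I expect the main obstacle to be the coupled Markovian-noise plus tracking-error control, specifically bounding the inner product $\mathbb{E}\langle \nabla J(\theta_t), h_t(\theta_t,\omega_t,O_t)+\tfrac{1}{2}\nabla J(\theta_t)\rangle$ uniformly in $t$: one has to carefully peel off the $\omega_t$-mismatch (requiring the fast-scale bound, which itself depends on the $\theta$-drift) and simultaneously handle the non-stationary distribution of $O_t$ by comparing $\theta_t$ to $\theta_{t-\tau}$ for a suitably chosen mixing window $\tau$. A secondary subtlety is that the LSE gradient $\sum_s \frac{e^{\varrho V_\theta(s)}\phi_s}{\sum_j e^{\varrho V_\theta(j)}}$ introduces a softmax that couples all coordinates; I would handle this by showing it is globally Lipschitz in $\theta$ with constant $O(\varrho)$, which feeds cleanly into the smoothness constant $L$ without changing the rate order in $T$.
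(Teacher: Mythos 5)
Your overall route is the same as the paper's: establish Lipschitz smoothness of $\nabla J$ and of $\omega(\cdot)$ using the projection and the $O(\varrho)$-Lipschitz softmax, apply the descent lemma to the slow iterate, control the Markovian bias by a shift-of-$\tau$ mixing argument (producing the $\alpha\log(1/\alpha)$ and $\beta\log(1/\beta)$ factors), bound the tracking error $z_t=\omega_t-\omega(\theta_t)$ through the fast recursion, telescope, and use the uniform index $W$. The one place where your plan does not go through as written is the tracking-error bound and the claim that the slow-drift contribution is "benign." If you bound the drift $\omega(\theta_t)-\omega(\theta_{t+1})$ only by its absolute size $O(\alpha)$ (via Lipschitzness of $\omega(\cdot)$ and boundedness of the projected update), the cross term $\langle z_t,\omega(\theta_t)-\omega(\theta_{t+1})\rangle$ against a per-step contraction of order $\lambda\beta$ yields a quasi-steady-state error of order $\alpha^2/\beta^2$ (or $\alpha/\beta$ with the cruder Cauchy--Schwarz bound), not the $\alpha^2/\beta$ you state. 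This matters precisely in the regime you need for the headline rate: with $a=b=1/2$ the ratio $\alpha/\beta$ is a constant, so any absolute bound of the form $O(\beta\log(1/\beta)+\alpha^2/\beta^2)$ or $O(\alpha/\beta)$ does not vanish, and plugging it into the descent inequality cannot give $\mathcal{O}(\log T/\sqrt{T})$.

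The paper closes this loop differently. The drift is decomposed further: the component $\nabla\omega(\theta_t)\left(G_t(\theta_t,\omega(\theta_t))+\tfrac{1}{2}\nabla J(\theta_t)\right)$ has zero mean under the stationary distribution and is handled by its own mixing-time lemma (the $h$-function, Lemma \ref{lemma:h}), while the remainder is bounded by Young's inequality into pieces proportional to $\|z_t\|^2$ and, crucially, $\|\nabla J(\theta_t)\|^2$. Consequently the averaged tracking bound is not an absolute quantity but carries an explicit coupling term of order $\frac{\alpha^2}{\beta^2}\cdot\frac{1}{T}\sum_{t}\mathbb{E}\|\nabla J(\theta_t)\|^2$; substituting it back into the descent inequality produces a self-referencing inequality of the form $X\leq U+V\sqrt{X}$ (after choosing the step-size constants so that the coefficient $K_1=\mathcal{O}(\alpha/\beta)$ of the linear-in-$X$ term is at most $1/2$), which is then solved to give $X=\mathcal{O}\bigl(\frac{1}{T\alpha}+\alpha\tau_\alpha+\frac{1}{T\beta}+\beta\tau_\beta\bigr)$. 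You correctly identified the $\theta$-drift/tracking coupling as the main obstacle, but to make your argument complete you need this refined decomposition of the drift (mean-zero part via ergodicity plus parts absorbed into $\|z_t\|^2$ and $\|\nabla J(\theta_t)\|^2$) and the explicit resolution of the resulting circular dependence, rather than an unconditional per-iterate bound on $\mathbb{E}\|z_t\|^2$.
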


% We note that \citep{roy2017reinforcement} establishes the asymptotic convergence of the robust TDC with a relaxed uncertainty set and under a stringent condition on the discount factor. However, their proof seems to be incorrect. Specifically, in Theorem 4.6 in the Arxiv version of \citep{roy2017reinforcement}, their proof missed a term $\|\mathbf {\Phi}_{\theta'}^\top (y^*-z^*)\|_2$ in inequality $(131)$. 

The robust TDC has a matching complexity with the vanilla TDC with non-linear function approximation \citep{wang2021finite}, but provides the additional robustness to model uncertainty. 
It does not need to relax the uncertainty set like in \citep{roy2017reinforcement}, and our convergence results do not need a condition on the discount factor.

\section{Experiments}
% In this section, we evaluate the robustness of our proposed algorithms, and compare them to their non-robust counterparts.
\subsection{Robust Q-Learning}\label{sec:6.1}
In this section, we compare our robust Q-learning with the vanilla non-robust Q-learning. We use OpenAI gym framework \citep{brockman2016openai}, and consider two different problems: Frozen lake and Cart-Pole. One more example of the taxi problem is given in the appendix. To demonstrate the robustness, the policy is learned in a perturbed MDP, and is then tested on the true unperturbed MDP. Specifically, during the training, we set a probability $p$ such that after the agent takes an action, with probability $p$, the state transition is uniformly over $\mcs$, and with probability $1-p$ the state transition is according to the true unperturbed transition kernel. 
The behavior policy for all the experiments below is set to be a uniform distribution over the action space given any state, i.e., $\pi_b(a|s)=\frac{1}{|\mca|}$ for any $s\in\mcs$ and $a\in\mca$. 
We then evaluate the performance of the obtained policy in the unperturbed environment.  At each time t, the policy we evaluate is the greedy-policy w.r.t. the current estimate of the Q-function, i.e., $\pi_t(s)=\arg\max_a Q_t(s,a)$. 
A Monte-Carlo method with horizon 100 is used to evaluate the accumulated discounted reward of the learned policy on the unperturbed MDP. We take the average over 30 trajectories. 
More details are provided in the appendix. 
% \subsubsection{Frozen Lake Problem}\label{section:flQ}

% \textbf{Frozen Lake Problem.}
% We consider a $4\times 4$ Frozen Lake problem. 
% % After an action is taken, with probability $1-p$ the agent will arrive the next state following the action, and with probability $p$ the agent will move to a uniformly random state. 
% %
% % We then generate 30 trajectories following  policy $pi_t$ under the true unperturbed environment, and compute their average accumulated discounted reward, i.e., $R_t=\frac{\sum^{30}_{i=1} r_i}{30}$ where $r_i$ is the accumulated discounted reward received in the $i$-th trajectory. 
% We set $\gamma=0.96$, $\alpha=0.8$. %We also do the same for the vanilla Q-learning algorithm.
%

% It can be seen that robust Q-learning can achieve a much higher reward than the vanilla Q-learning. This demonstrates that the robustness of our robust Q-learning algorithm to model uncertainty. 
\begin{figure}[htb]
\centering 
\subfigure[ p=0.1, R=0.1]{
\label{Fig.fl1}
\includegraphics[width=0.31\linewidth]{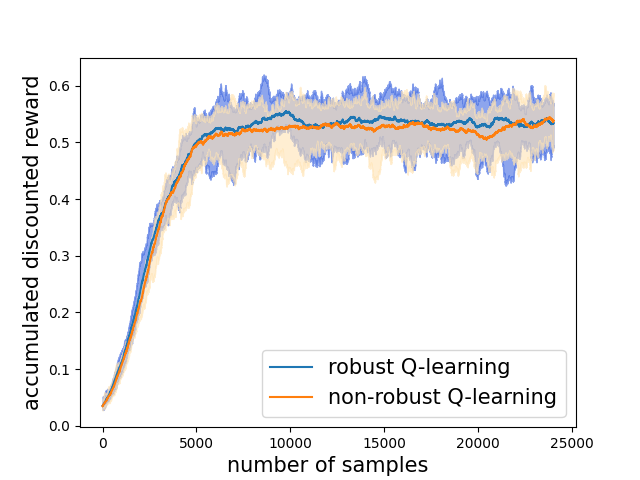}}
\subfigure[ p=0.05, R=0.2]{
\label{Fig.fl2}
\includegraphics[width=0.31\linewidth]{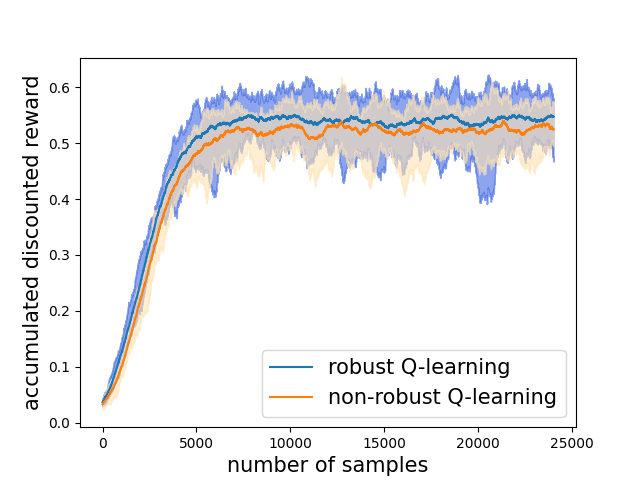}}
\subfigure[ p=0.1, R=0.2]{
\label{Fig.fl3}
\includegraphics[width=0.31\linewidth]{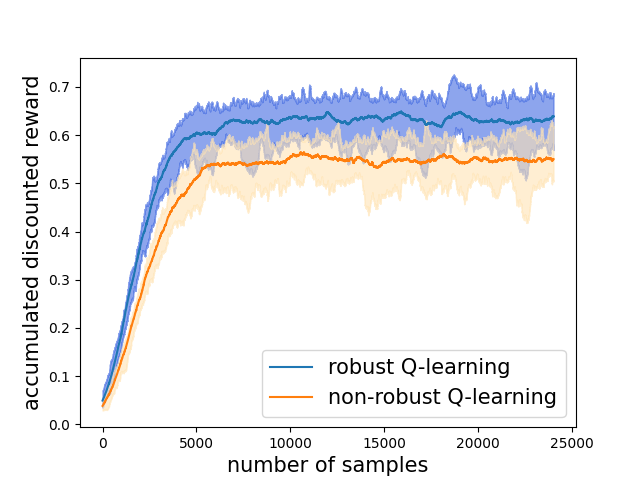}}
\captionsetup{font={normalsize}}
\caption{\textbf{FrozenLake-v0}: robust Q-learning v.s. non-robust Q-learning.}
\label{Fig.44fl}
\end{figure}
\begin{figure}[htb]
\centering 
\subfigure[ p=0.1, R=0.1]{
\label{Fig.cp1}
\includegraphics[width=0.312\linewidth]{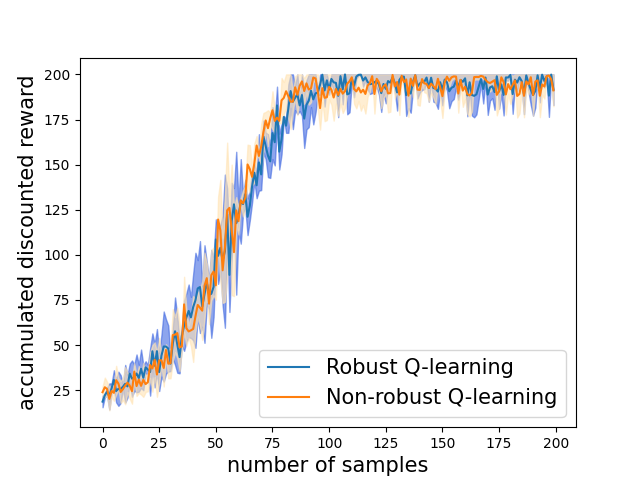}}
\subfigure[ p=0.05, R=0.2]{
\label{Fig.cp2}
\includegraphics[width=0.312\linewidth]{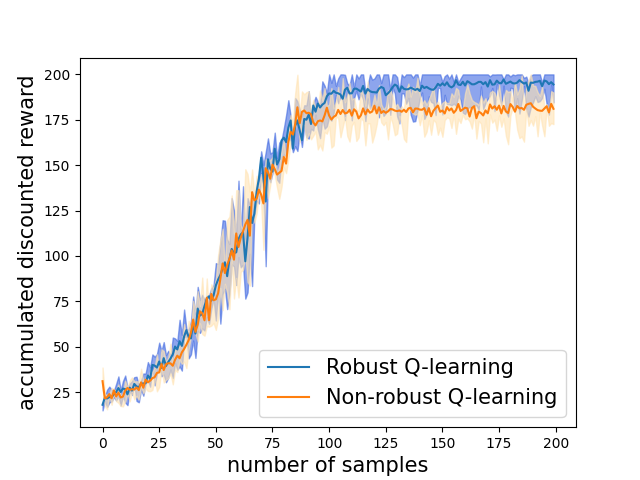}}
\subfigure[ p=0.1, R=0.2]{
\label{Fig.cp3}
\includegraphics[width=0.312\linewidth]{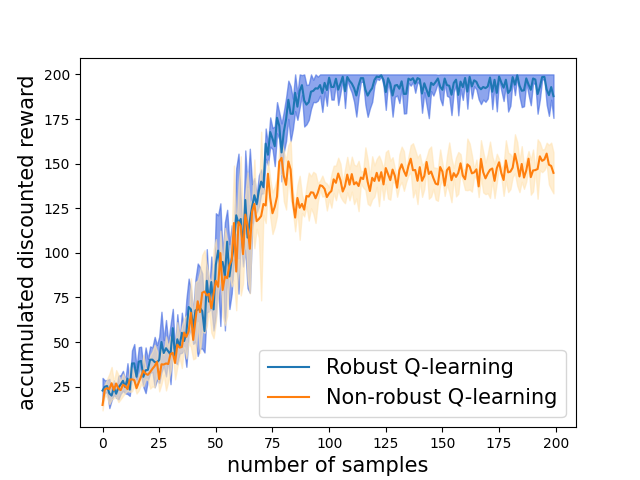}}
\captionsetup{font={normalsize}}
\caption{\textbf{CartPole-v0}: robust Q-learning v.s. non-robust Q-learning.}
\label{Fig.CP}
\end{figure}
% \textbf{Cart-Pole Problem}
% % We then consider the Cart-Pole problem. We set the behavior policy to be a uniform distribution over the action space for any state. 
% We set $\gamma=0.95$, $\alpha=0.2$.  %We again plot the accumulated discounted reward in Fig. \ref{Fig.CP}. 
% It can be seen that robust Q-learning always has larger reward than the non-robust Q-learning, and converges much faster. This means our robust Q-learning performs better under the robust problem setting.

In \Cref{Fig.44fl} and \Cref{Fig.CP}, we plot the accumulated discounted reward of both algorithms under different $p$ and $R$ for both problems. The upper and lower envelopes of the curves correspond to the 95 and 5 percentiles of the 30 trajectories, respectively. 
It can be seen that overall our robust Q-learning algorithm achieves a much higher reward than the vanilla Q-learning. This demonstrates the robustness of our robust Q-learning algorithm to model uncertainty. Moreover, as $p$ and $R$ getting larger, i.e., as the MDP that we learn the policy deviates from the MDP we test the policy, the advantage of our robust Q-learning algorithm is getting more significant compared to the vanilla Q-learning algorithm. 
% It can be seen that robust Q-learning always has larger reward than the non-robust Q-learning, and converges much faster. This means our robust Q-learning performs better under the robust problem setting.

\subsection{Robust TDC with Linear Function Approximation}\label{sec:6.2}
In this section we compare our robust TDC with the vanilla non-robust TDC with linear function approximation on the $4\times 4$ Frozen Lake problem. The problem setting is the same as the one in Section \ref{sec:6.1}. More details about the experiment setup are provided in the appendix. 

We implement the two algorithms using samples from the perturbed MDP both for 30 times, and obtain 30 sequences of $\{\theta^i_t\}_{t=1}^\infty$, $i=1,2,...,30$. We then compute the squared gradient  norm $\| \nabla J(\theta)\|^2$ on the true unperturbed MDP, and see whether $\{\theta^i_t\}_{t=1}^\infty$ converges to some stationary points on the true unperturbed MDP. In Fig. \ref{Fig.TDC}, we plot the average squared gradient norm $\| \nabla J(\theta)\|^2$ for different $p$ and $R$. The upper and lower envelops are   the 95 and 5 percentiles of the 30 curves.
It can be seen that our robust TDC converges much faster than vanilla TDC, and as the model mismatch between the training and test MDPs enlarges, the vanilla TDC may diverge (Fig. \ref{Fig.tdc3}), while our robust TDC still converges to some stationary point. Also, the robust TDC has a much smaller variance, which indicates a much stable behavior  under model uncertainty. 
\begin{figure}[htb]
\centering 
\subfigure[ p=0.1, R=0.1]{
\label{Fig.tdc1}
\vspace{-1cm}\includegraphics[width=0.312\linewidth]{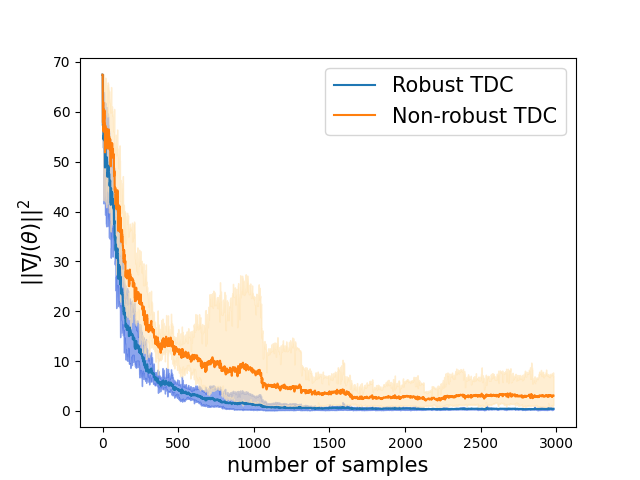}}
\subfigure[ p=0.05, R=0.2]{
\label{Fig.tdc2}
\includegraphics[width=0.312\linewidth]{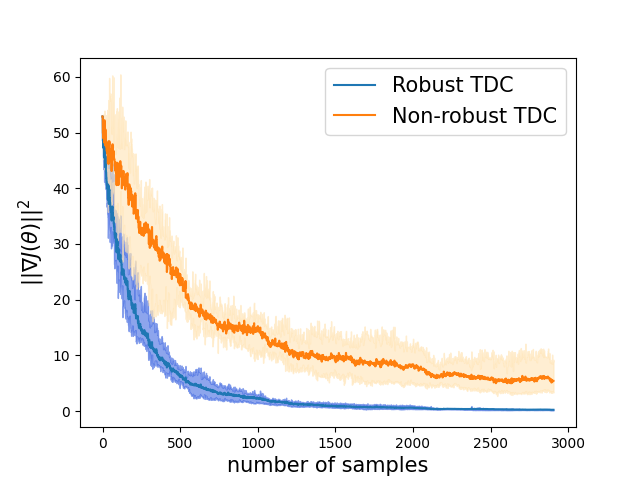}}
\subfigure[ p=0.1, R=0.2]{
\label{Fig.tdc3}
\includegraphics[width=0.312\linewidth]{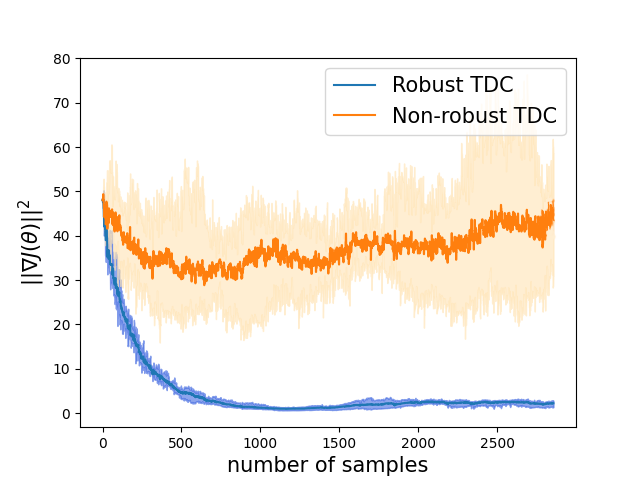}}
\captionsetup{font={normalsize}}
\caption{\textbf{FrozenLake-v0}: $\|\nabla J(\theta)\|^2$ of robust TDC and non-robust TDC.}
\label{Fig.TDC}
\end{figure}

\subsection{Comparison with The Adversarial Training Approach}

We also compare our robust Q-learning with Robust Adversarial Reinforcement Learning (RARL) in \citep{pinto2017robust}. To apply their algorithm to our problem setting, we model the nature as an adversarial player, and its goal is to minimize the reward that the agent receives. The action space $\mca_{ad}$ of the nature is set to be the state space $\mca_{ad}\triangleq \mcs$. Then the perturbed training environment can be viewed as an adversarial model: both the agent and the adversary take actions $a_{a}, a_{ad}$, then the environment will transit to state $a_{ad}$ with probability $R$ and transit following the unperturbed MDP $p_s^{a_a}$ with probability $1-R$. The goal of the maximize its accumulated reward, while the goal of the natural is to minimize it.

Following the RARL algorithm \citep{pinto2017robust}, in each iteration of the training, we first fix the adversarial policy and use Q-learning to optimize the agent's policy and obtain the Q-table $Q_t$. Then we fix the agent's policy and optimize the adversarial policy. 

After each training iteration, we test the performance of the greedy policies w.r.t. Q-tables obtained from robust Q-learning and RARL. The testing environment is set to be the worst-case, i.e., after the agent takes an action, the environment transits to the state which has the minimal value function ($\arg\min_{s\in\mcs} V_t(s)$) with probability $p$. We plot the accumulated discounted rewards of both algorithms against number of training iterations under different parameters. We set $\alpha=0.2$ and $\gamma=0.9$. It can be seen from Fig.~\ref{Fig.AD} that our robust Q-learning achieves a higher accumulative reward, and thus is more robust that the RARL algorithm in \citep{pinto2017robust}. Also our robust Q-learning is more stable during training, i.e.,  the variance is smaller.
\begin{figure}[htb]
\centering 
\subfigure[ p=0.1, R=0.1]{
\label{Fig.ad1}
\includegraphics[width=0.3\linewidth]{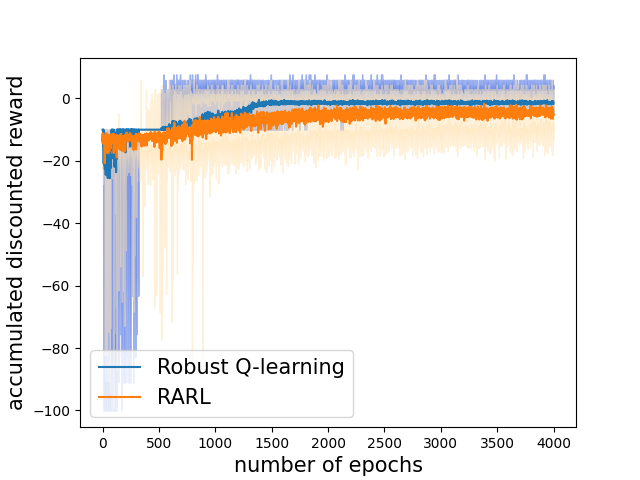}}
\subfigure[ p=0.05, R=0.2]{
\label{Fig.ad2}
\includegraphics[width=0.3\linewidth]{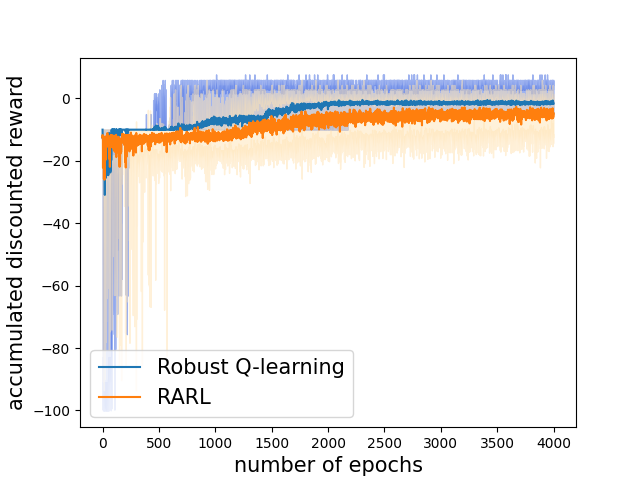}}
\subfigure[ p=0.1, R=0.2]{
\label{Fig.ad3}
\includegraphics[width=0.3\linewidth]{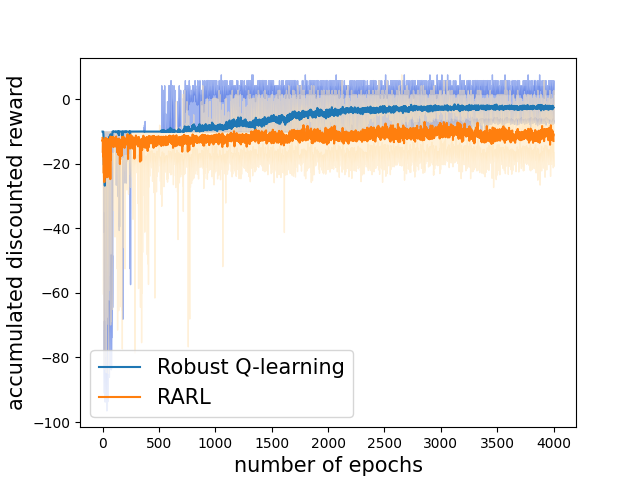}}
\captionsetup{font={normalsize}}
\caption{\textbf{Taxi-v3}: robust Q-learning v.s. RARL.}
\label{Fig.AD}
\end{figure}

 \section{Conclusion}
 \vspace{-0.3cm}
 In this paper,  we develop a novel approach for solving model-free robust RL problems with model uncertainty. Our algorithms can be implemented in an online and incremental fashion, do not require additional memory than their non-robust counterparts. We theoretically proved the convergence of our algorithms under no additional assumption on the discount factor, and further characterized their finite-time error bounds, which match with their non-robust counterparts (within a constant factor). Our approach can be readily extended to robustify TD, SARSA and other GTD algorithms. \textbf{Limitations: }It is also of future interest to investigate robustness to reward uncertainty, and other types of uncertainty sets, e.g., ones defined by KL divergence, Wasserstein distance and total variation. 
 \textbf{Negative societal impact:}
To the best of the authors' knowledge, this study does not have any potential negative impact on the society.

\section{Acknowledgment}
The work of Y. Wang and S. Zou was supported by the National Science Foundation under Grants CCF-2106560 and CCF- 2007783.
\newpage

\bibliography{robust}

\newpage
\appendix
{\Large\textbf{Supplementary Materials}}

\section{Proof of Theorem \ref{thm:Qasyconv}: Asymptotic Convergence of Robust Q-Learning}
In this section we show that the robust Q-learning converges exactly to the optimal robust Q function $Q^*$.
Recall that the optimal robust Q function $Q^*$ is the solution to the robust Bellman operator $\mathbf  T$:
\begin{align}
    Q^*(s,a)=c(s,a)+\gamma\sigma_{\mathcal{P}^a_s}((\min_{a\in\mca} Q^*(s_1,a),\min_{a\in\mca} Q^*(s_2,a),...,\min_{a\in\mca} Q^*(s_{|\mcs|},a))^\top).
\end{align}
%where 
%\begin{align}
%    V^*(s)=\min_{a\in\mca} Q^*(s,a).
%\end{align}
It can be shown that the estimated update is an unbiased estimation of $\mathbf T$. More specifically,
\begin{align}\label{eq:exp=0}
    \mathbf TQ(s,a)&=c(s,a)+\gamma \sigma_{\mathcal{P}^a_s}(V)\nn\\
    &=c(s,a)+\gamma (1-R) (p^a_s)^\top V +R \max_{s'} V(s')\nn\\
    &=c(s,a)+\gamma (1-R) \sum_{s'}(p^a_{s,s'})  V(s') +R \max_{s'} V(s')\nn\\
    &=c(s,a)+\gamma   \sum_{s'}p^a_{s,s'} \left((1-R)(\mathbbm{1}_{s'})^\top V + R \max_{q} q^\top V \right),
\end{align}
which is the expectation of the estimated update in line 5 of Algorithm \ref{alg:1}. 
\subsection{Robust Bellman operator is a contraction}
It was shown in \citep{iyengar2005robust, roy2017reinforcement} that the robust Bellman operator is a contraction. Here, for completeness, we include the proof for our R-contamination uncertainty set. More specifically,
\begin{align}
    &|\mathbf TQ(s,a)-\mathbf TQ'(s,a)|\nn\\
    &=|c(s,a)+\gamma \sigma_{\mathcal{P}^a_s}(V) -c(s,a)-\gamma \sigma_{\mathcal{P}^a_s}(V')|\nn\\
    &=\gamma|\sigma_{\mathcal{P}^a_s}(V) - \sigma_{\mathcal{P}^a_s}(V')|\nn\\
    &=\gamma |\max_{q} \left\{(1-R)(p^a_s)^\top V+ R q^\top V \right\}-\max_{q'} \left\{(1-R)(p^a_s)^\top V'+ R q'^\top V' \right\} |\nn\\
    &=\gamma \left|\sum_{s'\in\mcs} p^a_{s,s'} \left( (1-R)V(s')\right)+ R \max_{s'} V(s')-\sum_{s'\in\mcs} p^a_{s,s'} \left((1-R)V'(s')\right)- R \max_{s'} V'(s')\right|\nn\\
    &=\gamma\left| \sum_{s'\in\mcs} p^a_{s,s'}(1-R) \left(V(s')-V'(s')\right)+R(\max_{s'} V(s')-\max_{s'} V'(s'))\right|\nn\\    
    &\leq \gamma\left| \sum_{s'\in\mcs} p^a_{s,s'}(1-R) \left(\min_a Q(s',a)-\min_b Q'(s',b)\right)\right|+\gamma R(|\max_{s'} V(s')-\max_{s'} V'(s')|) \nn\\ 
    &\leq \gamma\sum_{s'\in\mcs} p^a_{s,s'}(1-R)\left|  \left(\min_a Q(s',a)-\min_b Q'(s',b)\right)\right|+\gamma R \max_s |(V(s)-V'(s))|\nn\\
    &\overset{(a)}{\leq} \gamma\sum_{s'\in\mcs} p^a_{s,s'}(1-R)\|Q-Q'\|_{\infty}+\gamma R\|Q-Q'\|_{\infty}\nn\\
    &\leq \gamma \|Q-Q'\|_{\infty},
\end{align}
where $(a)$ can be shown as below. Assume that $a_1=\arg\min_a Q(s',a)$ and $b_1=\arg\min_a Q'(s',a)$. Then if $Q(s',a_1)>Q'(s',b_1)$, then
\begin{align}
    |Q(s',a_1)-Q'(s',b_1)|=Q(s',a_1)-Q'(s',b_1)\leq Q(s',b_1)-Q'(s',b_1)\leq \|Q-Q'\|_{\infty}.
\end{align}
Similarly, it can also be shown when $Q(s',a_1)\leq Q'(s',b_1)$, and hence the inequality $(a)$ holds. 
\subsection{Asymptotic Convergence of Robust Q-Leaning}
With the definition of $\mathbf T$, the update \eqref{eq:Qupdate} of robust Q-learning can be re-written as a stochastic approximation:
\begin{align}\label{eq:QSA}
    Q_{t+1}(s_t,a_t)=(1-\alpha_t)Q_t(s_t,a_t)+\alpha_t(\mathbf TQ_t(s_t,a_t)+\eta_t(s_t,a_t,s_{t+1})),
\end{align}
where the noise term is 
\begin{align}
    \eta_t(s_t,a_t,s_{t+1})=c(s_t,a_t)+\gamma R \max_s V_t(s) +\gamma (1-R) V_{t}(s_{t+1})-\mathbf TQ_t(s_t,a_t).
\end{align}
From \eqref{eq:exp=0}, we have that 
\begin{align}
   \mE[ \eta_t(S_t,A_t,S_{t+1})|S_t=s_t,A_t=a_t]=0.
\end{align}
The variance can be bounded by
\begin{align}
    \mE[ (\eta_t(S_t,A_t,S_{t+1}))^2]&\leq \gamma^2(1-R)^2(\max_{s,a} Q_t^2(s,a)),
\end{align}
where the last inequality is from $V_t(s_{t+1})\leq \max_s V_t(s) \leq \max_{s,a} Q_t(s,a)$. 
%=\gamma^2 \mE\left[\left(\sigma_{\hat{\mathcal{P}}_{s_t,s_{t+1}}^{a_t}}(V_t)-\mE_{S'}\left[\sigma_{\hat{\mathcal{P}}_{s_t,S_{t+1}}^{a_t}}(V_t)\right]\right)^2 \right]\nn\\
Thus the noise term $\eta_t$ has zero mean and bounded variance. From \citep{borkar2000ode}, we know that the stochastic approximation \eqref{eq:QSA} converges to the fixed point of $\mathbf T$, i.e., $Q^*$. Hence we showed that robust Q-learning converges to optimal optimal robust Q function $Q^*$ with probability 1.
\section{Finite-Time Analysis of Robust Q-Learning}
In this section, we develop the finite-time analysis of the Algorithm \ref{alg:1}.
\subsection{Notations}
We first introduce some notations.  For a vector $v=(v_1,v_2,...,v_n)$, we denote the entry wise absolute value $(|v_1|,...,|v_n|)$ by $|v|$.
For a sample $O_t=(s_t,a_t,s_{t+1})$, define $\Lambda_{t+1} \in \mathbb{R}^{|\mcs||\mca|\times|\mcs||\mca|}$ as 
\begin{equation}
  \Lambda_{t+1}((s,a),(s',a'))=\left\{
           \begin{array}{lr}
            \alpha, & \text{ if }(s,a)=(s',a')=(s_t,a_t),\\
             0, &  \text{ otherwise.}
             \end{array}
\right.
\end{equation}
Also we define the sample transition matrix $ P_{t+1}\in\mathbb{R}^{|\mcs||\mca|\times|\mcs|}$ as
\begin{equation}
  P_{t+1}((s,a),s')=\left\{
            \begin{array}{lr}
            1, & \text{ if }(s,a,s')=O_t,\\
             0, &  \text{ otherwise.}
             \end{array}
\right.
\end{equation}
We also define the transition kernel matrix $ P\in\mathbb{R}^{|\mcs||\mca|\times|\mcs|}$ as
\begin{equation}
  P((s,a),s')=p^a_{s,s'}.
\end{equation}
We use $Q_t \in\mathbb{R}^{|\mcs||\mca|}$ and $V_t \in\mathbb{R}^{|\mcs|}$ to denote the vectors of value functions. Denote the cost function $c\in\mathbb{R}^{|\mcs||\mca|}$ with entry $c(s,a)$ being the cost received at $(s,a)$. Then the update of robust Q-learning \eqref{eq:Qupdate} can be written in matrix form as 
\begin{align}\label{eq:matrixQupdate}
    Q_t=(I-\Lambda_t)Q_{t-1}+\Lambda_t\Big(c+\gamma (1-R)P_tV_{t-1}+\gamma R\max_{s\in\mcs}V_{t-1}(s) P_t \textbf{1}\Big),
\end{align}
where $\textbf{1}$ denotes the vector $(1,1,1,...,1)^\top \in\mathbb{R}^{|\mcs|}$. The robust Bellman equation can be written as
\begin{align}\label{eq:Bellman}
    Q^*=c+\gamma(1-R)PV^*+\gamma R\max_{s\in\mcs}V^*(s)P\mathbf{1}. 
\end{align}

\subsection{Analysis}
Define $\psi_t=Q_t-Q^*$, then by \eqref{eq:matrixQupdate} and \eqref{eq:Bellman}, we have that 
\begin{align}\label{eq:recursion1}
    \psi_t&=Q_t-Q^*\nn\\
    &=(I-\Lambda_t)Q_{t-1}+\Lambda_t(c+\gamma (1-R)P_tV_{t-1}+\gamma R\max_{s\in\mcs}V_{t-1}(s) P_t \textbf{1})-Q^*\nn\\
    &=(I-\Lambda_t)(Q_{t-1}-Q^*)+\Lambda_t(c+\gamma (1-R)P_tV_{t-1}+\gamma R\max_{s\in\mcs}V_{t-1}(s) P_t \textbf{1}-Q^*)\nn\\
    &=(I-\Lambda_t)\psi_{t-1}+\Lambda_t(\gamma (1-R)P_tV_{t-1}+\gamma R\max_{s\in\mcs}V_{t-1}(s) P_t \textbf{1}-\gamma(1-R)PV^*\nn\\
    &\quad-\gamma R\max_{s\in\mcs}V^*(s)P\mathbf{1})\nn\\
    &=(I-\Lambda_t)\psi_{t-1}+\gamma(1-R)\Lambda_t\underbrace{(P_tV_{t-1}-PV^*)}_{k_1}\nn\\
    &\quad+\gamma R \Lambda_t\underbrace{(\max_{s\in\mcs}V_{t-1}(s)P_t\mathbf{1}-\max_{s\in\mcs}V^*(s)P\mathbf{1}))}_{k_2}.
\end{align}
The term $k_1$ can be written as
\begin{align}
    P_tV_{t-1}-PV^*= P_tV_{t-1}-P_tV^*+P_tV^*-PV^*=P_t(V_{t-1}-V^*)+(P_t-P)V^*.
\end{align}
Similarly, we have that 
\begin{align}
    k_2=\Big(\max_{s\in\mcs}V_{t-1}(s)-\max_{s\in\mcs}V^*(s)\Big)P_t\mathbf{1}+\max_{s\in\mcs}V^*(s)(P_t-P)\mathbf{1}. 
\end{align}
Hence \eqref{eq:recursion1} can be  written as 
\begin{align}\label{eq:recursion2}
    \psi_t&=Q_t-Q^*\nn\\
    &=(I-\Lambda_t)\psi_{t-1}+\gamma(1-R)\Lambda_t(P_t(V_{t-1}-V^*)+(P_t-P)V^*)\nn\\
    &\quad+\gamma R \Lambda_t\left(\left(\max_{s\in\mcs}V_{t-1}(s)-\max_{s\in\mcs}V^*(s)\right)P_t\mathbf{1}+\max_{s\in\mcs}V^*(s)(P_t-P)\mathbf{1}\right)\nn\\
    &=(I-\Lambda_t)\psi_{t-1}+\left(\gamma(1-R)\Lambda_t(P_t-P)V^*) +\gamma R \Lambda_t(\max_{s\in\mcs}V^*(s)(P_t-P)\mathbf{1})\right)\nn\\
    &\quad+\left(\gamma(1-R)\Lambda_t(P_t(V_{t-1}-V^*))+\gamma R \Lambda_t\left(\left(\max_{s\in\mcs}V_{t-1}(s)-\max_{s\in\mcs}V^*(s)\right)P_t\mathbf{1}\right)\right).
\end{align}
By  applying \eqref{eq:recursion2} recursively, we have that 
\begin{small}
\begin{align}
    \psi_t&=\underbrace{\prod^t_{j=1}(I-\Lambda_j)\psi_0}_{k_{1,t}}\nn\\
    &\quad+\underbrace{\gamma(1-R)\sum^t_{i=1}\prod^t_{j=i+1} (I-\Lambda_j)\Lambda_i(P_i-P)V^*+\gamma R \sum^t_{i=1}\prod^t_{j=i+1} (I-\Lambda_j)\Lambda_i\max_{s\in\mcs}V^*(s)(P_i-P)\mathbf{1} }_{k_{2,t}}\nn\\
    &\quad+\underbrace{\gamma(1-R)\sum^t_{i=1}\prod^t_{j=i+1} (I-\Lambda_j)\Lambda_i P_i(V_{i-1}-V^*)+\gamma R \sum^t_{i=1}\prod^t_{j=i+1} (I-\Lambda_j)\Lambda_i (\max_{s\in\mcs}V_{i-1}(s)-\max_{s\in\mcs}V^*(s)) P_i\mathbf{1}}_{k_{3,t}}.
\end{align}
\end{small}
We then bound terms $k_{i,t}$ separately.

\begin{Lemma}
Define $t_{\text{frame}}=\frac{443 t_{\text{mix}}}{\mu_{\min}}\log\frac{4|\mcs||\mca|T}{\delta}$. Then with probability at least $1-\delta$, for any $(s,a)\in\mcs\times\mca$ and any $t\geq t_{\text{frame}}$, $k_{1,t}$ can be bounded as 
\begin{align}
    |k_{1,t}|\leq (1-\alpha)^{\frac{t\mu_{\min}}{2}}\|\psi_0\|_{\infty}\mathbf{1};
\end{align}
and for $t<t_{\text{frame}}$, 
\begin{align}
|k_{1,t}|\leq \|\psi_0\|_{\infty}\mathbf{1}. 
\end{align}
\end{Lemma}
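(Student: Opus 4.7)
The key structural observation is that each $\Lambda_j$ is a diagonal matrix that has the single nonzero diagonal entry $\alpha$ at the coordinate corresponding to the state–action pair $(s_{j-1},a_{j-1})$ observed at step $j$, and zeros elsewhere. Consequently, $I-\Lambda_j$ is diagonal with entry $1-\alpha$ at $(s_{j-1},a_{j-1})$ and $1$ at every other coordinate. Since diagonal matrices commute, the product $\prod_{j=1}^{t}(I-\Lambda_j)$ is itself diagonal, and its $(s,a)$ entry equals $(1-\alpha)^{N_t(s,a)}$, where $N_t(s,a)\triangleq \sum_{j=1}^{t}\mathbbm{1}\{(s_{j-1},a_{j-1})=(s,a)\}$ is the number of visits to $(s,a)$ in the first $t$ steps. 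Therefore, entry-wise,
\begin{align}
|k_{1,t}(s,a)| \;=\; (1-\alpha)^{N_t(s,a)}\,|\psi_0(s,a)| \;\leq\; (1-\alpha)^{N_t(s,a)} \|\psi_0\|_{\infty}.
\end{align}

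The second bound, valid for all $t$ (in particular for $t<t_{\text{frame}}$), follows immediately from $N_t(s,a)\geq 0$ and $1-\alpha\leq 1$. Thus the only real content is the first bound, which reduces to showing that, with probability at least $1-\delta$,
\begin{align}
N_t(s,a) \;\geq\; \tfrac{t\mu_{\min}}{2}\qquad\text{for every }(s,a)\in\mcs\times\mca\text{ and every }t\geq t_{\text{frame}}.
\end{align}
This is a uniform lower bound on the empirical visitation counts of the Markov chain on $\mcs\times\mca$ induced by $\pi_b$ and $P$, whose stationary distribution $\mu_{\pi_b}$ has minimum mass $\mu_{\min}$.

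To obtain such a bound I would invoke a standard Bernstein-type concentration inequality for additive functionals of a uniformly ergodic Markov chain (e.g., Paulin 2015, or the empirical-frequency lemma used in Li et al.\ 2020, from which the numerical constant $443$ and the exact form of $t_{\text{frame}}$ are borrowed). Concretely, for a fixed $(s,a)$ the indicator function $f(s',a')=\mathbbm{1}\{(s',a')=(s,a)\}$ has stationary mean $\mu_{\pi_b}(s,a)\geq \mu_{\min}$, and the chain's mixing time is $t_{\text{mix}}$; the concentration bound then gives, for any $t$,
\begin{align}
\mathbb{P}\!\left(N_t(s,a) < \tfrac{t\mu_{\pi_b}(s,a)}{2}\right) \;\leq\; 2\exp\!\left(-\,\tfrac{c\, t\,\mu_{\min}}{t_{\text{mix}}}\right)
\end{align}
for an absolute constant $c$. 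Choosing $t\geq t_{\text{frame}}=\tfrac{443\,t_{\text{mix}}}{\mu_{\min}}\log\tfrac{4|\mcs||\mca|T}{\delta}$ makes this probability at most $\tfrac{\delta}{2|\mcs||\mca|T}$. A union bound over all $(s,a)\in\mcs\times\mca$ and over all $t_{\text{frame}}\leq t\leq T$ yields the desired uniform lower bound with probability at least $1-\delta$. Combining this with the entry-wise bound above gives $(1-\alpha)^{N_t(s,a)}\leq (1-\alpha)^{t\mu_{\min}/2}$, which is exactly the claimed inequality.

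The main obstacle is the uniformity over $(s,a)$ and over $t$: a naive per-$t$ union bound is already baked in, but the concentration inequality must be strong enough to absorb the logarithmic factor $\log\tfrac{4|\mcs||\mca|T}{\delta}$ without deteriorating the constant in the exponent. This is exactly why a mixing-time–dependent Bernstein inequality is needed rather than a Hoeffding bound on i.i.d.\ data; picking the right inequality (and verifying that $\mu_{\min}$ enters linearly, not quadratically) is the delicate step and also the reason the specific numerical constant $443$ appears. Once this visitation lemma is in place, the rest is purely algebraic from the diagonal structure of the product.
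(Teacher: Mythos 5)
Your proposal is correct and follows essentially the same route as the paper: the entry-wise identity $k_{1,t}(s,a)=(1-\alpha)^{N_t(s,a)}\psi_0(s,a)$ from the diagonal structure, the trivial bound for $t<t_{\text{frame}}$, and the visitation-count lower bound $N_t(s,a)\geq t\mu_{\min}/2$ with the union over $(s,a)$ and $t\leq T$ absorbed into the $\log\frac{4|\mcs||\mca|T}{\delta}$ factor. The only cosmetic difference is that the paper invokes this concentration step as a black box (Lemma 5 of Li et al.\ 2020, which is where the constant $443$ comes from), whereas you sketch how that lemma would be derived from a mixing-time-dependent Bernstein inequality.
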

\begin{proof}
First note that the $(s,a)$-entry of $k_{1,t}$ can be written as
\begin{align}\label{eq:k1t}
    k_{1,t}(s,a)=(1-\alpha)^{K_t(s,a)} \psi_0(s,a),
\end{align}
where $K_t(s,a)$ denotes the times that the sample trajectory visits $(s,a)$ before the time step $t$. We introduce a lemma from \citep{li2020sample} first:
\begin{Lemma}(Lemma 5 \citep{li2020sample})
For a time-homogeneous and uniformly ergodic Markov chain with state space $\mathcal{X}$ and any $0<\delta<1$, if $t\geq \frac{443 \tmix}{\mu_{\min}}\log \frac{|\mathcal{X}|}{\delta}$, then for any $y\in\mathcal{X}$, 
\begin{align}
    \mathbb{P}_{X_1=y} \left \{ \exists x\in\mathcal{X}: \sum^t_{j=1} \mathbbm{1}{X_j=x}\leq \frac{t\mu(x)}{2} \right\}\leq \delta,
\end{align}
where $\tmix=\min \left\{ t: \max_{x\in\mathcal{X}} d_{\text{TV}}(\mu,P^t(\cdot|x) ) \leq \frac{1}{4}\right\}$; $\mu$ is the stationary distribution of the Markov chain, and $\mu_{\min}\triangleq\min_{x\in\mathcal{X}} \mu(x)$.
\end{Lemma}
From this lemma, we know that  for any $(s,a)\in\mcs\times\mca$ and any $t\geq \frac{443 t_{\text{mix}}}{\mu_{\min}}\log\frac{4|\mcs||\mca|T}{\delta}$, we have that 
\begin{align}
    K_t(s,a)\geq \frac{t\mu_{\min}}{2},
\end{align}
with probability at least $1-\delta$. 

Thus \eqref{eq:k1t} can be bounded as
\begin{align}
    |k_{1,t}(s,a)|\leq (1-\alpha)^{\frac{t\mu_{\min}}{2}} |\psi_0(s,a)|
\end{align}
with probability at least $1-\delta$ for any $(s,a)\in\mcs\times\mca$ and any $t\geq \frac{443 t_{\text{mix}}}{\mu_{\min}}\log\frac{4|\mcs||\mca|T}{\delta}$, which shows the claim.

For $t<t_{\text{frame}}$, the bound is obvious by noting that $\|I-\Lambda_j\|\leq 1$.  
\end{proof}

\begin{Lemma}
There exists some constant $\hat{c}$, such that for any $\delta<1$ and any $t\leq T$ that satisfies $0<\alpha\log\frac{|\mcs||\mca|T}{\delta}<1$, with probability at least $1-\frac{\delta}{|\mcs||\mca|T}$, 
\begin{align}
    \left|k_{2,t}\right|
    \leq 5\gamma\hat{c}\sqrt{\alpha\log\frac{T|\mcs||\mca|}{\delta} } \|V^*(s)\|_{\infty}\mathbf{1},
\end{align}
\end{Lemma}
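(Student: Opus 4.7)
The term $k_{2,t}$ is a weighted sum of innovations $(P_i-P)V^*$ and $(P_i-P)\mathbf 1$, and the key observation is that, coordinate by coordinate, this is a martingale in disguise with an exponentially decaying weight sequence.  My plan is to fix an entry $(s,a)\in\mcs\times\mca$, extract the corresponding scalar sum, and then apply a Freedman- or Azuma--Hoeffding-style concentration inequality together with a union bound over $(s,a)$.

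\textbf{Step 1: Write out the $(s,a)$-entry.}  First I would note that $\Lambda_i$ is nonzero only at index $(s_{i-1},a_{i-1})$, while $\prod_{j=i+1}^{t}(I-\Lambda_j)$ is diagonal and acts on the $(s,a)$ coordinate by multiplication with $(1-\alpha)^{N_{i,t}(s,a)}$, where $N_{i,t}(s,a)$ counts visits to $(s,a)$ in the window $[i+1,t]$.  Hence
\[
k_{2,t}(s,a)=\gamma\alpha\sum_{i: (s_{i-1},a_{i-1})=(s,a)}(1-\alpha)^{N_{i,t}(s,a)}X_i,
\]
with $X_i=(1-R)\bigl(V^*(s_i)-(PV^*)(s,a)\bigr)+R\,\|V^*\|_\infty\bigl(\mathbf 1(s_i)-(P\mathbf 1)(s,a)\bigr)$.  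Each $X_i$ has zero conditional mean given $\mathcal F_{i-1}$ (the history up to choosing $(s_{i-1},a_{i-1})$) and is bounded by a constant multiple of $\|V^*\|_\infty\le 1/(1-\gamma)$.

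\textbf{Step 2: Martingale concentration with geometric weights.}  The summands form a martingale difference sequence with respect to $\{\mathcal F_i\}$ with weights $w_i=\alpha(1-\alpha)^{N_{i,t}(s,a)}$.  The crucial elementary bound is
\[
\sum_{i}w_i^2 \le \alpha^2\sum_{k\ge 0}(1-\alpha)^{2k}=\frac{\alpha^2}{1-(1-\alpha)^2}\le \alpha,
\]
so the predictable quadratic variation is of order $\alpha\|V^*\|_\infty^2$.  Applying a Freedman inequality (or simply Azuma--Hoeffding, since each $w_iX_i$ is bounded by $\alpha\|V^*\|_\infty$), I obtain that, with probability at least $1-\delta/(T|\mcs||\mca|)$,
\[
|k_{2,t}(s,a)|\le C\gamma\sqrt{\alpha\log(T|\mcs||\mca|/\delta)}\,\|V^*\|_\infty
\]
for a universal constant $C$.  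The hypothesis $\alpha\log(T|\mcs||\mca|/\delta)<1$ is exactly what makes the Bennett/Freedman deviation term linear in the standard deviation rather than in the range.

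\textbf{Step 3: Uniformize and combine.}  The same argument applies to both sub-sums (the $V^*$ part and the $\|V^*\|_\infty\mathbf 1$ part), so by a union bound over those two pieces and a crude factor-of-five constant absorbing the $(1-R)+R$ split and the martingale constant, I obtain the stated inequality coordinatewise for the fixed $(s,a)$ with probability $1-\delta/(T|\mcs||\mca|)$, which then yields the vector inequality in the statement.

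\textbf{Main obstacle.}  The main subtlety is that the samples come from a Markov chain rather than i.i.d.\ draws, so the innovations $X_i$ need to be re-expressed with respect to a filtration that makes them a genuine martingale difference sequence.  Conditioning on $\mathcal F_{i-1}\cup\sigma(s_{i-1},a_{i-1})$ does the job because $s_i\sim p^{a_{i-1}}_{s_{i-1}}$ is independent of earlier noise, so $\mathbb E[X_i\mid \mathcal F_{i-1},s_{i-1},a_{i-1}]=0$.  A secondary technical point is controlling $N_{i,t}(s,a)$ so that $\sum_i w_i^2$ really is bounded by $\alpha$; this is immediate from the simple change of variables $k=N_{i,t}(s,a)$ and does not require the mixing-time lemma used for $k_{1,t}$.
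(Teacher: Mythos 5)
Your decomposition in Step 1 matches the paper's (the paper writes $k_{2,t}(s,a)=\gamma(1-R)\sum_{i=1}^{K_t(s,a)}\alpha(1-\alpha)^{K_t(s,a)-i}(P_{t_i+1}(s,a)-P(s,a))V^*+\gamma R\sum_i\alpha(1-\alpha)^{K_t(s,a)-i}(P_{t_i+1}(s,a)-P(s,a))w^*$ with $w^*=\max_s V^*(s)\mathbf 1$), and your bound $\sum_i w_i^2\le\alpha$ is the right variance proxy. But Step 2 has a genuine gap: the weights $w_i=\alpha(1-\alpha)^{N_{i,t}(s,a)}$ are \emph{anticipating} — $N_{i,t}(s,a)$ counts visits to $(s,a)$ \emph{after} time $i$, so $w_i$ is not $\mathcal F_{i-1}$-measurable. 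Worse, $w_i$ is correlated with $X_i$ itself, since the next state drawn at time $i$ influences how often the chain returns to $(s,a)$ in $[i+1,t]$; hence $\mathbb E[w_iX_i\mid\mathcal F_{i-1}]\neq 0$ in general, the weighted sum is not a martingale with respect to $\{\mathcal F_i\}$, and Azuma--Hoeffding/Freedman cannot be invoked as you state. Your "main obstacle" paragraph only addresses the conditional mean of $X_i$, not the predictability of the weights, which is the actual difficulty in asynchronous Q-learning analyses.

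The paper avoids this by a different route: by the strong Markov property, the next-state draws $\{P_{t_i+1}(s,a)\}_i$ observed at successive visits to $(s,a)$ are i.i.d.\ samples from $p^a_s$ (the fact cited from Equation (101) of \citep{li2020sample}); for each \emph{fixed} visit count $K$ the weights $\alpha(1-\alpha)^{K-i}$ are deterministic, Bernstein's inequality applies with variance $\alpha\,\mathrm{Var}_{P^a_s}[V^*]$ and range $D=2\alpha\|V^*\|_\infty$, and the randomness of $K_t(s,a)\le T$ is absorbed by the union bound that produces the $\log(T|\mcs||\mca|/\delta)$ factor and the $1-\delta/(|\mcs||\mca|T)$ probability. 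Your argument could be repaired in exactly this way (condition on or union over the value of $K$, or set up a fixed-$K$/reverse-indexed martingale), but as written the concentration step fails. Two smaller points: the second sub-sum is in fact identically zero, since $(P_{t_i+1}(s,a)-P(s,a))\mathbf 1=0$ for probability vectors — the paper uses this (zero variance), and your treating it as a generic bounded martingale term costs only constants; and the condition $\alpha\log(T|\mcs||\mca|/\delta)<1$ enters to make the range term $D\log(\cdot)$ dominated by the $\sqrt{\alpha\log(\cdot)}$ term, as in the paper's last display.
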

\begin{proof}
Recall that 
\begin{align}
    k_{2,t}=\gamma(1-R)\sum^t_{i=1}\prod^t_{j=i+1} (I-\Lambda_j)\Lambda_i(P_i-P)V^*+\gamma R \sum^t_{i=1}\prod^t_{j=i+1} (I-\Lambda_j)\Lambda_i(P_i-P)w^*,
\end{align}
where $w^*\triangleq\max_{s\in\mcs}V^*(s)\mathbf{1}$. Then the $(s,a)$-th entry of $k_{2,t}$ can be written as
\begin{align}
    k_{2,t}(s,a)&=\gamma(1-R)\sum^{K_t(s,a)}_{i=1} \alpha(1-\alpha)^{K_t(s,a)-i}(P_{t_i+1}(s,a)-P(s,a))V^*\nn\\
    &\quad+\gamma R\sum^{K_t(s,a)}_{i=1} \alpha(1-\alpha)^{K_t(s,a)-i}(P_{t_i+1}(s,a)-P(s,a))w^*,
\end{align}
where $t_i(s,a)$ is the time step when the trajectory visits $(s,a)$ for the $i$-th time.
We define $\text{Var}_P(V)\in\mathbb{R}^{|\mcs||\mca|}$ being a vector, where $\text{Var}_P(V) (s,a)=\sum_{s'\in \mcs} p^a_{s,s'}(V(s')^2)-(\sum_{s'\in \mcs} p^a_{s,s'}V(s'))^2\triangleq \text{Var}_{P^a_s}[V]$ for any $V\in\mathbb{R}^{|\mcs|}$.

From Section E.1 in \citep{li2020sample}, we know that 
\begin{align}
    \text{Var}\left[ \sum^{K}_{i=1} \alpha(1-\alpha)^{K-i}(P_{t_i+1}(s,a)-P(s,a))V^*\right]=\alpha \text{Var}_{P^a_s}[V^*]\triangleq\sigma_K^2
\end{align}
for some constant $\sigma_K^2$ and any $K\leq T$. Moreover, note that 
\begin{align}
    &\text{Var}\left[\sum^{K}_{i=1} \alpha(1-\alpha)^{K-i}(P_{t_i+1}(s,a)-P(s,a))w^*\right]\nn\\
    &\overset{(a)}{=}\sum^{K}_{i=1} \alpha^2(1-\alpha)^{2K-2i}\text{Var}[(P_{t_i+1}(s,a)-P(s,a))w^*]\nn\\
    &\overset{(b)}{=}\sum^{K}_{i=1} \alpha^2(1-\alpha)^{2K-2i}\text{Var}[\max_{s}V^*(s) ((P_{t_i+1}(s,a)-P(s,a))\mathbf{1})]\nn\\
    &=0,
\end{align}
where equation $(a)$ is due to the fact that $\left\{ P_{t_{1}+1}(s,a), P_{t_{2}+1}(s,a),...,P_{t_{i}+1}(s,a) \right\}_{i\in \mathbb N}$ are independent (Equation $(101)$ in \citep{li2020sample}), $(b)$ is from the definition of $\omega^*$, and the last equation is because the sum of each entries of $P_{t_i+1}(s,a)-P(s,a)$ is $0$.

the last equality is due to the fact that every entries of $w^*$ are the same and hence $\text{Var}_{P^a_s}[w^*]=0$. 

Additionally, we have that 
\begin{align}
    \left\|\alpha(1-\alpha)^{K-i}(P_{t_i+1}(s,a)-P(s,a))V^*\right\|_{\infty} \leq 2\alpha \|V^*(s)\|_{\infty}\triangleq D,
\end{align}
where we denote the bound by $D$. Also,  
\begin{align}
    \left\| \alpha(1-\alpha)^{K-i}(P_{t_i+1}(s,a)-P(s,a))w^*\right\|_{\infty}\leq D.
\end{align}
Hence from the Bernstein inequality (\citep{li2020sample}), we have that 
\begin{align}
    &\left|k_{2,t}(s,a)\right|\nn\\
    &\leq \gamma (1-R) \hat{c}\left(\sqrt{\sigma_K^2 \log\left( \frac{T|\mcs||\mca|}{\delta}\right)}+ D\log\frac{T|\mcs||\mca|}{\delta} \right)+\gamma R \hat{c}\left(D\log\frac{T|\mcs||\mca|}{\delta} \right)\nn\\
    &\leq 5\gamma\hat{c}\sqrt{\alpha\log\frac{T|\mcs||\mca|}{\delta} } \|V^*(s)\|_{\infty},
\end{align}
for some constant $\hat{c}$  with probability at least $1-\frac{\delta}{|\mcs||\mca|T}$, and the last step is due to the fact that $\text{Var}_{P^a_s}[V^*]\leq  \|V^*\|_{\infty}^2$ and $\alpha\log\frac{|\mcs||\mca|T}{\delta}<1$. This hence completes the proof. 
\end{proof}

\begin{Lemma}
For any $t\geq T$, \begin{align}
    |k_{3,t}|\leq \gamma \sum^t_{i=1}\|\psi_{i-1}\|_{\infty}\prod^t_{j=i+1}(I-\Lambda_j)(\Lambda_i)\mathbf{1}. 
\end{align}
\end{Lemma}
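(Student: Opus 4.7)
The plan is to obtain the bound entry-wise by pulling the absolute value inside the sum, which is justified because every matrix factor on the left of $(V_{i-1}-V^*)$ or $\mathbf{1}$ is a non-negative diagonal matrix, and then to estimate each of the two summands by $\|\psi_{i-1}\|_{\infty}$ times $\Lambda_i\mathbf{1}$.

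First I would note that $\Lambda_j$ is a diagonal matrix with a single nonzero entry equal to $\alpha\in[0,1]$, so both $\Lambda_j$ and $I-\Lambda_j$ are non-negative diagonal matrices, and therefore so is $\prod_{j=i+1}^{t}(I-\Lambda_j)\Lambda_i$. Consequently, the entry-wise triangle inequality applied to $k_{3,t}$ yields
\begin{align*}
|k_{3,t}| \le \gamma(1-R)\sum_{i=1}^{t}\prod_{j=i+1}^{t}(I-\Lambda_j)\Lambda_i\,|P_i(V_{i-1}-V^*)| \\
+\gamma R\sum_{i=1}^{t}\prod_{j=i+1}^{t}(I-\Lambda_j)\Lambda_i\,\bigl|\max_{s}V_{i-1}(s)-\max_{s}V^*(s)\bigr|\,P_i\mathbf{1}.
\end{align*}

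Next I would use that $V_t(s)=\min_{a}Q_t(s,a)$ and the standard inequality $|\min_a f(a)-\min_a g(a)|\le \max_a|f(a)-g(a)|$ to get $\|V_{i-1}-V^*\|_{\infty}\le\|\psi_{i-1}\|_{\infty}$, and likewise $|\max_{s}V_{i-1}(s)-\max_{s}V^*(s)|\le\|V_{i-1}-V^*\|_{\infty}\le\|\psi_{i-1}\|_{\infty}$. I would then observe the key simplification coming from the definitions of $P_i$ and $\Lambda_i$: the row $(s,a)$ of $P_i$ is zero except when $(s,a)=(s_{i-1},a_{i-1})$, in which case it equals $\mathbbm{1}_{s_i}^\top$, so $P_i\mathbf{1}$ is the indicator of $(s_{i-1},a_{i-1})$, and the entry-wise product $\Lambda_i P_i\mathbf{1}=\Lambda_i\mathbf{1}$. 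The same reasoning gives $\Lambda_i|P_i(V_{i-1}-V^*)|\le\|\psi_{i-1}\|_{\infty}\Lambda_i\mathbf{1}$.

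Substituting these bounds back and factoring out $\|\psi_{i-1}\|_{\infty}$ (which is a scalar and thus commutes past the non-negative diagonal matrices), the two summands combine via $\gamma(1-R)+\gamma R=\gamma$ to yield exactly
\[
|k_{3,t}|\le \gamma\sum_{i=1}^{t}\|\psi_{i-1}\|_{\infty}\prod_{j=i+1}^{t}(I-\Lambda_j)\Lambda_i\,\mathbf{1},
\]
which is the claim. There is no real obstacle here; the only thing to be careful about is that the matrix factors remain non-negative (so that the entry-wise triangle inequality is valid), and that the single-nonzero-row structure of $P_i$ together with the single-nonzero-entry structure of $\Lambda_i$ is what allows the replacement of $\Lambda_i P_i\mathbf{1}$ by $\Lambda_i\mathbf{1}$ without any loss.
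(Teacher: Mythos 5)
Your proof is correct and follows essentially the same route as the paper: pull absolute values through the non-negative diagonal factors, bound $\|V_{i-1}-V^*\|_{\infty}$ and $|\max_s V_{i-1}(s)-\max_s V^*(s)|$ by $\|\psi_{i-1}\|_{\infty}$ via the min/max contraction inequalities, and combine $\gamma(1-R)+\gamma R=\gamma$. The only cosmetic difference is that you exploit the single-nonzero-row structure of $P_i$ to get $\Lambda_i P_i\mathbf{1}=\Lambda_i\mathbf{1}$ exactly, whereas the paper simply bounds $\|P_i(V_{i-1}-V^*)\|_{\infty}$ using that the row sums of $P_i$ are at most one; both observations serve the same purpose.
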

\begin{proof}
First note that for any $i$, 
\begin{align}
    \|P_i(V_{i-1}-V^*)\|_{\infty}\leq \|P_i\|_1 \|V_{i-1}-V^*\|_{\infty}=\|V_{i-1}-V^*\|_{\infty}\leq \|\psi_{i-1}\|_{\infty},
\end{align}
where the last inequality is from
\begin{align}
    &\|V_{i-1}-V^*\|_{\infty}=\max_{s} |V_{i-1}(s)-V^*(s)|=|V_{i-1}(s^*)-V^*(s^*)|\nn\\
    &=|\min_a Q_{i-1}(s^*,a)-\min_b Q^*(s^*,b)|\leq \|Q_{i-1}-Q^*\|_{\infty},
\end{align}
where $s^*=\arg\max |V_{i-1}(s)-V^*(s)|$.
Similarly,
\begin{align}
    \left\|(\max_{s\in\mcs}V_{i-1}(s)-\max_{s\in\mcs}V^*(s))P_i\mathbf{1}\right\|_{\infty}\leq |\max_{s\in\mcs}V_{i-1}(s)-\max_{s\in\mcs}V^*(s)|\leq \|\psi_{i-1}\|_{\infty},
\end{align}
where the last inequality is from $|\max_{s\in\mcs}V_{i-1}(s)-\max_{s\in\mcs}V^*(s)|\leq \| V_{i-1}-V^*\|_{\infty}\leq \|Q_{i-1}-Q^*\|_{\infty}$. 
Hence $K_{3,t}$ can be bounded as
\begin{align}
    |k_{3,t}|\leq \gamma \sum^t_{i=1}\|\psi_{i-1}\|_{\infty}\prod^t_{j=i+1}(I-\Lambda_j)(\Lambda_i)\mathbf{1}. 
\end{align}
\end{proof}

Now combine the bounds for terms $k_{1,t}, k_{2,t}$ and $k_{3,t}$, we have the bound on $\psi_t$ as follows. 

For $t<t_{\text{frame}}$, we have that
\begin{align}\label{eq:roughbound1}
    \|\psi_t\|_{\infty}&\leq \|\psi_0\|_{\infty}\mathbf{1}+ 5\gamma\hat{c}\sqrt{\alpha\log\frac{T|\mcs||\mca|}{\delta} } \|V^*(s)\|_{\infty}\mathbf{1}\nn\\
    &\quad+\gamma \sum^t_{i=1}\|\psi_{i-1}\|_{\infty}\prod^t_{j=i+1}(I-\Lambda_j)(\Lambda_i)\mathbf{1};
\end{align}
and for $t\geq t_{\text{frame}}$, we have that 
\begin{align}\label{eq:roughbound2}
    \|\psi_t\|_{\infty}&\leq  (1-\alpha)^{\frac{t\mu_{\min}}{2}}\|\psi_0\|_{\infty}\mathbf{1}+ 5\gamma\hat{c}\sqrt{\alpha\log\frac{T|\mcs||\mca|}{\delta} } \|V^*(s)\|_{\infty}\mathbf{1}\nn\\
    &\quad+\gamma \sum^t_{i=1}\|\psi_{i-1}\|_{\infty}\prod^t_{j=i+1}(I-\Lambda_j)(\Lambda_i)\mathbf{1}.
\end{align}
This bound exactly matches the bound in Equation (42) in \citep{li2020sample} and hence the remaining proof for Theorem \ref{thm:Qbound} can be obtained by following the proof in \citep{li2020sample}. We omit the remaining proof and only state the result.
\begin{theorem}
Define 
\begin{align}
    t_{\text{th}}&=\max\left\{ \frac{2\log\frac{1}{(1-\gamma)^2\epsilon}}{\alpha\mu_{\min}}, t_{\text{frame}}\right\};\\
    \mu_{\text{frame}}&=\frac{1}{2}\mu_{\min}t_{\text{frame}};\\
    \rho&=(1-\gamma)(1-(1-\alpha)^{\mu_{\text{frame}}}),
\end{align}
then for any $\delta<1$ and any $\epsilon<\frac{1}{1-\gamma}$, there exists a universal constant $\hat{c}$ and $c_0$ (determined by $\hat{c}$), such that with probability at least $1-6\delta$, the following bound holds for any $t<T$:
\begin{align}\label{eq:thm5}
    \|Q_t-Q^*\|_{\infty}\leq \frac{(1-\rho)^k\|Q_0-Q^*\|_{\infty}}{1-\gamma}+\frac{5\hat{c}\gamma}{1-\gamma}\sqrt{\alpha \log \frac{|\mcs||\mca|T}{\delta}}+\epsilon,
\end{align}
where $k=\max\left\{ 0, \lfloor \frac{t-t_{\text{th}}}{t_{\text{frame}}} \rfloor\right\}$, as long as 
\begin{align*}
    T\geq c_0\left(\frac{1}{\mu_{\text{min}}(1-\gamma)^5\epsilon^2}+\frac{\tmix}{\mu_{\text{min}}(1-\gamma)} \right)\log \left(\frac{T|\mcs||\mca|}{\delta}\right)\log\left( \frac{1}{\epsilon(1-\gamma)^2}\right),
\end{align*}
and step size $0<\alpha\log\left(\frac{|\mcs||\mca|T}{\delta} \right)<1$.
\end{theorem}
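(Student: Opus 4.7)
My plan is to mirror the asynchronous Q-learning analysis of \citep{li2020sample}, with the key new ingredient being the treatment of the $R$-contamination robust Bellman operator $\mathbf T$, whose $\gamma$-contraction in $\|\cdot\|_\infty$ I would first establish (separately from the asymptotic argument, since here I need a clean matrix-form inequality). Writing the update in matrix form, let $\Lambda_t$ be the diagonal selector matrix that places $\alpha$ at coordinate $(s_t,a_t)$, let $P_{t+1}$ be the empirical one-hot transition matrix, and let $P$ be the true kernel. The centerpiece of the $R$-contamination model is that
\begin{equation*}
\sigma_{\hat{\mathcal P}_t}(V_t) = (1-R)V_t(s_{t+1}) + R\max_{s} V_t(s),
\end{equation*}
so the noise in a single update splits cleanly into a ``stochastic part'' $(1-R)(P_{t+1}-P)V_t$ and a ``max part'' $R(\max_s V_t(s))(P_{t+1}-P)\mathbf 1$. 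The second piece is crucial because $\max_s V_t(s)$ is a scalar, so this increment has \emph{zero conditional variance} (its row sums are zero yet its coefficient vector is constant), which is what allows robustness to essentially come for free in the concentration step.

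Subtracting the robust Bellman equation $Q^* = c + \gamma(1-R)PV^* + \gamma R\max_s V^*(s)\,P\mathbf 1$ from the matrix-form update produces a recursion for $\psi_t := Q_t - Q^*$ of the form
\begin{equation*}
\psi_t = (I-\Lambda_t)\psi_{t-1} + \gamma\Lambda_t\bigl[(\text{noise on }V^*)\bigr] + \gamma\Lambda_t\bigl[(\text{bootstrapping on }V_{t-1}-V^*)\bigr].
\end{equation*}
Unrolling this recursion decomposes $\psi_t$ into three pieces: (i) $k_{1,t} = \prod_{j\le t}(I-\Lambda_j)\psi_0$, the geometric forgetting of the initialization; (ii) $k_{2,t}$, a martingale-like sum collecting the $(P_i-P)V^*$ and $\max_s V^*(s)(P_i-P)\mathbf 1$ terms; and (iii) $k_{3,t}$, the recursive ``bootstrapping'' term that depends on past errors $\|\psi_{i-1}\|_\infty$.

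For term (i), I would invoke a visitation lower bound for uniformly ergodic chains (Lemma 5 of \citep{li2020sample}): once $t \gtrsim (\tmix/\mu_{\min})\log(|\mcs||\mca|T/\delta)$, every $(s,a)$ has been visited at least $t\mu_{\min}/2$ times with high probability, so $|k_{1,t}| \le (1-\alpha)^{t\mu_{\min}/2}\|\psi_0\|_\infty\mathbf 1$. For term (ii), I would apply Freedman/Bernstein to the weighted sum along visits to each $(s,a)$; because the $\max$-part has zero variance as noted above, only the $(1-R)(P_i-P)V^*$ piece contributes variance $\alpha\,\mathrm{Var}_{p^a_s}[V^*]$, and both contributions have $\ell_\infty$ jump size $\mathcal O(\alpha\|V^*\|_\infty)$, yielding $|k_{2,t}| \lesssim \gamma\sqrt{\alpha\log(T|\mcs||\mca|/\delta)}\,\|V^*\|_\infty$. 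For term (iii), I would use $\|P_iV\|_\infty\le\|V\|_\infty$ and $|\max V_{i-1}-\max V^*|\le\|V_{i-1}-V^*\|_\infty\le\|\psi_{i-1}\|_\infty$ to get $|k_{3,t}|\le\gamma\sum_i\|\psi_{i-1}\|_\infty\prod_{j>i}(I-\Lambda_j)\Lambda_i\mathbf 1$.

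The main obstacle (and the only place where the proof really departs from the non-robust case in spirit) is then closing the bootstrap: the $\gamma$-factor in term (iii) must be parlayed into a strict contraction over a ``frame'' of $t_{\mathrm{frame}} \asymp (\tmix/\mu_{\min})\log(T|\mcs||\mca|/\delta)$ steps, during which each $(s,a)$ is visited $\Omega(\mu_{\min} t_{\mathrm{frame}})$ times, so that $\prod_{j}(I-\Lambda_j)$ contracts each coordinate by $(1-\alpha)^{\Omega(\mu_{\min}t_{\mathrm{frame}})}$. Combining this frame-wise contraction with an inductive bound $\|\psi_t\|_\infty \le \frac{(1-\rho)^k\|\psi_0\|_\infty}{1-\gamma} + \frac{5\hat c\gamma}{1-\gamma}\sqrt{\alpha\log(|\mcs||\mca|T/\delta)} + \epsilon$ (as displayed in \eqref{eq:thm5}) and plugging in the prescribed step size $\alpha \asymp \min(1/\tmix,\,\epsilon^2(1-\gamma)^4/\gamma^2)/\log(T|\mcs||\mca|/\delta)$ and sample size $T$, I would drive the first term below $\epsilon$ (via the $\Omega(\log(1/((1-\gamma)^2\epsilon)))$ frames implicit in the chosen $T$) and the statistical term below $\epsilon$, yielding the claimed $\|Q_T-Q^*\|_\infty\le 3\epsilon$ on the intersection of the visitation event and the Bernstein events (total failure probability $\le 6\delta$ after a union bound).
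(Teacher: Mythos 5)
Your proposal matches the paper's own argument essentially step for step: the same matrix-form recursion and $k_{1,t},k_{2,t},k_{3,t}$ decomposition, the visitation lemma (Lemma 5 of \citep{li2020sample}) for the initialization term, Bernstein with the key observation that the $R\max_s V^*(s)(P_i-P)\mathbf 1$ part contributes zero variance, the $\ell_\infty$ bound on the bootstrapping term, and the frame-wise contraction induction imported from \citep{li2020sample} to reach \eqref{eq:thm5}. The approach is correct and is the same as the paper's.
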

This theorem implies that the convergence rate of our robust Q-learning is as fast as the one of the vanilla Q-learning algorithm in \citep{li2020sample}(except the constant $\hat{c}$).

Finally, to show Theorem \ref{thm:Qbound}, we only need to show each term in \eqref{eq:thm5} is smaller than $\epsilon$. It can be verified that there exists constants $c_1$, such that if we choose the step size 
$\alpha=\frac{c_1}{\log \left(\frac{T|\mcs||\mca|}{\delta}\right)}\min \left(\frac{1}{\tmix},\frac{\epsilon^2(1-\gamma)^4}{\gamma^2} \right)$, then   $\frac{(1-\rho)^k\|Q_0-Q^*\|_{\infty}}{1-\gamma}\leq \epsilon$ (inequality (51) in \citep{li2020sample}) and $\frac{5\hat{c}\gamma}{1-\gamma}\sqrt{\alpha \log \frac{|\mcs||\mca|T}{\delta}}\leq \epsilon$ (by choosing suitable constant $c_1$). Then we have that $\|Q_t-Q^*\|_{\infty}\leq 3\epsilon$. This completes the proof.

\section{Proof of Theorem \ref{thm:lim}: Approximation of Smoothing Robust Bellman Operator}
In this section we prove Theorem \ref{thm:lim}.
First note that for any $x,y \in \mathbb{R}^{|\mcs|}$, 
\begin{align}
    |\text{LSE}(x)-\text{LSE}(y)|\leq \sup_{t\in [0,1]} \|\nabla \text{LSE}(tx+(1-t)y) \|_{1}\|x-y\|_{\infty}.
\end{align}
It can be shown that the gradient of LSE is softmax, i.e., 
\begin{align}
    \frac{\partial \text{LSE}(x)}{\partial x_i}=\frac{e^{\varrho x_i}}{\sum_{j}e^{\varrho x_j}}.
\end{align}
Hence 
\begin{align}
    \|\nabla \text{LSE}(z) \|_{1}=1, \forall z\in \mathbb{R}^{|\mcs|},
\end{align}
which implies that $ |\text{LSE}(x)-\text{LSE}(y)|\leq \|x-y\|_{\infty}.$ Hence for any $x,y\in\mathbb{R}^{|\mcs|}$, we have that
\begin{align}
|\hat{\mathbf T}_{\pi}x (s)-\hat{\mathbf T}_{\pi}y(s)|&=\left|\mathbb{E}_{A}\left[\gamma (1-R)\sum_{s'\in\mcs}p^A_{s,s'}(x(s')-y(s'))+\gamma R (\text{LSE}(x)-\text{LSE}(y)) \right]\right|\nn\\
&\leq \gamma (1-R) \|x-y\|_{\infty}+\gamma R \|x-y\|_{\infty}\nn\\
&\leq \gamma \|x-y\|_{\infty}.
\end{align}
This means that $\hat{\mathbf T}_{\pi}$ is a contraction, which implies that it has a fixed point.

We then show the limit of the fixed points of $\hat{\mathbf T}_{\pi}$ is the fixed point of ${\mathbf T}_{\pi}$
Note that $\mathbf T_{\pi}V_1=V_1$ and $\hat{\mathbf T}_{\pi}V_2=V_2$, hence 
\begin{align}
    &\|V_1-V_2\|_{\infty}\nn\\
    &=\|\mathbf T_{\pi}V_1-\hat{\mathbf T}_{\pi}V_2\|_{\infty}\nn\\
    &\leq \|\mathbf T_{\pi}V_1-\mathbf T_{\pi}V_2 \|_{\infty} +\|\mathbf T_{\pi}V_2-\hat{\mathbf T}_{\pi}V_2\|_{\infty}\nn\\
    &=\max_{s} \bigg|\mE_{\pi}\bigg[\gamma\left(1-R\right)\sum_{s'}p^A_{s,s'}V_1\left(s'\right)+\gamma R \max_{s'}V_1\left(s'\right)\nn\\
    &\quad-\gamma\left(1-R\right)\sum_{s'}p^A_{s,s'}V_2\left(s'\right)-\gamma R \max_{s'}V_2\left(s'\right)\bigg]\bigg|\nn\\
    &\quad + \max_{s}\left|\mE_{\pi}\left[\gamma R \left(\max_{s'}V_2\left(s'\right)-\text{LSE}(V_2)\right)\right]\right|\nn\\
    &{\leq} \max_{s}\mE_{\pi}\left[\left |\gamma \left(1-R\right) \sum_{s'} p^A_{s,s'}\left(V_1\left(s'\right)-V_2\left(s'\right)\right) \right|+ \left|\gamma R \left(\max_{s'}V_1\left(s'\right)-\max_{s'}V_2\left(s'\right)\right) \right| \right]\nn\\
    &\quad + \max_{s}\left|\mE_{\pi}\left[\gamma R \left(\max_{s'}V_2\left(s'\right)-\text{LSE}(V_2)\right)\right]\right|\nn\\
    &\overset{(a)}{\leq} \max_{s}\gamma |V_1\left(s\right)-V_2\left(s\right)|+ \left|\mE_{\pi}\left[\gamma R \left(\max_{s'}V_2\left(s'\right)-\text{LSE}(V_2)\right)\right]\right|\nn\\
    &\leq \gamma \|V_1-V_2\|_{\infty}+\gamma R \frac{\log |\mcs|}{\varrho},
\end{align}
where $(a)$ is from $|V_1(s')-V_2(s')|\leq \max_{s} |V_1(s)-V_2(s)|=\|V_1-V_2\|_{\infty}$ and $|\max_{s'}V_1(s')-\max_{s'}V_2(s')|\leq \|V_1-V_2\|_{\infty}$, and the last inequality is from $\text{LSE}(V)-\max V =\frac{\log (\sum_s e^{\varrho V(s)})-\log e^{\varrho \max V}}{\varrho}=\frac{1}{\varrho} \log \frac{\sum_s e^{\varrho V(s)}}{ e^{\varrho \max V}}=\frac{1}{\varrho} \log {\sum_s e^{\varrho V(s)-\varrho \max V}}\leq \frac{\log |\mcs|}{\varrho}$.
Hence this completes the proof.

\section{Proof of Theorem \ref{thm:TDC}: Finite-Time Analysis of Robust TDC with Linear Function Approximation}
In this section we develop the finite-time analysis of the robust TDC algorithm. In the following proofs, $\|v\|$ denotes the $l_2$ norm if $v$ is a vector; and $\|A\|$ denotes the operator norm if $A$ is a matrix. 

For the convenience of proof, we add a projection step to the algorithm, i.e., we let 
\begin{align}
     \theta_{t+1}&  \leftarrow  \mathbf \Pi_K \left(\theta_{t}+\alpha\left(\delta_t(\theta_t)\phi_t-\gamma\bigg( (1-R)\phi_{t+1}+ R \sum\limits_{s\in\mcs} \left(\frac{e^{\varrho V_{\theta}(s)}\phi_s}{\sum_{j\in\mcs}e^{\varrho V_{\theta}(j)}}\right)\bigg)\phi_t^\top\omega_t \right)\right),\nn\\
	 \omega_{t+1}&\leftarrow \mathbf\Pi_K\left(\omega_{t}+\beta(\delta_{t}(\theta_{t})-\phi_{t}^\top  \omega_{t})\phi_{t} \right),
\end{align}
for some constant $K$. 
We note that recently there are several works \citep{srikant2019,xu2021sample,kaledin2020finite} on finite-time analysis of RL algorithms that do not need the projection. However, a direct generalization of their approach does not necessarily work in our case. Specifically, the problem in \citep{srikant2019} is for one time scale \textit{linear} stochastic approximation.
and doesn't need to consider the effect of the $\omega_t$ introduced, also their work highly depends on the bound of the update functions of $\theta_t$ (see inequality (18) in \citep{srikant2019}). 
The parameter $\theta_t$ in \citep{srikant2019} is bounded using itself at a previous timestep by taking advantage of the fact that the update of $\theta$ is linear. However, in our problem, the update is not linear in $\theta$, and our update rule is two time-scale. The approach in \citep{kaledin2020finite} transforms the original two time-scale updates into two asymptotically independent updates via a linear mapping, which is however challenging for our non-linear updates. 
Some other work, e.g., \citep{xu2021sample}, gets around this issue by imposing additional assumptions on the function class. Specifically, it is assumed that $V_\theta$ (non-linear function approximation) is bounded for all $\theta$. For the linear function approximation setting considered in this paper, this assumption is equivalent to the assumption of a finite $\theta$, which is guaranteed by the projection step in this paper.

\subsection{Lipschitz Smoothness}
In this section, we first show that $\nabla J(\theta)$ is Lipschitz. We begin with an important lemma. 
\begin{Lemma}\label{lemma:delta}
For any $(s,a,s')\in\mcs\times\mca\times\mcs$, both $\delta_{s,a,s'}(\theta)$ and $\nabla \delta_{s,a,s'}(\theta)$ are bounded and Lipschitz, i.e., for any $\theta$ and $\theta'$,
\begin{align}
    |\delta_{s,a,s'}(\theta)|&\leq c_{\max}+\gamma R (K+\frac{\log|\mcs|}{\varrho})+(1+\gamma) K\triangleq C_{\delta},\\
    \|\delta_{s,a,s'}(\theta)-\delta_{s,a,s'}(\theta') \|&\leq (1+\gamma) \|\theta-\theta'\|\triangleq L_{\delta}\|\theta-\theta' \|,\\
    \|\nabla \delta_{s,a,s'}(\theta)- \nabla \delta_{s,a,s'}(\theta')\| &\leq 2\gamma R\varrho\|\theta-\theta'\|\triangleq L'_{\delta}\|\theta-\theta' \|.
\end{align}
\end{Lemma}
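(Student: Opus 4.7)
\medskip

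\noindent\textbf{Proof proposal.} My plan is to exploit the projection $\theta_t \in \{\theta : \|\theta\| \leq K\}$ together with the assumption $\|\phi_s\| \leq 1$, which jointly imply the pointwise bound $|V_\theta(s)| = |\phi_s^\top \theta| \leq K$ for every $s$. Combined with the standard inequality $\max_s V_\theta(s) \leq \text{LSE}(V_\theta) \leq \max_s V_\theta(s) + \frac{\log|\mcs|}{\varrho}$ that was already used in the proof of Theorem~\ref{thm:lim}, this immediately yields $|\text{LSE}(V_\theta)| \leq K + \frac{\log|\mcs|}{\varrho}$, and the uniform bound on $|\delta_{s,a,s'}(\theta)|$ follows by a single application of the triangle inequality to the defining expression $c(s,a) + \gamma(1-R)V_\theta(s') + \gamma R \, \text{LSE}(V_\theta) - V_\theta(s)$, using Assumption~2.

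For the Lipschitz constant of $\delta_{s,a,s'}$, I would subtract the expressions for $\theta$ and $\theta'$ and estimate each piece. The linear terms $V_\theta(s') - V_{\theta'}(s')$ and $V_\theta(s) - V_{\theta'}(s)$ are bounded by $\|\theta-\theta'\|$ via Cauchy--Schwarz and $\|\phi_\cdot\|\leq 1$. The LSE term is handled by the inequality $|\text{LSE}(x) - \text{LSE}(y)| \leq \|x-y\|_\infty$ established in the proof of Theorem~\ref{thm:lim} (since $\|\nabla \text{LSE}\|_1 = 1$), applied with $x = V_\theta = \Phi\theta$ and $y = V_{\theta'} = \Phi\theta'$ to obtain $\|V_\theta - V_{\theta'}\|_\infty \leq \max_s \|\phi_s\| \|\theta-\theta'\| \leq \|\theta-\theta'\|$. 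Summing with the weights $\gamma(1-R)$, $\gamma R$, and $1$ produces the stated $L_\delta = 1+\gamma$.

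The main obstacle is the Lipschitz constant of the gradient, because this is the only place the smoothing parameter $\varrho$ enters. The key observation is that $\nabla_\theta \delta_{s,a,s'}(\theta) = \gamma(1-R)\phi_{s'} - \phi_s + \gamma R \nabla_\theta \text{LSE}(V_\theta)$, so the constant-in-$\theta$ pieces cancel in the difference, leaving only $\gamma R(\nabla \text{LSE}(V_\theta) - \nabla \text{LSE}(V_{\theta'}))$. A direct calculation gives
\begin{equation*}
\nabla_\theta \text{LSE}(V_\theta) = \sum_{s \in \mcs} \frac{e^{\varrho V_\theta(s)}}{\sum_{j\in\mcs} e^{\varrho V_\theta(j)}}\, \phi_s,
\end{equation*}
and differentiating once more one sees that the Hessian equals $\varrho$ times the covariance of $\phi_S$ under the softmax distribution on $\mcs$. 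I would bound the spectral norm of this covariance by $\max_s \|\phi_s\|^2 \leq 1$ (using that $\text{Cov}(X) \preceq \mathbb{E}[XX^\top]$ and that $\|\mathbb{E}[XX^\top]\| \leq \mathbb{E}[\|X\|^2]$ when $X$ is bounded), which yields $\|\nabla^2 \text{LSE}(V_\theta)\| \leq \varrho$, and hence, by the mean value inequality, $\|\nabla \text{LSE}(V_\theta) - \nabla \text{LSE}(V_{\theta'})\| \leq \varrho \|\theta-\theta'\|$. Multiplying by $\gamma R$ and absorbing constants gives the claimed $L_\delta' = 2\gamma R \varrho$ (the factor $2$ comes either from a slack in the covariance estimate or from bounding $\|\text{softmax}(V_\theta)-\text{softmax}(V_{\theta'})\|_1$ directly via a $2\varrho$-Lipschitz estimate of softmax; either route suffices).
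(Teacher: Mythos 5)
Your proposal is correct and follows essentially the same route as the paper: bound each term of $\delta_{s,a,s'}(\theta)$ using the projection radius $K$, $\|\phi_s\|\le 1$, and the LSE--max gap $\frac{\log|\mcs|}{\varrho}$; obtain $L_\delta=1+\gamma$ from the gradient (equivalently, the $1$-Lipschitzness of $\text{LSE}$ in $\|\cdot\|_\infty$); and obtain $L'_\delta$ by bounding the Hessian of $\text{LSE}$, which is $\varrho$ times a softmax covariance of the features. Your covariance estimate $\|\nabla^2 \text{LSE}(V_\theta)\|\le \varrho$ is in fact slightly sharper than the paper's triangle-inequality bound $2\gamma R\varrho$ on $\nabla^2\delta$, so the stated constant holds with slack.
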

\begin{proof}
% The proof contains three parts.

\textbf{1. $\delta$ is bounded:}

Recall that 
\begin{align}
    \delta_{s,a,s'}(\theta)=c(s,a)+\gamma (1-R) V_{\theta}(s')+\gamma R \frac{\log (\sum_{j\in \mcs}e^{\varrho \theta^\top\phi_j})}{\varrho}-V_{\theta}(s).
\end{align}
First we have that 
\begin{align}
    |\delta_{s,a,s'}(\theta)|&\leq c_{\max}+\gamma (1-R) K+\gamma R \frac{\log |\mcs|e^{K\varrho}}{\varrho}+\gamma R K +K \nn\\
    &=c_{\max}+\gamma R (K+\frac{\log|\mcs|}{\varrho})+(1+\gamma) K. 
\end{align}
\textbf{2. $\delta$ is Lipschitz:}

The Lipschitz smoothness of $\delta_{s,a,s'}$ can be showed by finding the bound of $\nabla \delta_{s,a,s'}$. We first recall that 
\begin{align}
    \nabla  \delta_{s,a,s'}(\theta)=\gamma (1-R) \phi_{s'}+\gamma R \frac{\sum_i e^{\varrho \theta^\top \phi_i}\phi_i}{\sum_j e^{\varrho \theta^\top \phi_j}}-\phi_s.
\end{align}
\begin{comment}
Note that 
\begin{align}
    \nabla{p}_{\theta}(s)&=\nabla \left( \frac{e^{\varrho \theta^\top \phi_s}}{\sum_{s'} e^{\varrho \theta^\top \phi_{s'}}}\right)\nn\\
    &=\varrho \frac{\sum_{s'} \left( e^{\varrho \theta^\top \phi_{s'}}e^{\varrho \theta^\top\phi_s}\phi_s\right)-\sum_{s'}\left( e^{\varrho \theta^\top\phi_s}e^{\varrho \theta^\top\phi_{s'}}\phi_{s'}\right)}{(\sum_{s'} e^{\varrho \theta^\top \phi_{s'}})^2}\nn\\
    &=\varrho \frac{\sum_{s'} \left( e^{\varrho \theta^\top \phi_{s'}+\varrho \theta^\top\phi_s}(\phi_s-\phi_{s'})\right)}{(\sum_{s'} e^{\varrho \theta^\top \phi_{s'}})^2}.
\end{align}
Thus we have that 
\begin{align}\label{eq:vhatp}
    \left\|\sum_{s\in \mcs} V_{\theta}(s)\nabla{p}_{\theta}(s)\right\|&=\left\| \sum_{s\in\mcs} (\nabla{p}_{\theta}(s)\phi_s^\top) \theta \right\|\nn\\
    &\leq K \left\| \sum_{s\in\mcs} (\nabla{p}_{\theta}(s) \phi_s^\top) \right\|\nn\\
    &\leq \varrho K \sum_{s\in\mcs} \left\| \frac{\sum_{s'} \left( e^{\varrho \theta^\top \phi_{s'}+\varrho \theta^\top\phi_s}(\phi_s-\phi_{s'}) \phi_s^\top\right)}{(\sum_{s'} e^{\varrho \theta^\top \phi_{s'}})^2} \right\|\nn\\
    &\leq 2\varrho K \frac{\sum_s\sum_{s'} \left( e^{\varrho \theta^\top \phi_{s'}+\varrho \theta^\top\phi_s} \right)}{(\sum_{s'} e^{\varrho \theta^\top \phi_{s'}})^2} \nn\\
    &=2\varrho K. 
\end{align}
\end{comment}
Hence
\begin{align}
    \|\nabla  \delta_{s,a,s'}(\theta)\|\leq \gamma (1-R)+1+\gamma R =1+\gamma.
\end{align}

\textbf{3. $\nabla\delta$ is Lipschitz:}

Finally we need to verify the Lipschitz smoothness of $\nabla \delta_{s,a,s'}(\theta)$, which can be implied from the bound of $\nabla^2 \delta_{s,a,s'}(\theta)$. First we have that 
\begin{align}\label{eq:nndelta}
    \nabla^2 \delta_{s,a,s'}(\theta)=\gamma R\varrho\frac{\sum_{i,j}e^{\varrho \theta^\top (\phi_i+\phi_j)}\phi_i^\top\phi_i-\sum_{i,j}e^{\varrho \theta^\top (\phi_i+\phi_j)}\phi_i^\top\phi_j}{(\sum_j e^{\varrho \theta^\top\phi_j})^2}\leq 2\gamma R\varrho.
\end{align}
\end{proof}
With this lemma, we then show that $\nabla J(\theta)$ is Lipschitz as follows. 
\begin{Lemma}\label{lemma:Jlsmooth}
For any $\theta$ and $\theta'$, we have that
\begin{align}
    \|\nabla J(\theta)-\nabla J(\theta')\|\leq 2\left(\frac{L_{\delta}^2}{\lambda}+\frac{C_{\delta}L'_{\delta}}{\lambda}\right)\|\theta-\theta'\|\triangleq L_J \|\theta-\theta'\|.
\end{align}
\end{Lemma}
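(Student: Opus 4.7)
The plan is to exploit the factorization of $\nabla J(\theta)$ derived in the paper and reduce the Lipschitz claim to a product-rule argument, with each factor controlled by Lemma \ref{lemma:delta} together with Assumption on the smallest eigenvalue $\lambda$ of $C$.

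First, I would rewrite
\begin{align*}
\tfrac{1}{2}\nabla J(\theta) = A(\theta)\, C^{-1}\, b(\theta), \quad A(\theta)\triangleq \mathbb{E}_{\mu_\pi}\bigl[\nabla\delta_{S,A,S'}(\theta)\,\phi_S^\top\bigr], \quad b(\theta)\triangleq \mathbb{E}_{\mu_\pi}\bigl[\delta_{S,A,S'}(\theta)\,\phi_S\bigr],
\end{align*}
so that the task reduces to bounding $\|A(\theta)C^{-1}b(\theta)-A(\theta')C^{-1}b(\theta')\|$. Using the add-and-subtract trick $A(\theta)C^{-1}b(\theta)-A(\theta')C^{-1}b(\theta')=(A(\theta)-A(\theta'))C^{-1}b(\theta)+A(\theta')C^{-1}(b(\theta)-b(\theta'))$ and the operator-norm submultiplicativity, I obtain
\begin{align*}
\|\tfrac{1}{2}\nabla J(\theta)-\tfrac{1}{2}\nabla J(\theta')\| \leq \|A(\theta)-A(\theta')\|\,\|C^{-1}\|\,\|b(\theta)\| + \|A(\theta')\|\,\|C^{-1}\|\,\|b(\theta)-b(\theta')\|.
\end{align*}

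Next, I would bound each factor by invoking Lemma \ref{lemma:delta} and the feature assumption $\|\phi_s\|\leq 1$. Specifically: $\|b(\theta)\|\leq C_\delta$ from the pointwise bound on $\delta$; $\|b(\theta)-b(\theta')\|\leq L_\delta\|\theta-\theta'\|$ from the Lipschitz bound on $\delta$ combined with $\|\phi_S\|\leq 1$; $\|A(\theta')\|\leq L_\delta$ since $\|\nabla\delta\|\leq 1+\gamma=L_\delta$ and hence $\|\nabla\delta_{s,a,s'}(\theta')\phi_s^\top\|\leq L_\delta$ by rank-one operator norm; and $\|A(\theta)-A(\theta')\|\leq L'_\delta\|\theta-\theta'\|$ from the Lipschitz bound on $\nabla\delta$. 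Finally, the problem-solvability assumption gives $\|C^{-1}\|\leq 1/\lambda$. Assembling these yields
\begin{align*}
\|\nabla J(\theta)-\nabla J(\theta')\| \leq 2\!\left(\frac{C_\delta L'_\delta}{\lambda} + \frac{L_\delta^2}{\lambda}\right)\!\|\theta-\theta'\|,
\end{align*}
which is exactly $L_J$ as stated.

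The argument is essentially a two-line product-rule computation, so I do not anticipate a genuine obstacle; the only subtlety is being careful with the operator norm of the matrix-valued map $A(\theta)$ — in particular, that $\|\nabla\delta_{s,a,s'}(\theta)\phi_s^\top\|$ is the product of the two Euclidean norms (since the matrix is rank one), and that taking expectation under $\mu_\pi$ preserves both the norm bound and the Lipschitz constant by Jensen's inequality. Once these minor points are handled, the proof is complete by direct substitution of the constants established in Lemma \ref{lemma:delta}.
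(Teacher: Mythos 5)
Your proposal is correct and follows essentially the same route as the paper: decompose $\tfrac12\nabla J(\theta)=A(\theta)C^{-1}b(\theta)$, add and subtract, and control each factor's bound and Lipschitz constant via Lemma \ref{lemma:delta}, the feature bound, and $\|C^{-1}\|\leq 1/\lambda$. The only cosmetic difference is that the paper folds $C^{-1}$ into the second factor (bounding $C^{-1}b(\theta)$ directly by $C_\delta/\lambda$ and its Lipschitz constant by $L_\delta/\lambda$), which yields the identical constant $L_J$.
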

\begin{proof}
From Lemma \ref{lemma:delta}, we have that 
\begin{align}
    \left\|\mathbb{E}_{\mu_\pi}[(\nabla \delta_{S,A,S'}(\theta)) \phi_S]\right\|\leq L_{\delta}
\end{align}
and 
\begin{align}
     \left\|\mathbb{E}_{\mu_\pi}[(\nabla \delta_{S,A,S'}(\theta)) \phi_S]-\mathbb{E}_{\mu_\pi}[(\nabla \delta_{S,A,S'}(\theta')) \phi_S]\right\|\leq L'_{\delta}\|\theta-\theta'\|. 
\end{align}
Also it is easy to see that 
\begin{align}
    \|C^{-1}\mathbb{E}_{\mu_\pi}[\delta_{S,A,S'}(\theta)\phi_S]\|\leq \frac{1}{\lambda}C_{\delta},
\end{align}
and 
\begin{align}
    \|C^{-1}\mathbb{E}_{\mu_\pi}[\delta_{S,A,S'}(\theta)\phi_S]-C^{-1}\mathbb{E}_{\mu_\pi}[\delta_{S,A,S'}(\theta')\phi_S]\|\leq \frac{1}{\lambda} L_{\delta}\|\theta-\theta'\|. 
\end{align}
Thus this implies that 
\begin{align}
    \|\nabla J(\theta)-\nabla J(\theta')\|\leq 2\left(\frac{L_{\delta}^2}{\lambda}+\frac{C_{\delta}L'_{\delta}}{\lambda}\right)\|\theta-\theta'\|,
\end{align}
and hence completes the proof.
\end{proof}
\subsection{Tracking Error}
In this section, we study the bound of the tracking error, which is defined as $z_t=\omega_t-\omega(\theta_t)$.
First we can rewrite the fast time-scale update in Algorithm \ref{alg:1} as follows:
\begin{align}
    z_{t+1}&=\omega_{t+1}-\omega(\theta_{t+1})\nn\\
    &=\omega_t+\beta (\delta_t(\theta_t)-\phi_t^\top \omega_t)\phi_t-\omega(\theta_{t+1})\nn\\
    &=z_t+\omega(\theta_t)+\beta (\delta_t(\theta_t)-\phi_t^\top \omega_t)\phi_t-\omega(\theta_{t+1})\nn\\
    &=z_t+\omega(\theta_t)+\beta (\delta_t(\theta_t)-\phi_t^\top (z_t+\omega(\theta_t)))\phi_t-\omega(\theta_{t+1})\nn\\
    &=z_t+\omega(\theta_t)+\beta \delta_t(\theta_t)\phi_t-\beta \phi_t^\top z_t\phi_t-\beta \phi_t^\top \omega(\theta_t)\phi_t-\omega(\theta_{t+1})\nn\\
    &=z_t-\beta \phi_t\phi_t^\top z_t+ \beta ( \delta_t(\theta_t)\phi_t-\phi_t\phi_t^\top \omega(\theta_t))+\omega(\theta_t)-\omega(\theta_{t+1}).
\end{align}
Thus taking the norm of both sides implies that 
\begin{align}\label{eq:1}
    \|z_{t+1}\|^2&\overset{(a)}{\leq} \|z_t\|^2+3\beta^2 \|z_t\|^2+3\beta^2 \| \delta_t(\theta_t)\phi_t-\phi_t\phi_t^\top \omega(\theta_t)\|^2+3\| \omega(\theta_t)-\omega(\theta_{t+1})\|^2\nn\\
    &\quad+2\langle z_t, -\beta \phi_t\phi_t^\top z_t\rangle +2 \langle z_t, \beta ( \delta_t(\theta_t)\phi_t-\phi_t\phi_t^\top \omega(\theta_t))\rangle + 2\langle z_t, \omega(\theta_t)-\omega(\theta_{t+1})\rangle\nn\\
    &=\|z_t\|^2-2\beta z_t^\top C z_t +3\beta^2 \|z_t\|^2+3\beta^2 \| \delta_t(\theta_t)\phi_t-\phi_t\phi_t^\top \omega(\theta_t)\|^2+3\| \omega(\theta_t)-\omega(\theta_{t+1})\|^2\nn\\
    &\quad+2\beta\langle z_t, (C-\phi_t\phi_t^\top) z_t\rangle +2 \langle z_t, \beta ( \delta_t(\theta_t)\phi_t-\phi_t\phi_t^\top \omega(\theta_t))\rangle + 2\langle z_t, \omega(\theta_t)-\omega(\theta_{t+1})\rangle\nn\\
    &\overset{(b)}{\leq} (1+3\beta^2-2\beta\lambda) \|z_t\|^2+\beta^2 C_1+2\beta\langle z_t, (C-\phi_t\phi_t^\top) z_t\rangle+ 2\langle z_t, \omega(\theta_t)-\omega(\theta_{t+1})\rangle\nn\\
     &\quad +2 \langle z_t, \beta ( \delta_t(\theta_t)\phi_t-\phi_t\phi_t^\top \omega(\theta_t))\rangle,
\end{align}
where$(a)$ is from $\|x+y+z\|^2\leq 3\|x\|^2+3\|y\|^2+3\|z\|^2$ for any $x,y,z \in \mathbb R^N$, $(b)$ is from $z_t^\top C z_t\geq \lambda \|z_t\|^2$, and $C_1=3\left(C_{\delta}+\frac{C_{\delta}}{\lambda}\right)^2+3\left(C_{\delta}+(1+2R\varrho K)\frac{C_{\delta}}{\lambda}\right)^2$ is the upper bound of $3 \| \delta_t(\theta_t)\phi_t-\phi_t\phi_t^\top \omega(\theta_t)\|^2+\frac{3}{\beta^2}\| \omega(\theta_t)-\omega(\theta_{t+1})\|^2$.

Taking expectation on both sides and applying recursively \eqref{eq:1}, we obtain that 
\begin{align}\label{eq:trackingrecur}
    \mE[\|z_{t+1}\|^2] &\leq q^{t+1} \|z_0\|^2+2\sum^t_{j=0} q^{t-j} \beta \mE[f(z_j,O_j)]+ 2\sum^t_{j=0} q^{t-j} \beta \mE[g(z_j,\theta_j,O_j)]\nn\\
    &\quad+2\sum^t_{j=0} q^{t-j} \langle z_j, \omega(\theta_j)-\omega(\theta_{j+1})\rangle+ \beta^2C_1\sum^t_{j=0} q^{t-j},
\end{align}
where
\begin{align}\label{eq:fgdef}
    q&\triangleq 1+3\beta^2-2\beta\lambda,\nn\\
    f(z_j,O_j)&\triangleq \langle z_j, (C-\phi_j\phi_j^\top) z_j\rangle,\nn\\
    g(z_j,\theta_j,O_j) &\triangleq\langle z_j,   \delta_j(\theta_j)\phi_j-\phi_j\phi_j^\top \omega(\theta_j)\rangle.
\end{align}

To simplify notations, let
\begin{align}
    \theta_{t+1}\leftarrow \theta_t+\alpha G_t(\theta_t,\omega_t),\\
    \omega_{t+1}\leftarrow \omega_t+\beta H_t(\theta_t,\omega_t),
\end{align}
where $G_t(\theta,\omega)= \delta_t(\theta)\phi_t-\gamma\bigg( (1-R)\phi_{t+1}+ R \frac{\sum_i e^{\varrho \theta^\top \phi_i}\phi_i}{\sum_j e^{\varrho \theta^\top\phi_j}}\bigg)\phi_t^\top\omega   $, and 
$H_t(\theta,\omega)= (\delta_{t}(\theta_{t})-\phi_{t}^\top  \omega_{t})\phi_{t}$.

We have
\begin{align}\label{eq:Gbound}
    \|G_t(\theta,\omega)\|\leq C_{\delta}+K\gamma\triangleq C_G.
\end{align}
The upper bound of $H_t(\theta,\omega)$ is straightforward:
\begin{align}\label{eq:Hbound}
    \|H_t(\theta,\omega)\|\leq C_{\delta}+K\triangleq C_H.
\end{align}
With these two bounds we can then find the upper bound of the update of tracking error:
\begin{align}\label{eq:zupdatebound}
    \|z_{t+1}-z_t\|&\leq\|H_t(\theta_t,\omega_t)\|+\|\omega(\theta_{t+1})-\om(\theta_t)\|\nn\\
    &\overset{(a)}{\leq} \beta C_H+\alpha\frac{C_{\delta}}{\lambda}\|G_t(\theta_t,\om_t)\|\nn\\
    &\leq \beta C_H+\alpha\frac{C_{\delta}C_G}{\lambda},
\end{align}
where $(a)$ is from the Lipschitz of $\omega(\theta)$: $\|\om(\theta_{t+1})-\om(\theta_t)\|\leq \frac{L_{\delta}}{\lambda}\|\theta_{t+1}-\theta_t\|\leq \frac{\alpha L_{\delta}}{\lambda}\|G_t(\theta_t,\om_t)\|$.
Then for the Lipschitz smoothness of function $g$ in \eqref{eq:fgdef}, it is straightforward to see that 
\begin{align}\label{eq:glip}
    &|g(\theta,z,O_t)-g(\theta',z',O_t)|\nn\\
    &=\langle z,   \delta_j(\theta)\phi_j-\phi_j\phi_j^\top \omega(\theta)\rangle-\langle z',   \delta_j(\theta')\phi_j-\phi_j\phi_j^\top \omega(\theta')\rangle\nn\\
    &=\langle z,   \delta_j(\theta)\phi_j-\phi_j\phi_j^\top \omega(\theta)\rangle-\langle z,   \delta_j(\theta')\phi_j-\phi_j\phi_j^\top \omega(\theta')\rangle\nn\\
    &\quad+ \langle z,   \delta_j(\theta')\phi_j-\phi_j\phi_j^\top \omega(\theta')\rangle-\langle z',   \delta_j(\theta')\phi_j-\phi_j\phi_j^\top \omega(\theta')\rangle\nn\\
    &\leq K_zL_{\delta}\left(1+\frac{1}{\lambda}\right)\|\theta-\theta'\|+C_{\delta}\left(1+\frac{1}{\lambda}\right)\|z-z'\|,
\end{align}
where $K_z\triangleq K+\frac{C_{\delta}}{\lambda}$ being a rough bound on the track error. 
Also it can be shown that 
\begin{align}\label{eq:flip}
    |f(z,O_t)-f(z',O_t)|&=\langle z, (C-\phi_t\phi_t^\top) z \rangle -\langle z', (C-\phi_t\phi_t^\top) z' \rangle\nn\\
    &=\langle z, (C-\phi_t\phi_t^\top) z \rangle-\langle z, (C-\phi_t\phi_t^\top) z' \rangle\nn\\
    &\quad +\langle z, (C-\phi_t\phi_t^\top) z' \rangle-\langle z', (C-\phi_t\phi_t^\top) z' \rangle\nn\\
    &\leq 4K_z\|z-z'\|.
\end{align}
It is easy to see that 
\begin{align}\label{eq:G2lip}
    \|G_i(\theta,\om_1)-G_i(\theta,\om_2)\|\leq (\gamma+2\gamma R\varrho K)\|\om_1-\om_2\|. 
\end{align}

With these bounds and Lipschitz constants, the following two lemmas can be proved using the similar method of decoupling the Markovian noise in \citep{wang2020finite,bhandari2018finite,zou2019finite}.
\begin{Lemma}
Define $\tau_{\beta}=\min\left\{k: m\rho^k\leq \beta \right\}$. If $t<\tb$, then \begin{align}
    \mE[f(z_t,O_t)]\leq 4K_z^2;
\end{align} and if $t\geq \tb$, then \begin{align}
    \mE[f(z_t,O_t)]\leq m_f \beta+ m'_f \tb\beta,
\end{align} where $m_f=8K_z^2$ and $m'_f=8K_z\left( C_H+\frac{C_GC_{\delta}}{\lambda}\right)$.
\end{Lemma}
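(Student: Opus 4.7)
The lemma has two regimes, and both are handled by the standard Markovian-noise decoupling trick that pervades the finite-time RL literature (see e.g., \citep{bhandari2018finite,wang2020finite}). I will outline the approach without chasing constants; the constants in $m_f$ and $m_f'$ are routine and arise from how loosely one estimates the various bounding quantities below.

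\textbf{The easy regime $t<\tau_\beta$.} After the projection, $\|\omega_t\|\le K$; writing $\omega(\theta_t)=C^{-1}\mE_{\mu_\pi}[\delta_{S,A,S'}(\theta_t)\phi_S]$ and using Lemma~\ref{lemma:delta} together with $\lambda_{\min}(C)=\lambda$ gives $\|\omega(\theta_t)\|\le C_\delta/\lambda$, so that $\|z_t\|\le K_z$. Since $\|C-\phi_t\phi_t^\top\|\le\|C\|+\|\phi_t\phi_t^\top\|\le 2$, Cauchy--Schwarz yields $|f(z_t,O_t)|\le 2K_z^2\le 4K_z^2$ pointwise, which is the first claim.

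\textbf{The hard regime $t\ge\tau_\beta$.} The difficulty is that $z_t$ is built from the entire trajectory up to time $t$ and is therefore correlated with $\phi_t\phi_t^\top$, so one cannot simply invoke $\mE[\phi_t\phi_t^\top]=C$. The fix is to shift the reference point by $\tau_\beta$:
\begin{align*}
    \mE[f(z_t,O_t)] = \mE\bigl[f(z_t,O_t)-f(z_{t-\tau_\beta},O_t)\bigr] + \mE\bigl[f(z_{t-\tau_\beta},O_t)\bigr].
\end{align*}
The first piece is controlled by brute telescoping: summing the per-step bound \eqref{eq:zupdatebound} over $\tau_\beta$ steps and using $\alpha\le\beta$ (guaranteed by $a\ge b$ in Theorem~\ref{thm:TDC}) gives $\|z_t-z_{t-\tau_\beta}\|\le \tau_\beta\beta(C_H+C_\delta C_G/\lambda)$; feeding this into the Lipschitz estimate \eqref{eq:flip} contributes a term of order $K_z\tau_\beta\beta(C_H+C_\delta C_G/\lambda)$, which is the $m_f'\tau_\beta\beta$ piece. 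For the second piece, $z_{t-\tau_\beta}$ is measurable with respect to the natural filtration $\mathcal{F}_{t-\tau_\beta}$, so tower yields
\begin{align*}
    \mE\bigl[f(z_{t-\tau_\beta},O_t)\bigr] = \mE\Bigl[\bigl\langle z_{t-\tau_\beta},\, \bigl(C-\mE[\phi_{s_t}\phi_{s_t}^\top\mid\mathcal{F}_{t-\tau_\beta}]\bigr)\,z_{t-\tau_\beta}\bigr\rangle\Bigr].
\end{align*}
By the Markov property the inner conditional expectation depends only on $s_{t-\tau_\beta}$, and Assumption~\ref{ass:1} implies that the conditional law of $s_t$ lies within total variation $m\rho^{\tau_\beta}\le\beta$ of $\mu_\pi$ by the definition of $\tau_\beta$. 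Since $\|\phi_s\phi_s^\top\|_{\mathrm{op}}\le 1$, the standard ``TV-times-sup'' inequality gives $\|\mE[\phi_{s_t}\phi_{s_t}^\top\mid\mathcal{F}_{t-\tau_\beta}]-C\|_{\mathrm{op}}=\mathcal{O}(\beta)$, so this piece contributes $\mathcal{O}(K_z^2\beta)=m_f\beta$. Adding the two contributions gives the stated bound.

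\textbf{Main obstacle.} The proof is conceptually routine; the only bookkeeping subtlety is making the $\alpha$-contribution from \eqref{eq:zupdatebound} disappear into $\beta$ via the two-timescale ordering so that the bound reads cleanly as $m_f\beta+m_f'\tau_\beta\beta$ rather than carrying a separate $\alpha$ term. The non-routine ingredient in the broader argument---handling the non-linearity introduced by the $\mathrm{LSE}$ term in $\delta$---has already been absorbed into the Lipschitz and boundedness estimates of Lemma~\ref{lemma:delta}, so nothing further is required here.
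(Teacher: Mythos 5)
Your proposal is correct and follows essentially the same route the paper intends: the crude bound $\|z_t\|\le K_z$, $\|C-\phi_t\phi_t^\top\|\le 2$ for $t<\tb$, and for $t\ge\tb$ the standard decoupling $\mE[f(z_t,O_t)]=\mE[f(z_t,O_t)-f(z_{t-\tb},O_t)]+\mE[f(z_{t-\tb},O_t)]$, handled via the Lipschitz estimate \eqref{eq:flip} with the per-step bound \eqref{eq:zupdatebound} (absorbing $\alpha$ into $\beta$) and via geometric uniform ergodicity with $m\rho^{\tb}\le\beta$, exactly as the paper carries out explicitly for the analogous Lemma~\ref{lemma:h}. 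Your conditional-expectation/operator-norm phrasing of the mixing step is equivalent to the paper's auxiliary-sample argument using $\mE_{\hat O}[f(z,\hat O)]=0$, and the constants $m_f$, $m'_f$ come out the same up to the paper's loose factors.
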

A similar result on $\mE[g(\theta_t,z_t,O_t)]$ can also be implied:   
\begin{Lemma}
If $t<\tb$, then \begin{align}
    \mE[g(\theta_t,z_t,O_t)]\leq 2K_z\left(1+\frac{1}{\lambda}\right)C_{\delta};
\end{align} and if $t\geq \tb$, then \begin{align}
    \mE[g(\theta_t,z_t,O_t)]\leq m_g \beta+ m'_g \tb\beta,
\end{align} where $m_g=4K_z\left( 1+\frac{1}{\lambda} \right)C_{\delta}$ and $m'_g=4K_zL_{\delta}C_G\left( 1+\frac{1}{\lambda} \right)+C_{\delta}\left( 1+\frac{1}{\lambda} \right)\left( C_H+\frac{C_GC_{\delta}}{\lambda} \right)$.
\end{Lemma}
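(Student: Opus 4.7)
The plan is to mirror the Markovian-noise decoupling argument used for the lemma on $f$, exploiting the specific structure that makes $g$ mean-zero at stationarity. The key structural observation I would use is that because $\omega(\theta)=C^{-1}\mathbb{E}_{\mu_\pi}[\delta_{S,A,S'}(\theta)\phi_S]$, the stationary expectation
\begin{align*}
\bar g(\theta,z)\triangleq \mathbb{E}_{O\sim\mu_\pi}[g(z,\theta,O)] = \langle z,\mathbb{E}_{\mu_\pi}[\delta_{S,A,S'}(\theta)\phi_S]-C\omega(\theta)\rangle = 0
\end{align*}
vanishes identically in $(\theta,z)$. Thus bounding $\mathbb{E}[g(\theta_t,z_t,O_t)]$ amounts to controlling its deviation from this zero baseline.

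For $t<\tau_\beta$ I would simply use boundedness: $\|z_t\|\leq K_z$, $\|\delta_t(\theta_t)\phi_t\|\leq C_\delta$, and $\|\phi_t\phi_t^\top\omega(\theta_t)\|\leq C_\delta/\lambda$, which by Cauchy--Schwarz gives $|g(\theta_t,z_t,O_t)|\leq K_z C_\delta(1+1/\lambda)$; doubling yields the claimed $2K_z(1+1/\lambda)C_\delta$, the slack absorbing the triangle-inequality constants re-used below.

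For $t\geq\tau_\beta$ the core decomposition I would use is
\begin{align*}
\mathbb{E}[g(\theta_t,z_t,O_t)]
&= \underbrace{\mathbb{E}[g(\theta_t,z_t,O_t)-g(\theta_{t-\tau_\beta},z_{t-\tau_\beta},O_t)]}_{\text{(I)}}\\
&\quad + \underbrace{\mathbb{E}[g(\theta_{t-\tau_\beta},z_{t-\tau_\beta},O_t)-\bar g(\theta_{t-\tau_\beta},z_{t-\tau_\beta})]}_{\text{(II)}}.
\end{align*}
For (I) I would apply the Lipschitz bound \eqref{eq:glip} together with the drift estimates $\|\theta_t-\theta_{t-\tau_\beta}\|\leq \tau_\beta\alpha C_G$ (from $\|G_t\|\leq C_G$ in \eqref{eq:Gbound}) and $\|z_t-z_{t-\tau_\beta}\|\leq \tau_\beta(\beta C_H+\alpha C_\delta C_G/\lambda)$ (from \eqref{eq:zupdatebound}); using $\alpha\leq\beta$ (guaranteed by the step-size regime of Theorem~\ref{thm:TDC}), both drifts scale like $\tau_\beta\beta$ times constants, which collect precisely into the $m'_g\tau_\beta\beta$ term. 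For (II), I would condition on $\mathcal F_{t-\tau_\beta}$, treat $(\theta_{t-\tau_\beta},z_{t-\tau_\beta})$ as frozen measurable variables, and invoke Assumption~\ref{ass:1}: the choice $\tau_\beta=\min\{k:m\rho^k\leq\beta\}$ makes the TV distance between the law of $O_t$ given $\mathcal F_{t-\tau_\beta}$ and $\mu_\pi$ at most $\beta$, and the uniform bound established in the first regime converts this TV gap into an $O(\beta)$ bound, delivering the $m_g\beta$ piece.

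The step I expect to be most delicate is (II): although $\bar g$ is pointwise zero, the conditional expectation of $g(\theta_{t-\tau_\beta},z_{t-\tau_\beta},O_t)$ given $\mathcal F_{t-\tau_\beta}$ is not exactly $\bar g$ but a convolution of $g$ against the $\tau_\beta$-step transition kernel starting from $(s_{t-\tau_\beta},a_{t-\tau_\beta})$, so the passage from TV-closeness of marginals to an expectation bound requires a sup-norm control on $g(\theta_{t-\tau_\beta},z_{t-\tau_\beta},\cdot)$ that is uniform in the ($\mathcal F_{t-\tau_\beta}$-measurable) random arguments $(\theta_{t-\tau_\beta},z_{t-\tau_\beta})$. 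This uniformity is exactly what the boundedness argument of the first regime provides. Once (I) and (II) are combined and constants collected, reading off $m_g$ and $m'_g$ is routine bookkeeping.
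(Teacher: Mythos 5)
Your proposal is correct and follows essentially the same route the paper intends: the paper omits an explicit proof of this lemma, citing the standard Markovian-noise decoupling method, and its written-out proof of the analogous lemma for $h(\theta,z,O_t)$ uses exactly your structure (zero stationary mean of $g$ via $\omega(\theta)=C^{-1}\mathbb{E}_{\mu_\pi}[\delta_{S,A,S'}(\theta)\phi_S]$, crude boundedness for $t<\tau_\beta$, Lipschitz-plus-drift control via \eqref{eq:glip}, \eqref{eq:Gbound}, \eqref{eq:zupdatebound} for the shift to $t-\tau_\beta$, and uniform ergodicity with $m\rho^{\tau_\beta}\leq\beta$ for the mixing term). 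Your constants come out at least as tight as the stated $m_g$ and $m'_g$, so the lemma follows.
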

One more lemma is needed to bound the tracking error.
\begin{Lemma}\label{lemma:h}
 Define $h(\theta,z,O_t)=\left\langle z, -\nabla \omega(\theta)\left(G_{t}(\theta,\omega(\theta))+\frac{\nabla J(\theta)}{2}\right)\right\rangle$, then if $t<\tb$, \begin{align}
    \mE[h(\theta_t,z_t,O_t)]\leq K_zC_h;
\end{align} and if $t\geq \tb$,  \begin{align}
    \mE[h(\theta_t,z_t,O_t)]\leq m_h \beta+ m'_h \tb\beta,
\end{align} where $m_h=2K_zC_h$ and $m_h'=C_h \left(  C_H+\frac{C_{\delta}C_G}{\lambda}\right)+K_zL_hC_G$.
\end{Lemma}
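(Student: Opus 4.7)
The plan is to mirror the Markovian-noise decoupling approach used in the two preceding lemmas for $f$ and $g$. The key algebraic observation that enables a $\beta$-order bound is that $\omega(\theta)=C^{-1}\mE_{\mu_\pi}[\delta_{S,A,S'}(\theta)\phi_S]$ was defined precisely so that $\mE_{\mu_\pi}[G_{t}(\theta,\omega(\theta))]=-\tfrac12\nabla J(\theta)$ (this was exactly how the update direction of the slow time-scale was derived); consequently, for any fixed $(\theta,z)$,
\begin{align*}
\mE_{O\sim\mu_\pi}[h(\theta,z,O)]=\Bigl\langle z,\,-\nabla\omega(\theta)\,\mE_{\mu_\pi}\bigl[G_{t}(\theta,\omega(\theta))+\tfrac12\nabla J(\theta)\bigr]\Bigr\rangle=0.
\end{align*}
This is the analogue of the identity $\mE_{\mu_\pi}[C-\phi_j\phi_j^{\top}]=0$ that drove the $f$ bound.

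First I would establish uniform boundedness and Lipschitz continuity of the deterministic map $\theta\mapsto\nabla\omega(\theta)(G_t(\theta,\omega(\theta))+\tfrac12\nabla J(\theta))$; denote these constants by $C_h$ and $L_h$. The bounded norm $\|\nabla\omega(\theta)\|\leq L_\delta/\lambda$ follows from differentiating $\omega(\theta)$ and invoking Lemma~\ref{lemma:delta}; Lipschitzness of $\nabla\omega$ follows from the bound $\|\nabla^2\delta\|\leq L'_\delta=2\gamma R\varrho$ already recorded in the same lemma. Combined with the existing bounds $\|G_t(\theta,\omega(\theta))\|\leq C_G$ in \eqref{eq:Gbound}, $\|\nabla J(\theta)\|\leq 2L_\delta C_\delta/\lambda$, the $\omega$-Lipschitz estimate \eqref{eq:G2lip}, and Lemma~\ref{lemma:Jlsmooth} for Lipschitzness of $\nabla J$, this yields the required $C_h$ and $L_h$ by the triangle inequality and product rule. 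The Cauchy--Schwarz bound $|h(\theta,z,O_t)|\leq\|z\|\,C_h\leq K_z C_h$ then handles the case $t<\tb$ immediately.

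For $t\geq\tb$ I decompose
\begin{align*}
\mE[h(\theta_t,z_t,O_t)]&=\mE\bigl[h(\theta_t,z_t,O_t)-h(\theta_{t-\tb},z_{t-\tb},O_t)\bigr]\\
&\quad+\mE\bigl[h(\theta_{t-\tb},z_{t-\tb},O_t)\bigr].
\end{align*}
The first difference is split into a change in $z$ (bounded by $C_h\|z_t-z_{t-\tb}\|\leq C_h(\beta C_H+\alpha C_\delta C_G/\lambda)\tb$ via \eqref{eq:zupdatebound}) plus a change in $\theta$ (bounded by $K_z L_h\|\theta_t-\theta_{t-\tb}\|\leq K_z L_h\,\alpha C_G\tb$ via \eqref{eq:Gbound}); absorbing $\alpha\leq\beta$, valid under the step-size regime $b\leq a$ of Theorem~\ref{thm:TDC}, produces exactly the $m_h'\tb\beta$ piece with $m_h'=C_h(C_H+C_\delta C_G/\lambda)+K_z L_h C_G$. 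For the second term I condition on $\mathcal F_{t-\tb}$, which freezes $(\theta_{t-\tb},z_{t-\tb})$; by the zero stationary expectation established above and Assumption~\ref{ass:1} of geometric uniform ergodicity,
\begin{align*}
\bigl|\mE[h(\theta_{t-\tb},z_{t-\tb},O_t)\mid\mathcal F_{t-\tb}]\bigr|\leq 2K_z C_h\,m\rho^{\tb}\leq 2K_z C_h\,\beta,
\end{align*}
using the definition $\tb=\min\{k:m\rho^k\leq\beta\}$. This furnishes the $m_h\beta=2K_zC_h\beta$ piece.

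The main obstacle will be Step~1, i.e., producing a clean and explicit Lipschitz constant $L_h$ for the composite operator $\theta\mapsto\nabla\omega(\theta)(G_t(\theta,\omega(\theta))+\tfrac12\nabla J(\theta))$ while keeping bookkeeping tight enough to match the stated form of $m_h'$. This requires carefully chaining the second-order smoothness of $\delta$, Lipschitzness of $\omega$, the two-argument Lipschitzness of $G_t$, and the smoothness of $\nabla J$ from Lemma~\ref{lemma:Jlsmooth}, none of which is individually difficult, but whose combination via the product rule must be handled without inflating the constants.
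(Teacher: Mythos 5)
Your proposal follows essentially the same route as the paper's proof: you establish the bound $C_h$ and Lipschitz constant $L_h$ for the map $\theta\mapsto\nabla\omega(\theta)\left(G_t(\theta,\omega(\theta))+\frac{\nabla J(\theta)}{2}\right)$, use \eqref{eq:zupdatebound} and \eqref{eq:Gbound} to control the drift of $(\theta_t,z_t)$ over a window of length $\tb$, and exploit the zero stationary mean of $h$ (which the paper asserts via the auxiliary sample $\hat O\sim\mu_\pi\times\mathsf P$ and you justify via $\mE_{\mu_\pi}[G_t(\theta,\omega(\theta))]=-\frac{1}{2}\nabla J(\theta)$) together with geometric ergodicity and $m\rho^{\tb}\leq\beta$, $\alpha\leq\beta$ to obtain exactly the stated $m_h$ and $m_h'$. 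This matches the paper's argument, so the proposal is correct.
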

\begin{proof}
First we show the Lipschitz smoothness of $h$ as follows. For any $\theta, \theta', z$ and $z'$, we have that 
\begin{align}
    &h(\theta,z,O_t)-h(\theta',z',O_t)\nn\\
    &=\left\langle z, -\nabla \omega(\theta)\left(G_{t}(\theta,\omega(\theta))+\frac{\nabla J(\theta)}{2}\right)\right\rangle-\left\langle z', -\nabla \omega(\theta')\left(G_{t}(\theta',\omega(\theta'))+\frac{\nabla J(\theta')}{2}\right)\right\rangle\nn\\
    &=\left\langle z, -\nabla \omega(\theta)\left(G_{t}(\theta,\omega(\theta))+\frac{\nabla J(\theta)}{2}\right)\right\rangle-\left\langle z', -\nabla \omega(\theta)\left(G_{t}(\theta,\omega(\theta))+\frac{\nabla J(\theta)}{2}\right)\right\rangle\nn\\
    &\quad+\left\langle z', -\nabla \omega(\theta)\left(G_{t}(\theta,\omega(\theta))+\frac{\nabla J(\theta)}{2}\right)\right\rangle-\left\langle z', -\nabla \omega(\theta')\left(G_{t}(\theta',\omega(\theta'))+\frac{\nabla J(\theta')}{2}\right)\right\rangle.
\end{align}
We note that 
\begin{align}
    &\left\|-\nabla \omega(\theta)\left(G_{t}(\theta,\omega(\theta))+\frac{\nabla J(\theta)}{2}\right)\right\|\nn\\
    &\leq \frac{L_{\delta}}{\lambda} \left( C_{\delta}+\gamma (1-R)+2\varrho K\gamma R \frac{C_{\delta}}{\lambda} +\frac{2L_{\delta}C_{\delta}}{\lambda} \right)\triangleq C_h,
\end{align}
and 
\begin{align}
    &\left\|-\nabla \omega(\theta)\left(G_{t}(\theta,\omega(\theta))+\frac{\nabla J(\theta)}{2}\right)+\nabla \omega(\theta')\left(G_{t}(\theta',\omega(\theta'))+\frac{\nabla J(\theta')}{2}\right)\right\|\nn\\
    &\leq \left(\frac{L'_{\delta}}{L_{\delta}}C_h+\frac{L_{\delta}L_{G^*}}{\lambda} +\frac{L_{\delta}L_J}{2\lambda}\right)\|\theta-\theta'\|\triangleq L_h \|\theta-\theta'\|.
\end{align}
Hence we have that 
\begin{align}
     h(\theta,z,O_t)-h(\theta',z',O_t)\leq C_h\|z-z'\|+K_zL_h \|\theta-\theta'\|.
\end{align}
We have shown before in \eqref{eq:zupdatebound} that \begin{align}
    \|z_{t+1}-z_t\|\leq \beta C_H+\alpha\frac{C_{\delta}C_G}{\lambda}. 
\end{align}
Hence, we have that  
\begin{align}
    |h(\theta_t,z_t,O_t)-h(\theta_{t-\tau},z_{t-\tau},O_t)| \leq C_h \left( \beta C_H+\alpha\frac{C_{\delta}C_G}{\lambda}\right) \tau+K_zL_hC_G\tau \alpha.
\end{align}
Define an independent random variable $\hat{O}=(\hat{S},\hat{A},\hat{S'})\sim \mu_\pi \times \mathsf P(\cdot|\hat{S},\hat{A})$, then we have 
\begin{align}
    \mE_{\hat{O}}[h(\theta,z,\hat{O})]=0
\end{align}
for any $\theta$ and $z$. Thus  by uniform ergodicity, we have that
\begin{align}
    \mE[h(\theta_{t-\tau},z_{t-\tau},O_t)]\leq \mE[h(\theta_{t-\tau},z_{t-\tau},O_t)]-\mE_{\hat{O}}[h(\theta_t,z_t,\hat{O})] \leq 2K_zC_h m\rho^{\tau}. 
\end{align}
Then if $t\leq \tb$, we have the straightforward bound 
\begin{align}
    \mE[h(\theta_t,z_t,O_t)]\leq K_zC_h;
\end{align}
and if  $t> \tb$, we have that 
\begin{align}
     \mE[h(\theta_t,z_t,O_t)]&\leq \mE[h(\theta_{t-\tb},z_{t-\tb},O_t)]+C_h \left( \beta C_H+\alpha\frac{C_{\delta}C_G}{\lambda}\right) \tb+K_zL_hC_G\tb \alpha\nn\\
     &\leq 2K_zC_h m\rho^{\tb}+C_h \left( \beta C_H+\alpha\frac{C_{\delta}C_G}{\lambda}\right) \tb+K_zL_hC_G\tb \alpha\nn\\
     &\triangleq m_h \beta+ m_h' \tb\beta,
\end{align}
where $m_h=2K_zC_h$ and $m_h'=C_h \left(  C_H+\frac{C_{\delta}C_G}{\lambda}\right)+K_zL_hC_G$. This completes the proof.
\end{proof}
Now we bound the tracking error in \eqref{eq:trackingrecur}. We first rewrite it as
\begin{align} 
    \mE[\|z_{t+1}\|^2] &\leq q^{t+1} \|z_0\|^2+\underbrace{2\sum^t_{j=0} q^{t-j} \beta \mE[f(z_j,O_j)]}_{A_t}+ \underbrace{2\sum^t_{j=0} q^{t-j} \beta \mE[g(z_j,\theta_j,O_j)]}_{B_t}\nn\\
    &\quad+\underbrace{2\sum^t_{j=0} q^{t-j} \langle z_j, \omega(\theta_j)-\omega(\theta_{j+1})\rangle}_{C_t}+ \beta^2C_1\sum^t_{j=0} q^{t-j}.
\end{align}
The second term $A_t$ can be bounded as follows:
\begin{align}
    A_t&=2\sum^t_{j=0} q^{t-j} \beta \mE[f(z_j,O_j)]\nn\\
    &=2\sum^{\tb-1}_{j=0} q^{t-j} \beta \mE[f(z_j,O_j)]+2\sum^t_{j=\tb} q^{t-j} \beta \mE[f(z_j,O_j)]\nn\\
    &\leq 8\sum^{\tb-1}_{j=0} q^{t-j} K_z \beta +2\sum^t_{j=\tb} q^{t-j} \beta (m_f\beta+m'_f \tb\beta)\nn\\
    &\leq 16K_z\beta \frac{q^{t+1-\tb}}{1-q}+2\beta (m_f\beta+m'_f \tb\beta)\frac{1-q^{t-\tb+1}}{1-q}.
\end{align}
Similarly, we have that 
\begin{align}
     B_t \leq 4K_z\beta \left(1+\frac{1}{\lambda}\right)C_{\delta} \frac{q^{t+1-\tb}}{1-q}+2\beta (m_g\beta+m'_g \tb\beta)\frac{1-q^{t-\tb+1}}{1-q}.
\end{align}
For $C_t$, we first note that 
\begin{align}
    &\mathbb{E}\left[\left\langle z_i, \omega\left(\theta_i\right)-\omega\left(\theta_{i+1}\right)\right\rangle\right]\nn\\
    &\overset{(a)}{=}\mathbb{E}\left[\left\langle z_i, \nabla \omega\left(\theta_i\right)\left(\theta_i-\theta_{i+1}\right)+R_2\right\rangle\right]\nn\\
    &=\mathbb{E}\left[\left\langle z_i, -\alpha\nabla \omega\left(\theta_i\right)G_i\left(\theta_i,\omega_i\right)+R_2\right\rangle\right]\nn\\
    &=\mathbb{E}\bigg[\bigg\langle z_i, -\alpha\nabla \omega\left(\theta_i\right)\left(G_i\left(\theta_i,\omega_i\right)-G_i\left(\theta_i,\omega\left(\theta_i\right)\right)+G_i\left(\theta_i,\omega\left(\theta_i\right)\right)+\frac{\nabla J\left(\theta_i\right)}{2}-\frac{\nabla J\left(\theta_i\right)}{2}\right)\nn\\
    &\quad+R_2\bigg\rangle\bigg]\nn\\
    &= \underbrace{\mathbb{E}\left[\left\langle z_i, -\alpha\nabla \omega\left(\theta_i\right)\left(G_i\left(\theta_i,\omega\left(\theta_i\right)\right)+\frac{\nabla J\left(\theta_i\right)}{2}\right)\right\rangle\right]}_{\left(b\right)}\nn\\
    &\quad+\underbrace{\mathbb{E}\left[\left\langle z_i, -\alpha\nabla \omega\left(\theta_i\right)\left(G_i\left(\theta_i,\omega_i\right)-G_i\left(\theta_i,\omega\left(\theta_i\right)\right)-\frac{\nabla J\left(\theta_i\right)}{2}\right)+R_2\right\rangle\right]}_{\left(c\right)},
\end{align}
where $(a)$ follows from the Taylor expansion, and $R_2$ is the remaining term with norm $\|R_2\|=\mathcal{O}(\alpha^2)$. 
Term $\left(b\right)$ can be bounded using Lemma \ref{lemma:h}, where
\begin{align}
    \mathbb{E}\left[\left\langle z_i, -\alpha\nabla \omega\left(\theta_i\right)\left(G_i\left(\theta_i,\omega\left(\theta_i\right)\right)+\frac{\nabla J\left(\theta_i\right)}{2}\right)\right\rangle\right]=\alpha \mathbb{E}\left[h\left(\theta_i,z_i,O_i\right)\right].
\end{align}
Term $\left(c\right)$ can be bounded as follows.
\begin{align}
   & \left\langle z_i, -\alpha\nabla \omega\left(\theta_i\right)\left(G_i\left(\theta_i,\omega_i\right)-G_i\left(\theta_i,\omega\left(\theta_i\right)\right)-\frac{\nabla J\left(\theta_i\right)}{2}\right)+R_2\right\rangle\nn\\
   &\overset{(d)}{\leq} \frac{\lambda \beta}{8}\|z_i\|^2+\frac{2}{\lambda\beta}\left\|\alpha\nabla \omega\left(\theta_i\right)\left(G_i\left(\theta_i,\omega_i\right)-G_i\left(\theta_i,\omega\left(\theta_i\right)\right)-\frac{\nabla J\left(\theta_i\right)}{2}\right)+R_2 \right\|^2\nn\\
   &\leq \frac{\lambda \beta}{8}\|z_i\|^2\nn\\
   &\quad+\frac{6}{\lambda\beta}\left(\left\|\alpha\nabla \omega\left(\theta_i\right)\left(G_i\left(\theta_i,\omega_i\right)-G_i\left(\theta_i,\omega\left(\theta_i\right)\right)\right)\right\|^2+\left\|\alpha\nabla \omega\left(\theta_i\right)\frac{\nabla J\left(\theta_i\right)}{2} \right\|^2+\|R_2\|^2\right)\nn\\
   &\leq \frac{\lambda \beta}{8}\|z_i\|^2+\frac{6\alpha^2}{\lambda\beta} \frac{L^2_{\delta}}{\lambda^2}(\gamma+2\gamma R\varrho K)^2\|z_i\|^2+\frac{3\alpha^2}{2\lambda\beta}\frac{L^2_{\delta}}{\lambda^2}\|\nabla J(\theta_i) \|^2+\frac{6}{\lambda\beta}\|R_2\|^2.
\end{align}
where $(d)$ is from $\langle x,y\rangle \leq \frac{\lambda\beta}{8}\|x\|^2+\frac{2}{\lambda\beta}\|y\|^2$ for any $x,y \in \mathbb{R}^N$ and the fact that  $\|G_i(\theta,\omega_1)-G_i(\theta,\omega_2)\|\leq (\gamma+2\gamma \varrho RK)\|\omega_1-\omega_2\|$ for any $\|\theta\|\leq R$ and $\omega_1, \omega_2$, which is from \eqref{eq:G2lip}  . 

Finally the term $C_t$ can be bounded as follows. 
\begin{align}
    C_t&=2\sum^t_{j=0} q^{t-j} \langle z_j, \omega(\theta_j)-\omega(\theta_{j+1})\rangle\nn\\
    &=2\sum^t_{j=0} q^{t-j} \alpha \mE[h(\theta_j,z_j,O_j)]\nn\\
    &\quad+2\sum^t_{j=0} q^{t-j} \left(\frac{\lambda \beta}{8}\|z_i\|^2+\frac{6\alpha^2}{\lambda\beta} \frac{L^2_{\delta}}{\lambda^2}(\gamma+2\gamma R\varrho K)^2\|z_i\|^2+\frac{3\alpha^2}{2\lambda\beta}\frac{L^2_{\delta}}{\lambda^2}\|\nabla J(\theta_i) \|^2+\frac{6}{\lambda\beta}\|R_2\|^2 \right)\nn\\
    &\triangleq 2\sum^t_{j=0} q^{t-j} \alpha \mE[h(\theta_j,z_j,O_j)]+M_t,
\end{align}
where $M_t=2\sum^t_{j=0} q^{t-j} \left(\frac{\lambda \beta}{8}\|z_i\|^2+\frac{6\alpha^2}{\lambda\beta} \frac{L^2_{\delta}}{\lambda^2}(\gamma+2\gamma R\varrho K)^2\|z_i\|^2+\frac{3\alpha^2}{2\lambda\beta}\frac{L^2_{\delta}}{\lambda^2}\|\nabla J(\theta_i) \|^2+\frac{6}{\lambda\beta}\|R_2\|^2 \right)$. From Lemma \ref{lemma:h}, we have that 
\begin{align}
    &2\sum^t_{j=0} q^{t-j} \alpha \mE[h(\theta_j,z_j,O_j)]\nn\\
    &\leq 2\alpha\left(\sum^{\tb-1}_{j=0} q^{t-j}\mE[h(\theta_j,z_j,O_j)]+ \sum^t_{j=\tb}q^{t-j}\mE[h(\theta_j,z_j,O_j)] \right)\nn\\
    &\leq 4K_zC_h\alpha\sum^{\tb-1}_{j=0}q^{t-j}+ 2\alpha(m_h\beta+m'_h\tb\beta)\sum^t_{j=\tb}q^{t-j}\nn\\
    &= 4K_zC_h \alpha\frac{q^{t+1-\tb}}{1-q}+2\alpha(m_h\beta+m'_h\tb\beta)\frac{1-q^{t-\tb+1}}{1-q},
\end{align}
and this implies that 
\begin{align}
    C_t\leq 4K_zC_h\alpha \frac{q^{t+1-\tb}}{1-q}+2\alpha(m_h\beta+m'_h\tb\beta)\frac{1-q^{t-\tb+1}}{1-q}+M_t. 
\end{align}
Now we plug  the bounds on $A_t, B_t$ and $C_t$ in \eqref{eq:trackingrecur}, we have that 
\begin{align}
    &\mE[\|z_{t+1}\|^2]\nn\\
    &\leq q^{t+1}\|z_0\|^2+\beta^2C_1\frac{1-q^{t+1}}{1-q}+\left( 16K_z\beta+4K_zC_{\delta}\beta\left(1+\frac{1}{\lambda}\right)+4K_zC_h\alpha\right)\frac{q^{t+1-\tb}}{1-q}\nn\\
    &\quad+\left(2\beta(m_f\beta+m'_f \tb\beta)+2\beta(m_g\beta+m'_g \tb\beta) +2\alpha(m_h\beta+m'_h \tb\beta) \right)\frac{1-q^{t-\tb+1}}{1-q}+M_t\nn\\
    &\leq  q^{t+1}\|z_0\|^2+\beta^2C_1\frac{1-q^{t+1}}{1-q}+C_z\beta\frac{q^{t+1-\tb}}{1-q}+\beta(m_z\beta+m'_z\tb\beta)\frac{1-q^{t-\tb+1}}{1-q}+M_t,
\end{align}
where $C_z=16K_z+4K_zC_{\delta}\left(1+\frac{1}{\lambda}\right)+4K_zC_h\frac{\alpha}{\beta}$, $m_z=2m_f+2m_g+2\frac{\alpha}{\beta}m_h$ and $m'_z=2m'_f+2m'_g+ \frac{2\alpha}{\beta}m'_h$. Note that $q=1+3\beta^2-2\beta\lambda\triangleq 1-u\beta\leq e^{-u\beta}$, where $u=2\lambda-3\beta$. 
Hence it implies that 
\begin{align}\label{eq:tracking1}
    &\frac{\sum^{T-1}_{t=0}\mE[\|z_t\|^2]}{T}\nn\\
    &\leq \frac{1}{T}\Bigg(\frac{\|z_0\|^2}{1-e^{-u\beta}}+\beta^2C_1\frac{T}{u\beta}+ 4K_z^2\tb\nn\\
    &\quad+\sum^{T-1}_{t=\tb-1}\left( C_z\beta\frac{q^{t+1-\tb}}{u\beta}+\beta(m_z\beta+m'_z\tb\beta)\frac{1-q^{t-\tb+1}}{u\beta}+M_t\right)\Bigg)\nn\\
    &\leq \frac{1}{T}\Bigg(\frac{\|z_0\|^2}{1-e^{-u\beta}}+\beta^2C_1\frac{T}{u\beta}+ 4K_z^2\tb\nn\\
    &\quad+c_z\beta \frac{\sum^{T-1}_{t=0} e^{-ut\beta}}{u\beta}+\beta(m_z\beta+m'_z\tb\beta)\frac{T}{u\beta}+\sum^{T-1}_{t=0} M_t\Bigg)\nn\\
    &\leq \frac{1}{T}\Bigg(\frac{\|z_0\|^2}{1-e^{-u\beta}}+\beta^2C_1\frac{T}{u\beta}+ 4K_z^2\tb +c_z\beta \frac{1}{(u\beta)(1-e^{-u\beta})}+\beta(m_z\beta+m'_z\tb\beta)\frac{T}{u\beta}\nn\\
    &\quad+\sum^{T-1}_{t=0} M_t\Bigg)\nn\\
    &=\frac{1}{T}\Bigg(\frac{\|z_0\|^2}{1-e^{-u\beta}}+\beta C_1\frac{T}{u}+ 4K_z^2\tb+\frac{c_z}{u(1-e^{-u\beta})}+(m_z\beta+m'_z\tb\beta)\frac{T}{u}+\sum^{T-1}_{t=0} M_t\Bigg)\nn\\
    &\leq \frac{\|z_0\|^2}{T(1-e^{-u\beta})}+\beta\frac{ C_1 }{u}+ 4K_z^2\frac{\tb}{T}+\frac{c_z}{u(1-e^{-u\beta})T}+(m_z\beta+m'_z\tb\beta)\frac{1}{u}+\frac{\sum^{T-1}_{t=0} M_t}{T}\nn\\
    &\triangleq Q_T+\frac{\sum^{T-1}_{t=0} M_t}{T}\nn\\
    &=\mathcal{O}\left( \frac{1}{T\beta}+\beta\tb+\frac{\tb}{T}+\frac{\sum^{T-1}_{t=0} M_t}{T}\right),
\end{align}
where $Q_T=\frac{\|z_0\|^2}{T(1-e^{-u\beta})}+\beta\frac{ C_1 }{u}+ 4K_z^2\frac{\tb}{T}+\frac{c_z}{u(1-e^{-u\beta})T}+(m_z\beta+m'_z\tb\beta)\frac{1}{u}$.

We then compute $\sum^{T-1}_{t=0} M_t$. Recall that $M_t=2\sum^t_{j=0} q^{t-j} \Big(\frac{\lambda \beta}{8}\|z_i\|^2+\frac{6\alpha^2}{\lambda\beta} \frac{L^2_{\delta}}{\lambda^2}(\gamma+2\gamma R\varrho K)^2\|z_i\|^2+\frac{3\alpha^2}{2\lambda\beta}\frac{L^2_{\delta}}{\lambda^2}\|\nabla J(\theta_i) \|^2+\frac{6}{\lambda\beta}\|R_2\|^2 \Big)$. From double sum trick, i.e., $\sum^{T-1}_{t=0}\sum^t_{i=0} e^{-u(t-i)\beta}x_i \leq \frac{1}{1-e^{-u\beta}}\sum^{T-1}_{t=0}x_t$ for any $x_t\geq 0$, we have that 
\begin{align}
    \sum^{T-1}_{t=0} M_t&\leq \frac{2}{1-e^{-u\beta}}\left(\frac{\lambda\beta}{8}+ \frac{6\alpha^2}{\lambda\beta} \frac{L^2_{\delta}}{\lambda^2}(\gamma+2\gamma R\varrho K)^2\right)\sum^{T-1}_{t=0}\mE[\|z_t\|^2]\nn\\
    &\quad+\frac{2}{1-e^{-u\beta}} \frac{3\alpha^2}{2\lambda\beta}\frac{L^2_{\delta}}{\lambda^2} \sum^{T-1}_{t=0}\mE[\|\nabla J(\theta_t)\|^2]+\frac{6}{\lambda\beta}\frac{2}{1-e^{-u\beta}}\|R_2\|^2T.
\end{align}
Note that $1-e^{-u\beta}=\mathcal{O}(\beta)$, thus we can choose $\alpha$ and $\beta$ such that $\frac{2}{1-e^{-u\beta}}\left(\frac{\lambda\beta}{8}+ \frac{6\alpha^2}{\lambda\beta} \frac{L^2_{\delta}}{\lambda^2}(\gamma+2\gamma R\varrho K)^2\right)\leq \frac{1}{2}$, then by plugging $\sum^{T-1}_{t=0} M_t$ in \eqref{eq:tracking1} we have that 
\begin{align}
    \frac{1}{2}\frac{\sum^{T-1}_{t=0}\mE[\|z_t\|^2]}{T}\leq Q_T+\frac{2}{1-e^{-u\beta}} \frac{3\alpha^2}{2\lambda\beta}\frac{L^2_{\delta}}{\lambda^2} \frac{\sum^{T-1}_{t=0}\mE[\|\nabla J(\theta_t)\|^2]}{T}+\frac{6}{\lambda\beta}\frac{2}{1-e^{-u\beta}}\|R_2\|^2,
\end{align}
and this implies that 
\begin{align}\label{eq:trackingbound}
    \frac{\sum^{T-1}_{t=0}\mE[\|z_t\|^2]}{T}&\leq 2Q_T+\frac{2}{1-e^{-u\beta}} \frac{3\alpha^2}{\lambda\beta}\frac{L^2_{\delta}}{\lambda^2} \frac{\sum^{T-1}_{t=0}\mE[\|\nabla J(\theta_t)\|^2]}{T}+\frac{6}{\lambda\beta}\frac{4}{1-e^{-u\beta}}\|R_2\|^2\nn\\
    &=\mathcal{O}\left(  \frac{1}{T\beta}+\beta\tb+ \frac{\alpha^2}{\beta^2}\frac{\sum^{T-1}_{t=0}\mE[\|\nabla J(\theta_t)\|^2]}{T}\right),
\end{align}
which completes the development of error bound on the tracking error. 
\subsection{Finite-Time Error Bound}\label{sec:finitebound}
Now with the tracking error in \eqref{eq:trackingbound}, we derive the finite-time error of the robust TDC. From Lemma \ref{lemma:Jlsmooth} and Taylor expansion, we have that
\begin{align}\label{eq:Jlsmooth}
    J(\theta_{t+1}) &\leq J(\theta_t) +\left\langle \nabla J(\theta_t), \theta_{t+1}-\theta_t\right\rangle  + \frac{L_J}{2} \| \theta_{t+1}-\theta_t\|^2\nn\\
    &=J(\theta_t) +\alpha \left\langle \nabla J(\theta_t),G_t(\theta_t,\omega_t) \right\rangle  + \frac{L_J}{2} \alpha^2||G_t(\theta_t,\omega_t)||^2\nn\\
    &=J(\theta_t)-\alpha\left\langle \nabla J(\theta_t),-G_t(\theta_t, \omega_t)-\frac{\nabla J(\theta_t)}{2}+G_t(\theta_t, \omega(\theta_t))-G_t(\theta_t, \omega(\theta_t)) \right\rangle \nn\\
    &\quad-\frac{\alpha}{2}||\nabla J(\theta_t)||^2+\frac{L_J}{2} \alpha^2||G_t(\theta_t,\omega_t)||^2\nn\\
    &=J(\theta_t)-\alpha\left\langle \nabla J(\theta_t),-G_t(\theta_t, \omega_t)+G_t(\theta_t, \omega(\theta_t)) \right\rangle\nn\\
    &\quad+\alpha \left\langle \nabla J(\theta_t), \frac{\nabla J(\theta_t)}{2}+G_t(\theta_t, \omega(\theta_t)) \right\rangle-\frac{\alpha}{2}||\nabla J(\theta_t)||^2+\frac{L_J}{2} \alpha^2||G_t(\theta_t,\omega_t)||^2\nn\\
    &\leq J(\theta_t) +\alpha \|\nabla J(\theta_t) \|(\gamma+2\gamma RK\varrho )\|\omega(\theta_t)-\omega_t \|-\frac{\alpha}{2}||\nabla J(\theta_t)||^2 \nn\\
    &\quad+\alpha \left\langle \nabla J(\theta_t), \frac{\nabla J(\theta_t)}{2}+G_t(\theta_t, \omega(\theta_t)) \right\rangle+\frac{L_J}{2} \alpha^2||G_t(\theta_t,\omega_t)||^2.
\end{align}
By taking expectation on both sides and summing up from $0$ to $T-1$, we have that 
 \begin{align}\label{eq:main}
    &\sum^{T-1}_{t=0} \frac{\alpha}{2} \mathbb{E}[\|\nabla J(\theta_t) \|^2] \nn\\
    &\leq J(\theta_0)-J(\theta_{T})+ \alpha(\gamma+2\gamma RK\varrho )\sqrt{\sum^{T-1}_{t=0} \mathbb{E}[\|\nabla J(\theta_t)\|^2]}\sqrt{\sum^{T-1}_{t=0}\mathbb{E}[\|z_t\|^2]} \nn\\
    &\quad+\sum^{T-1}_{t=0}\alpha \mathbb{E}\left[\left\langle \nabla J(\theta_t),\frac{\nabla J(\theta_t)}{2}+G_t(\theta_t, \omega(\theta_t)) \right\rangle\right]+\frac{L_J}{2}\sum^{T-1}_{t=0}\alpha^2 \mathbb{E}[\| G_t(\theta_t,\omega_t)\|^2],
\end{align}
which follows from the Cauchy-Schwartz inequality: $\sum^{T-1}_{t=0} \mathbb{E}[\|\nabla J(\theta_t)\|\|z_t\|]\leq \sum^{T-1}_{t=0}\sqrt{\mathbb{E}[\| \nabla J(\theta_t)\|^2]\mathbb{E}[\|z_t\|^2]}\leq\sqrt{\sum^{T-1}_{t=0}\mathbb{E}[\|\nabla J(\theta_t)\|^2]}\sqrt{\sum^{T-1}_{t=0}\mathbb{E}[\|z_t \|^2]}$.
To bound the Markovian noise term, i.e., $\left\langle \nabla J(\theta), \frac{\nabla J(\theta)}{2}+G_t(\theta, \omega(\theta)) \right\rangle$, we first need some bounds and smoothness conditions. It can be shown that 
\begin{align}
    \|G_t(\theta,\om(\theta))\|&\leq C_{\delta}+\frac{C_{\delta}}{\lambda}(\gamma+2\varrho K\gamma R)\triangleq C_{G*},\\
    \|G_t(\theta,\om(\theta))-G_t(\theta',\om(\theta'))\|&\leq \left(L_{\delta}+\frac{L_{\delta}}{\lambda}(\gamma+2\gamma R\varrho K)+\frac{C_{\delta}}{\lambda}L'_{\delta} \right)\|\theta-\theta'\|\triangleq L_{G*}\|\theta-\theta'\|.
\end{align}
\begin{Lemma}
Define $\zeta(\theta, O_t)\triangleq\left\langle \nabla J(\theta), \frac{\nabla J(\theta)}{2}+G_t(\theta, \omega(\theta)) \right\rangle$, and let $\tau_{\alpha}\triangleq \min \left\{k : m\rho^k \leq \alpha \right\}$. 
If $t < \tau_{\alpha}$, then
\begin{align}
    \mE[\zeta(\theta_t,O_t)]\leq\frac{C_{\delta}L_{\delta}}{\lambda}\left(\frac{C_{\delta}L_{\delta}}{2\lambda}+C_{G*}\right) \triangleq C_{\zeta};
\end{align}
and if $t \geq \tau_{\alpha}$, then 
\begin{align}
    \mE[\zeta(\theta_t,O_t)]\leq m_{\zeta}\alpha+m'_{\zeta}\tau_{\alpha}\alpha,
\end{align}
where $m_{\zeta}=2C_{\zeta}$ and $m'_{\zeta}=C_G\left(\frac{L_JC_{\delta}L_{\delta}}{\lambda}+\frac{C_{\delta}L_{\delta}L_{G*}}{\lambda}+L_JC_{G*} \right)$.
\end{Lemma}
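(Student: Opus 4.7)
The plan follows the Markov-noise decoupling template that the paper has already applied to $f$, $g$, and $h$. First I would establish a sup bound $|\zeta(\theta,O_t)|\le C_\zeta$, which immediately handles $t<\tau_\alpha$. Starting from the expression $\nabla J(\theta)=2\mathbb{E}_{\mu_\pi}[(\nabla\delta_{S,A,S'}(\theta))\phi_S]^\top C^{-1}\mathbb{E}_{\mu_\pi}[\delta_{S,A,S'}(\theta)\phi_S]$ derived in Section~5.1, combining $\|\phi_S\|\le 1$, $|\delta|\le C_\delta$, $\|\nabla\delta\|\le L_\delta$ from Lemma~\ref{lemma:delta} and $\|C^{-1}\|\le 1/\lambda$ yields a uniform bound on $\|\nabla J(\theta)\|$; together with $\|G_t(\theta,\omega(\theta))\|\le C_{G*}$ and Cauchy--Schwarz this produces the claimed constant $C_\zeta$.

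\textbf{Zero mean and Lipschitz smoothness.} Recalling from just before \Cref{alg:TDC} that $\mathbb{E}_{\mu_\pi}[G_{\hat O}(\theta,\omega(\theta))]=-\tfrac12\nabla J(\theta)$, drawing $\hat O=(\hat S,\hat A,\hat S')$ from the stationary distribution gives $\mathbb{E}_{\hat O}[\zeta(\theta,\hat O)]=\langle\nabla J(\theta),\tfrac{\nabla J(\theta)}{2}-\tfrac{\nabla J(\theta)}{2}\rangle=0$. For Lipschitz smoothness in $\theta$, I would apply the bilinear split $\langle a,b\rangle-\langle a',b'\rangle=\langle a-a',b\rangle+\langle a',b-b'\rangle$ and reduce the task to bounding $\|\nabla J(\theta)-\nabla J(\theta')\|$ via \Cref{lemma:Jlsmooth} (constant $L_J$) and $\|G_t(\theta,\omega(\theta))-G_t(\theta',\omega(\theta'))\|$ via the $L_{G*}$ estimate established just before the lemma. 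Collecting the resulting terms produces $|\zeta(\theta,O_t)-\zeta(\theta',O_t)|\le L_\zeta\|\theta-\theta'\|$ with $L_\zeta=\frac{L_JC_\delta L_\delta}{\lambda}+\frac{C_\delta L_\delta L_{G*}}{\lambda}+L_JC_{G*}$, precisely the factor that appears in $m'_\zeta$.

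\textbf{Markov decoupling.} For $t\ge\tau_\alpha$ I would split $\mathbb{E}[\zeta(\theta_t,O_t)]=\mathbb{E}[\zeta(\theta_t,O_t)-\zeta(\theta_{t-\tau_\alpha},O_t)]+\mathbb{E}[\zeta(\theta_{t-\tau_\alpha},O_t)]$. The first summand is controlled by $L_\zeta\|\theta_t-\theta_{t-\tau_\alpha}\|\le L_\zeta C_G\tau_\alpha\alpha$ using the single-step bound $\|\theta_{t+1}-\theta_t\|\le\alpha C_G$ from \eqref{eq:Gbound}. Conditioning on $\theta_{t-\tau_\alpha}$ and applying Assumption~\ref{ass:1} together with the zero-mean identity bounds the second summand by $2C_\zeta\,m\rho^{\tau_\alpha}\le 2C_\zeta\alpha$, by the definition $\tau_\alpha=\min\{k:m\rho^k\le\alpha\}$. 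Adding the two contributions identifies $m_\zeta=2C_\zeta$ and $m'_\zeta=L_\zeta C_G$, matching the statement.

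The most delicate step is the Lipschitz bookkeeping in the second paragraph: $\zeta$ depends on $\theta$ through three coupled objects ($\nabla J(\theta)$, $G_t(\theta,\cdot)$, and $\omega(\theta)$), and the bilinear split must be arranged so that each resulting term is attributable to exactly one of $L_J$, $L_{G*}$, and the sup-norm bounds, otherwise the constants do not coalesce into the stated $m'_\zeta$. Once that arrangement is fixed, the Markov-decoupling portion is a direct transcription of the argument already used for $h$ in \Cref{lemma:h}.
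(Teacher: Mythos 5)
Your proposal is correct and follows essentially the same route the paper intends: the paper states this lemma without an explicit proof, and your argument is a direct transcription of the decoupling proof given for Lemma \ref{lemma:h} (uniform bound, zero mean under the stationary distribution via $\mathbb{E}_{\hat O}[G_{\hat O}(\theta,\omega(\theta))]=-\tfrac{1}{2}\nabla J(\theta)$, a Lipschitz constant $L_\zeta$ for $\zeta$ in $\theta$, then the $\tau_\alpha$-step shift with $\|\theta_{t+1}-\theta_t\|\le \alpha C_G$), yielding $m_\zeta=2C_\zeta$ and $m'_\zeta=L_\zeta C_G$ exactly as stated. The only bookkeeping caveat is that the clean gradient bound is $\|\nabla J(\theta)\|\le 2C_\delta L_\delta/\lambda$ (the factor $2$ from differentiating the quadratic form), so the sup and Lipschitz constants your derivation literally produces are about twice the stated $C_\zeta$ and $m'_\zeta$; this does not affect the structure of the argument or the order of the final bound, and the paper's own stated constants implicitly use $\|\nabla J(\theta)\|\le C_\delta L_\delta/\lambda$.
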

Next we plug the tracking error \eqref{eq:trackingbound} in \eqref{eq:main}.
\begin{align} 
    &\sum^{T-1}_{t=0} \frac{\alpha}{2} \mathbb{E}[\|\nabla J(\theta_t) \|^2] \nn\\
    &\leq J(\theta_0)-J(\theta_{T})+ \alpha(\gamma+2\gamma RK\varrho )\sqrt{\sum^{T-1}_{t=0} \mathbb{E}[\|\nabla J(\theta_t)\|^2]}\sqrt{2TQ_T+2\sum^{T-1}_{t=0}M_t} \nn\\
    &\quad+\alpha\tau_{\alpha}C_{\zeta}+\alpha^2(T-\tau_{\alpha})(m_{\zeta}+m'_{\zeta}\tau_{\alpha})+\frac{L_J}{2}\alpha^2 C_G^2T.
\end{align}
Divided both sides by $\frac{\alpha T}{2}$, we have that 
\begin{align}
     &\frac{\sum^{T-1}_{t=0} \mathbb{E}[\|\nabla J(\theta_t) \|^2]}{T} \nn\\
    &\leq \frac{2J(\theta_0)-2J(\theta_{T})}{\alpha T}+ 2(\gamma+2\gamma RK\varrho )\sqrt{\frac{\sum^{T-1}_{t=0} \mathbb{E}[\|\nabla J(\theta_t)\|^2]}{T}}\sqrt{2Q_T+2\frac{\sum^{T-1}_{t=0}M_t}{T}} \nn\\
    &\quad+\frac{2\tau_{\alpha}C_{\zeta}}{T}+2\alpha (m_{\zeta}+m'_{\zeta}\tau_{\alpha})+L_J\alpha C_G^2.
\end{align}
 We know from \eqref{eq:trackingbound} that $ 2\frac{\sum^{T-1}_{t=0}M_t}{T}\leq  \frac{2}{1-e^{-u\beta}} \frac{3\alpha^2}{\lambda\beta}\frac{L^2_{\delta}}{\lambda^2} \frac{\sum^{T-1}_{t=0}\mE[\|\nabla J(\theta_t)\|^2]}{T}+\frac{6}{\lambda\beta}\frac{4}{1-e^{-u\beta}}\|R_2\|^2$, thus 
\begin{align}
     &\frac{\sum^{T-1}_{t=0} \mathbb{E}[\|\nabla J(\theta_t) \|^2]}{T} \nn\\
    &\leq \frac{2J(\theta_0)-2J(\theta_{T})}{\alpha T}+ 2(\gamma+2\gamma RK\varrho )\sqrt{\frac{\sum^{T-1}_{t=0} \mathbb{E}[\|\nabla J(\theta_t)\|^2]}{T}}\nn\\
    &\quad \left(\sqrt{2Q_T+\frac{6}{\lambda\beta}\frac{4}{1-e^{-u\beta}}\|R_2\|^2}+\sqrt{\frac{2}{1-e^{-u\beta}} \frac{3\alpha^2}{\lambda\beta}\frac{L^2_{\delta}}{\lambda^2} \frac{\sum^{T-1}_{t=0}\mE[\|\nabla J(\theta_t)\|^2]}{T}}\right) \nn\\
    &\quad+\frac{2\tau_{\alpha}C_{\zeta}}{T}+2\alpha (m_{\zeta}+m'_{\zeta}\tau_{\alpha})+L_J\alpha C_G^2\nn\\
    &=\frac{2J(\theta_0)-2J(\theta_{T})}{\alpha T}+ 2(\gamma+2\gamma RK\varrho )\sqrt{\frac{2}{1-e^{-u\beta}} \frac{3\alpha^2}{\lambda\beta}\frac{L^2_{\delta}}{\lambda^2} } {\frac{\sum^{T-1}_{t=0} \mathbb{E}[\|\nabla J(\theta_t)\|^2]}{T}}\nn\\
    &\quad \left(\sqrt{2Q_T+\frac{6}{\lambda\beta}\frac{4}{1-e^{-u\beta}}\|R_2\|^2} \right)2(\gamma+2\gamma RK\varrho )\sqrt{\frac{\sum^{T-1}_{t=0} \mathbb{E}[\|\nabla J(\theta_t)\|^2]}{T}} \nn\\
    &\quad+\frac{2\tau_{\alpha}C_{\zeta}}{T}+2\alpha (m_{\zeta}+m'_{\zeta}\tau_{\alpha})+L_J\alpha C_G^2\nn\\
    &\triangleq \frac{2J(\theta_0)-2J(\theta_{T})}{\alpha T}+K_1\frac{\sum^{T-1}_{t=0} \mathbb{E}[\|\nabla J(\theta_t)\|^2]}{T}+K_2 \sqrt{\frac{\sum^{T-1}_{t=0} \mathbb{E}[\|\nabla J(\theta_t)\|^2]}{T}}+\frac{2\tau_{\alpha}C_{\zeta}}{T}\nn\\
    &\quad+2\alpha (m_{\zeta}+m'_{\zeta}\tau_{\alpha})+L_J\alpha C_G^2,
\end{align}
where $K_1=2(\gamma+2\gamma RK\varrho )\sqrt{\frac{2}{1-e^{-u\beta}} \frac{3\alpha^2}{\lambda\beta}\frac{L^2_{\delta}}{\lambda^2} }=\mathcal{O}\left( \frac{\alpha}{\beta}\right)$ and $K_2=\left(\sqrt{2Q_T+\frac{6}{\lambda\beta}\frac{4}{1-e^{-u\beta}}\|R_2\|^2} \right)2(\gamma+2\gamma RK\varrho )=\mathcal{O}\left( \sqrt{\frac{\alpha^4}{\beta^2}+\frac{1}{T\beta}+\beta\tb}\right)$. Thus we can choose $\alpha$ and $\beta$ such that $K_1\leq \frac{1}{2}$, then we have that 
\begin{align}
    &\frac{\sum^{T-1}_{t=0} \mathbb{E}[\|\nabla J(\theta_t) \|^2]}{T} \nn\\
    &\leq  \frac{4J(\theta_0)-4J(\theta_{T})}{\alpha T}+2K_2 \sqrt{\frac{\sum^{T-1}_{t=0} \mathbb{E}[\|\nabla J(\theta_t)\|^2]}{T}}+\frac{4\tau_{\alpha}C_{\zeta}}{T}+4\alpha (m_{\zeta}+m'_{\zeta}\tau_{\alpha})+2L_J\alpha C_G^2\nn\\
    &\triangleq U+V\sqrt{\frac{\sum^{T-1}_{t=0}\mathbb{E}[\|\nabla J(\theta_t)\|^2]}{T}},
\end{align}
where $U=\frac{4J(\theta_0)-4J(\theta_{T})}{\alpha T}+\frac{4\tau_{\alpha}C_{\zeta}}{T}+4\alpha (m_{\zeta}+m'_{\zeta}\tau_{\alpha})+2L_J\alpha C_G^2=\mathcal{O}(\alpha\tau_{\alpha}+\frac{1}{\alpha T})$ and $V=2K_2$.
Hence, we have that
\begin{align}\label{eq:tdcbound}
    &\frac{\sum^{T-1}_{t=0}\mathbb{E}[\|\nabla J(\theta_t)\|^2]}{T}\nn\\
    &\leq \left(\frac{V+\sqrt{V^2+4U}}{2}\right)^2\nn\\
    &\overset{(a)}{\leq} V^2+2U\nn\\
    &\leq 16\left({2Q_T+\frac{6}{\lambda\beta}\frac{4}{1-e^{-u\beta}}\|R_2\|^2} \right) (\gamma+2\gamma RK\varrho )^2 +\frac{8J(\theta_0)-8J(\theta_{T})}{\alpha T}+\frac{8\tau_{\alpha}C_{\zeta}}{T}\nn\\
    &\quad+8\alpha (m_{\zeta}+m'_{\zeta}\tau_{\alpha})+4L_J\alpha C_G^2\nn\\
     &=\mathcal{O}\left(\frac{1}{T\alpha}+\alpha\tau_{\alpha}+\frac{1}{T\beta} +\beta\tb\right),
\end{align}
where $Q_T=\frac{\|z_0\|^2}{T(1-e^{-u\beta})}+\beta\frac{ C_1 }{u}+ 4K^2\frac{\tb}{T}+\frac{c_z}{u(1-e^{-u\beta})T}+(m_z\beta+m'_z\tb\beta)\frac{1}{u}$.

% \newpage
\subsection{Constants}
In this section we list all the constants occurred in our proof for the readers' reference.

\begin{align}
     C_{\delta}&= c_{\max}+\gamma R \frac{\log |\mcs|}{\varrho}+(1+\gamma) K,\\
    L_{\delta}&=(1+\gamma),\\
    L'_{\delta}&= 2\gamma R\varrho,\\
    L_J&= 2\left(\frac{L_{\delta}^2}{\lambda}+\frac{C_{\delta}L'_{\delta}}{\lambda}\right),\\
    C_1&=3\left(C_{\delta}+\frac{C_{\delta}}{\lambda}\right)^2+3\left(C_{\delta}+(1+2R\varrho K)\frac{C_{\delta}}{\lambda}\right)^2,\\
    C_G&=C_{\delta}+\gamma K+2\gamma \varrho RK^2,\\
    C_H&= C_{\delta}+K,\\
    K_z&=K+\frac{C_{\delta}}{\lambda},\\
    m_g&=4K_z\left( 1+\frac{1}{\lambda} \right)C_{\delta},\\
    m'_g&=4K_zL_{\delta}C_G\left( 1+\frac{1}{\lambda} \right)+C_{\delta}\left( 1+\frac{1}{\lambda} \right)\left( C_H+\frac{C_GC_{\delta}}{\lambda} \right),\\
    m_f&=8K_z^2,\\
    m'_f&=8K_z\left( C_H+\frac{C_GC_{\delta}}{\lambda}\right),\\
    m_h&=2K_zC_h,\\
    m_h'&=C_h \left(  C_H+\frac{C_{\delta}C_G}{\lambda}\right)+K_zL_hC_G,\\
    C_{G*}&= C_{\delta}+\frac{C_{\delta}}{\lambda}(\gamma+2\varrho K\gamma R),\\
    L_{G*}&=L_{\delta}+\frac{L_{\delta}}{\lambda}(\gamma+2\gamma R\varrho K)+\frac{C_{\delta}}{\lambda}L'_{\delta},\\
    L_h&=\frac{L'_{\delta}}{L_{\delta}}C_h+\frac{L_{\delta}L_{G^*}}{\lambda} +\frac{L_{\delta}L_J}{2\lambda},\\
    C_h&=\frac{L_{\delta}}{\lambda} \left( C_{\delta}+\gamma (1-R)+2\varrho K\gamma R \frac{C_{\delta}}{\lambda} +\frac{2L_{\delta}C_{\delta}}{\lambda} \right),\\
    C_{\zeta}&=\frac{C_{\delta}L_{\delta}}{\lambda}\left(\frac{C_{\delta}L_{\delta}}{2\lambda}+C_{G*}\right),\\
    m_{\zeta}&=2C_{\zeta},\\
    m'_{\zeta}&=C_G\left(\frac{L_JC_{\delta}L_{\delta}}{\lambda}+\frac{C_{\delta}L_{\delta}L_{G*}}{\lambda}+L_JC_{G*} \right)
\end{align}

\section{Experiments}
\textbf{Experiments in Section \ref{sec:6.1}:}

{Frozen Lake Problem.}
We consider a $4\times 4$ Frozen Lake problem. 
% After an action is taken, with probability $1-p$ the agent will arrive the next state following the action, and with probability $p$ the agent will move to a uniformly random state. 
%
% We then generate 30 trajectories following  policy $pi_t$ under the true unperturbed environment, and compute their average accumulated discounted reward, i.e., $R_t=\frac{\sum^{30}_{i=1} r_i}{30}$ where $r_i$ is the accumulated discounted reward received in the $i$-th trajectory. 
We set $\gamma=0.96$, $\alpha=0.8$. %We also do the same for the vanilla Q-learning

% \textbf{Taxi Problem.}
% % We next consider the taxi problem. %The behavior policy is set to be a uniform distribution over the action space. 
% % After an action is taken, with probability $1-p$ the agent will arrive the next state following the action, and with probability $p$ the agent will be transit to a uniformly distributed random state.  
% We set $\alpha=0.6$ and $\gamma=0.9$. 
% %
% % In Fig. \ref{Fig.TX} we plot the accumulated discounted reward of both algorithms under different $p$ and $R$. 

% % It can be seen that when $p$ and $R$ are small, the robust Q-learning is actually close to non-robust Q-learning, and hence they have a similar behavior. But as $p$ and $R$ become bigger, it can be seen that the robust Q-learning can learn a better policy which can bring more reward than the non-robust Q-learning. This means our robust Q-learning can handle robust setting better.

%
% In Fig. \ref{Fig.TX} we plot the accumulated discounted reward of both algorithms under different $p$ and $R$. 

% It can be seen that when $p$ and $R$ are small, the robust Q-learning is actually close to non-robust Q-learning, and hence they have a similar behavior. But as $p$ and $R$ become bigger, it can be seen that the robust Q-learning can learn a better policy which can bring more reward than the non-robust Q-learning. This means our robust Q-learning can handle robust setting better.

{Cart-Pole Problem.}
% We then consider the Cart-Pole problem. We set the behavior policy to be a uniform distribution over the action space for any state. 
We set $\gamma=0.95$, $\alpha=0.2$. 

\textbf{Experiments in Section \ref{sec:6.2}:}

{Frozen Lake Problem.} We consider a $4\times 4$ Frozen Lake problem. We set $\alpha=0.1$, $\beta=0.5$ and $\gamma=0.9$. The initialization is $\theta=(1,1,1,1,1)\in \mathbb R^5$ and $\omega=(0,0,0,0,0)$. Each entry of every base function $\phi_s$ is generated uniformly at random between $(0,1)$.

{\textbf{Additional Experiments on the Taxi Problem.}}

% We next consider the taxi problem. %The behavior policy is set to be a uniform distribution over the action space. 
% After an action is taken, with probability $1-p$ the agent will arrive the next state following the action, and with probability $p$ the agent will be transit to a uniformly distributed random state.  
We use the same setting as in \Cref{sec:6.1} to demonstrate the robustness of our robust Q-learning algorithm. For the step size and discount factor, we set $\alpha=0.3$ and $\gamma=0.8$. The results are shown in \cref{Fig.TX}, from which the same observation that our robust Q-learning is robust to model uncertainty, and achieves a much higher reward when the mismatch between the training and test MDPs enlarges.
\begin{figure}[htb]
\centering 
\subfigure[ p=0.1, R=0.1]{
\label{Fig.tx1}
\includegraphics[width=0.3\linewidth]{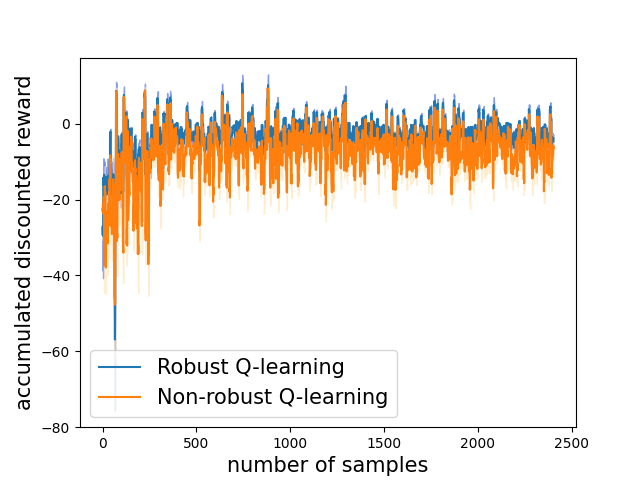}}
\subfigure[ p=0.05, R=0.2]{
\label{Fig.tx2}
\includegraphics[width=0.3\linewidth]{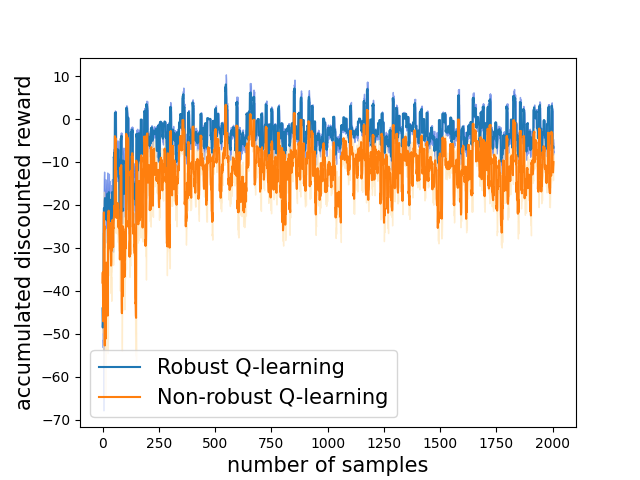}}
\subfigure[ p=0.1, R=0.2]{
\label{Fig.tx3}
\includegraphics[width=0.3\linewidth]{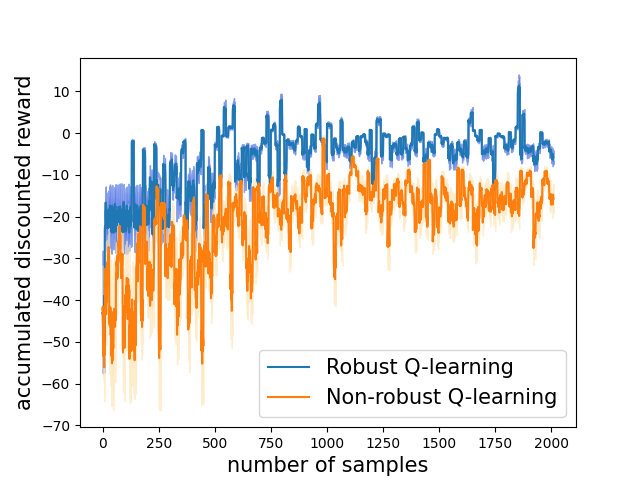}}
\captionsetup{font={normalsize}}
\caption{\textbf{Taxi-v3}: robust Q-learning v.s. non-robust Q-learning.}
\label{Fig.TX}
\end{figure}

\end{document}